\newcommand{\methodname}{R-GFM}
\documentclass{article}

\usepackage{microtype}
\usepackage{graphicx}
\usepackage{subcaption}
\usepackage{booktabs} 
\usepackage{multirow}
\usepackage{pifont}  
\newcommand{\cmark}{\ding{51}}
\usepackage{adjustbox}
\usepackage{xcolor}
\newcommand{\method}[1]{#1}
\usepackage{algorithmic}
\usepackage{bm}

\usepackage{hyperref}

\newcommand{\cstep}[1]{\textcircled{\scriptsize #1}}

\usepackage{verbatim}

\usepackage[preprint]{icml2026}
\usepackage{amsmath}
\usepackage{amssymb}
\usepackage{mathtools}
\usepackage{amsthm}
\usepackage{booktabs}
\usepackage{longtable}
\usepackage{array}
\usepackage[normalem]{ulem}

\usepackage[capitalize,noabbrev]{cleveref}

\theoremstyle{plain}
\newtheorem{theorem}{Theorem}[section]

\theoremstyle{definition}
\newtheorem{definition}[theorem]{Definition}
\newtheorem{assumption}[theorem]{Assumption}
\theoremstyle{remark}

\usepackage[textsize=tiny]{todonotes}

\icmltitlerunning{Learning Graph Foundation Models on Riemannian Graph-of-Graphs}

\begin{document}

\twocolumn[
  \icmltitle{Learning Graph Foundation Models on Riemannian Graph-of-Graphs}

  \icmlsetsymbol{equal}{*}

  \begin{icmlauthorlist}
    \icmlauthor{Haokun Liu}{equal,bme,ddl}
    \icmlauthor{Zezhong Ding}{equal,ai,ddl}
    \icmlauthor{Xike Xie}{bme,ddl}
  \end{icmlauthorlist}

  \icmlaffiliation{bme}{School of Biomedical Engineering, University of Science and Technology of China (USTC), Suzhou, Jiangsu, China}
  \icmlaffiliation{ddl}{Data Darkness Lab, Suzhou Institute for Advanced Research, USTC, Suzhou, Jiangsu, China}
  \icmlaffiliation{ai}{School of Artificial Intelligence and Data Science, USTC, Hefei, Anhui, China}

  \icmlcorrespondingauthor{Xike Xie}{xkxie@ustc.edu.cn}

  \icmlkeywords{Graph Foundation Models, Riemannian Geometry, Graph Neural Networks, Graph-of-Graphs, Machine Learning}

  \vskip 0.3in
]

\printAffiliationsAndNotice{\icmlEqualContribution}

\begin{abstract}
  Graph foundation models (GFMs), pretrained on massive graph data, have transformed graph machine learning by supporting general-purpose reasoning across diverse graph tasks and domains. 
  Existing GFMs pretrained with fixed-hop subgraph sampling impose a fixed receptive field, causing scale mismatch on diverse tasks, which often require heterogeneous and unknown structural contexts beyond a fixed sampling scale.
  We propose {\bf R-GFM}, a Riemannian Graph-of-Graphs (GoG) based foundation model, that treats {\it structural scale} as a first-class citizen in modeling. 
  R-GFM constructs a multi-scale GoG over-sampled subgraphs at different hop distances and learns geometry-adaptive representations from Riemannian manifolds. 
  Theoretical analysis shows that R-GFM reduces structural domain generalization error compared to fixed-scale GFMs.
  Experiments on various datasets demonstrate that R-GFM achieves state-of-the-art performance, with up to a \textbf{49\%} relative improvement on downstream tasks. Our code is available at \url{https://github.com/USTC-DataDarknessLab/R-GFM}.
\end{abstract}

\section{Introduction}
\begin{figure}[t]
    \centering
    \includegraphics[width=\linewidth]{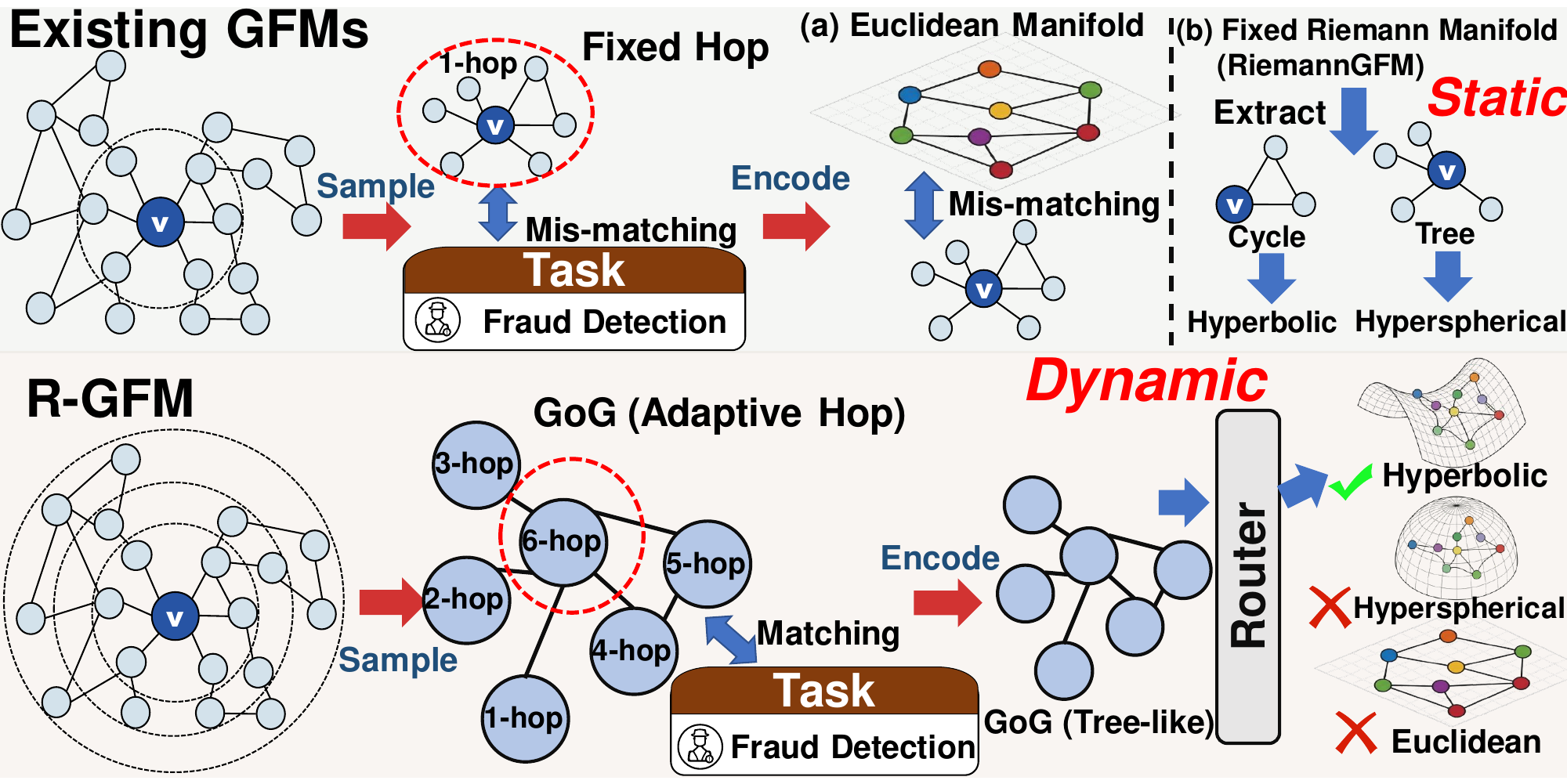}
    \vspace{-2mm}
    \caption{\textbf{Motivation of R-GFM.} Existing GFMs rely on fixed-hop sampling and static manifolds, causing representation mismatch. R-GFM constructs an adaptive-hop GoG and uses a router to dynamically choose the best-fit geometry for better matching.}
    \vspace{-3mm}
    \label{fig:motivation}
\end{figure}

Graph foundation models (GFMs)~\cite{wang2025gfm_survey, wang2025multi,yu2025samgpt,liu2024ofa,xia2024anygraph}, which are pretrained on massive graph datasets, have shown strong 
performance across diverse downstream applications, including molecular property prediction \cite{wu2018moleculenet}, quantum chemistry \cite{gilmer2017neural}, web-scale recommendation \cite{ying2018pinsage,he2020lightgcn},  knowledge graph completion \cite{schlichtkrull2018rgcn}, and spatio-temporal forecasting \cite{li2018dcrnn}. 

Despite this progress, existing GFMs adopt a common design choice: they rely on {\it fixed-hop subgraph sampling} during pretraining, where each node or subgraph is encoded using a predetermined neighborhood range (e.g., $1$-hop as shown in Figure~\ref{fig:motivation}). This design implicitly imposes a {\it fixed structural receptive field} on representation learning. As a result, the model is restricted to a static receptive field of structural context and fails to capture adaptive-hop structural information (i.e., diverse \textit{structural scales}).
However, in practice, different downstream tasks impose heterogeneous and unknown requirements on the hop range of subgraph sampling.

For instance, in node classification on highly homophilous citation networks~\cite{kipf2017gcn,cavallo2022ncs}, low-hop information (e.g., within 2 hops) is often sufficient, as local neighbors tend to share similar labels.
In contrast, for fraud detection in e-commerce~\cite{liu2021pcgnn}, higher-hop information (e.g., beyond 4 hops) is required to uncover complex collusion patterns, where fraudsters hide their traces through long transaction chains.

More generally, fine-grained structural scales preserve feature distinctiveness~\cite{00020GTW22}, but limit the scope of the receptive fields and are prone to the under-reaching problem~\cite{0002Y21}. 
In contrast, coarse-grained structural scales capture long-range dependencies~\cite{LukovnikovF21} and global structural information, but introduce irrelevant structural noise~\cite{DongK23} and suffer from the over-smoothing problem~\cite{Keriven22}. 

These observations reveal a fundamental {\it structural scale mismatch} in current GFMs: a fixed structural receptive field cannot simultaneously satisfy the heterogeneous structural requirements of diverse tasks, underscoring the necessity of designing a dynamic and adaptive, hop-aware representation learning paradigm.

In response, we propose R-GFM (as shown in Figure 1), a new GFM incorporating the adaptive-hop subgraphs of training nodes based on a dynamic Graph-of-Graphs (GoG) framework (Section~\ref{sec:method_gog}). 
In R-GFM, subgraphs sampled at different hops are treated as nodes in a higher-order GoG, enabling the model to explicitly represent and reason over structural relationships across multiple scales.
Different structural scales in the dynamically constructed GoGs often show different Riemannian geometric characteristics. 
Specifically, local-scale subgraphs tend to be dense and highly connected, while larger-scale subgraphs are typically sparse and hierarchical. 
Encoding such heterogeneous structures within a single Euclidean embedding space leads to geometric mismatch and representation distortion, which motivates a geometry-adaptive routing mechanism that assigns GoGs to appropriate Riemannian manifolds via a mixture-of-experts (MoE) framework.

However, routing GoGs to an appropriate Riemannian manifold poses non-trivial challenges.  
\textbf{First}, determining an appropriate number of Riemannian experts is non-trivial. Too few experts lead to underfitting, while too many experts result in overfitting and training instability.
\textbf{Second}, existing graph MoE models~\cite{xia2024anygraph} typically select a fixed top-$m$ set of experts for aggregation. An overly small $m$ leads to insufficient representation capacity, while an excessively large $m$ exacerbates generalization~\cite{abs-2403-17404}, thereby hindering model convergence.
In response, we propose a dynamic MoE-based Riemannian manifold routing strategy ({{Section~\ref{sec:method_moe}}}) 
that maintains a dynamic candidate set of Riemannian experts and activates the experts for each input GoG, capturing the Riemannian geometric information in GoGs and improving generalization.

These designs are supported by both theoretical analysis and empirical evaluation.
Theoretically, we prove that our {adaptive-hop GoG construction} reduces subgraph embedding error compared to fixed-hop. Moreover, we prove that our routing strategy yields a tighter excess-risk bound and a tighter cross-domain generalization error bound than using a fixed expert set.
Further experiments demonstrate that R-GFM shows strong domain generalization on diverse real-world graphs and downstream tasks ({Section~\ref{sec:experiments}).

Our main contributions are summarized as follows:
\begin{itemize}
    \item 
        We propose R-GFM, a novel GFM based on the Graph-of-Graphs mechanism, enabling scale-adaptive domain generalization across diverse graph domains.
    \item 
We introduce a memory-aware GoG construction strategy that effectively captures structural information across varying hops with theoretical guarantees.
    \item 
    We introduce an MoE-based Riemannian routing strategy that effectively captures Riemannian geometric information in GoGs with theoretical guarantees.
    \item 
    Experiments on \textbf{18} real-world graphs and various downstream tasks demonstrate consistent superiority over baselines, achieving up to a \textbf{49\%} improvement.
\end{itemize}

\section{Preliminaries}
\label{sec:prelim}
\subsection{Attributed Graph}
\label{sec:graph_notation}
We denote an attributed graph as $G=(V, E,\mathbf{X}_V)$ with
node set $V$, edge set $E$, and node features $\mathbf{X}_V$.

$\mathrm{dist}(u,v)$ denotes the shortest-path distance between nodes $u\in V$ and $v\in V$.
$\mathcal{N}_{k}(u) = \{v\in V:\mathrm{dist}(u,v) \leq k\}$ denotes the $k$-hop neighbors of node $u$.

\subsection{GFM Design Problem~\cite{wang2025multi}}
{Given a set of graphs $\{G_i\}$ (including training and testing graphs) from domains $\mathcal{D} = \{D_i\}$, our goal is to design a GFM that is pretrained only on the training graph domains $D_\mathrm{Train} \subset \mathcal{D}$. The objective is to maximize the downstream task performance (e.g., node classification) on the test graphs from the unseen domain $D_\mathrm{Test} \cap D_\mathrm{Train} = \emptyset$.}

\subsection{Graph-of-Graphs (GoG)~\cite{zhen2023gog}}
Given a graph set $\{G_i\}$, the GoG of $\{G_i\}$ is denoted as $\mathcal{G} = (\mathcal{V}, \mathcal{E}, \mathbf{X}_G)$, where $\mathcal{V} =  \{G_i\}$, $\mathcal{E}\footnote{In our design, $\mathcal{E}$ is a subset of $\mathcal{V} \times \mathcal{V}$, which is constructed by graph similarity-based sampling strategy, as shown in Equation~\ref{eq:sampleedge}.} \subseteq \mathcal{V} \times \mathcal{V}$, and $\mathbf{X}_G$ is the graph features of $\{G_i\}$.

\subsection{Riemannian Manifold~\cite{sun2025riemanngfm}}
A Riemannian manifold is a geometric space, which is a smooth manifold endowed with a Riemannian metric. The curvature $\kappa$ is the geometric quantity measuring the extent to which a surface deviates from being flat. Following the setting of \cite{sun2025riemanngfm}, we use the constant-curvature Riemannian manifolds, where curvature $\kappa$ is equal everywhere, including three types of manifolds: (1) Hyperbolic manifold with $\kappa <0$, (2) Euclidean manifold with $\kappa = 0$, and (3) Hyperspherical manifold with $\kappa > 0$.

\section{The Proposed GFM: R-GFM}
\label{sec:method}
In this section, we present \methodname{}, a GFM that combines adaptive-hop GoG construction with an MoE-based Riemannian routing to produce transferable node representations.

\begin{figure*}[t]
    \centering
    \includegraphics[width=\textwidth]{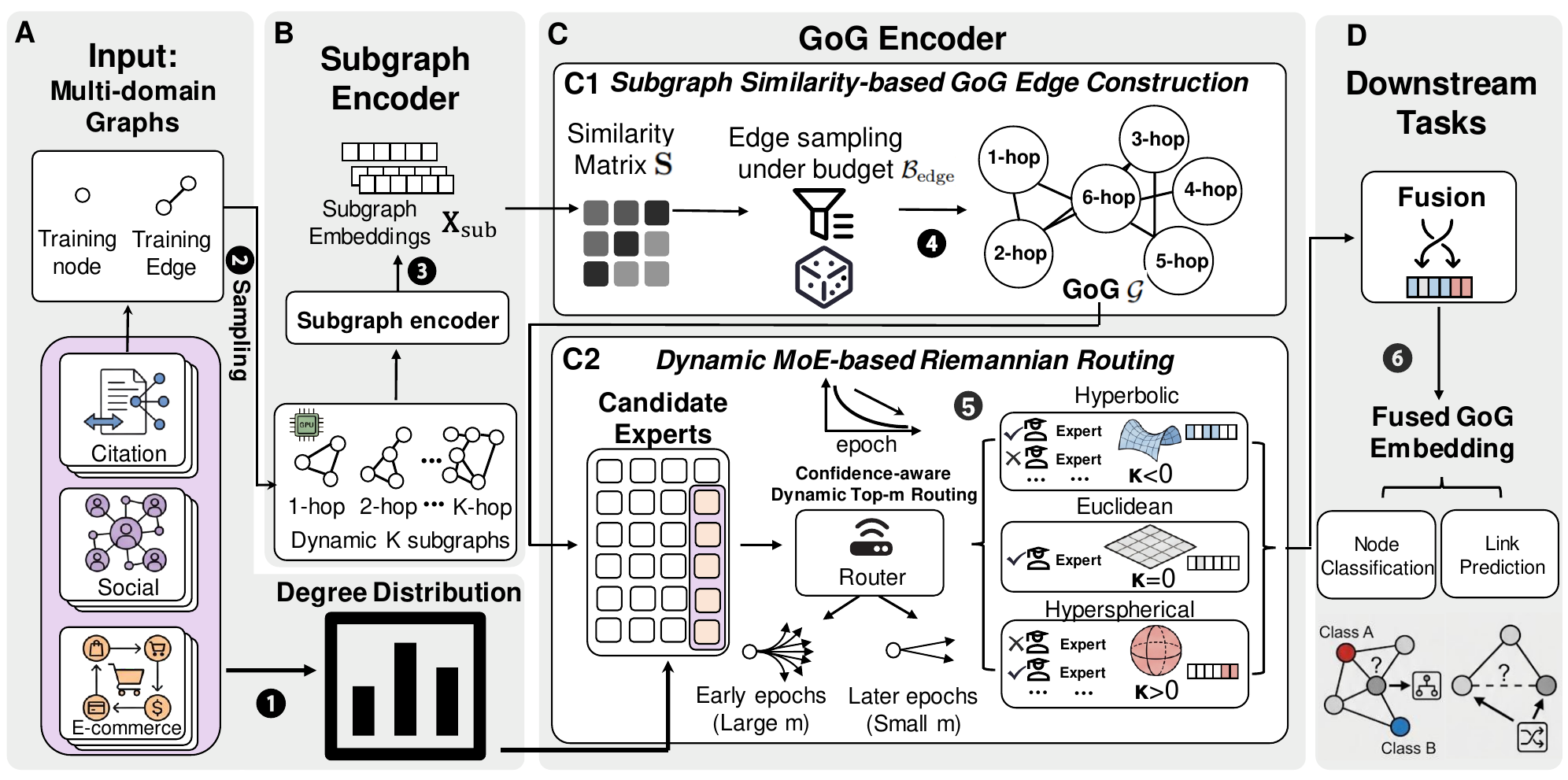}
    \caption{\textbf{Overview of \methodname{}}.
    \cstep{1} Calculate the coefficient of variation (CV) to quantify the node degree distribution, which is used to determine the candidate Riemannian expert set.
    \cstep{2} Sample adaptive-hop subgraphs for each training node $v$ (or an edge $(u,v)$ in link prediction task~\cite{zhang2018link})  with a adaptive-hop strategy (Section~\ref{method:autok_strategy}).
     \cstep{3} Encode sampled subgraphs $\{G_v^{(i)}\}_{i=1}^{K}$ to obtain subgraph embeddings ($\mathbf{X}_{\mathrm{sub}}$), and pretrain this subgraph encoder via contrastive learning using NT-Xent loss~\cite{ChenK0H20}.
     \cstep{4} Construct a  GoG $\mathcal{G}$ by sampling edges from the subgraph similarity matrix $\mathbf{S}$ under the edge number budget $\mathcal{B}_{\mathrm{edge}}$ (Section~\ref{sec:method_gog}).
     \cstep{5} Encode $\mathcal{G}$ with the dynamic MoE-based Riemannian routing encoder and perform dynamic top-$m$ expert routing  (Section~\ref{sec:method_moe}).
     \cstep{6} generate the fused GoG embedding, which is used for downstream tasks, such as node classification or link prediction.
    }
    \label{fig:rgfm}

\end{figure*}

\subsection{Overview of \methodname{}}
\label{sec:overviewRGFM}
In Figure~\ref{fig:rgfm}, the R-GFM framework consists of four stages. \textbf{Stage A (Steps~\cstep{1}-\cstep{2})} determines the candidate Riemannian expert set and samples subgraphs of each training node with hop distances ranging from $1$ to $K$. 
\textbf{Stage B (Step~\cstep{3})} encodes the sampled subgraphs to obtain their embeddings.
\textbf{Stage C (Steps~\cstep{4}-\cstep{5})} constructs a GoG based on subgraph embeddings and applies MoE-based Riemannian routing to encode the GoG by using the active Riemannian experts, generating GoG embeddings. 
\textbf{Stage D (Step~\cstep{6})} aggregates the GoG embeddings into a single fused embedding, which is used for downstream tasks. 
In the sequel, we present two core strategies in {Steps~\cstep{4}}-\cstep{5}, 
including adaptive-hop GoG Construction and MoE-based Riemannian routing in Sections~\ref{sec:method_gog} and \ref{sec:method_moe}, respectively.

\subsection{Adaptive-Hop GoG
Construction}
\label{sec:method_gog}
We now describe how to construct a GoG for each training node to capture different structural scales. 
The construction proceeds as follows:
(1) determining the GoG node set via multi-hop subgraph sampling, and (2) establishing GoG edges based on subgraph similarity.

\paragraph{Memory-aware GoG Node Determination.}
\label{method:autok_strategy}
To capture both local and global information, we sample subgraphs ($\{G_{v}^{(i)}\}_{i=1}^{K}$) centered at node $v$ from hop distances $1$ to $K$, where $K$ is a dynamic parameter that controls the receptive field size and determines the number of GoG nodes.  
Larger $K$ enables richer structural modeling but incurs higher GPU memory cost.
Therefore, our objective is to maximize $K$ under an explicit GPU memory budget $\mathcal{B}_{\mathrm{GPU}}$. 

To address the problem, we design an online greedy strategy. 
We progressively enlarge $K$ and test whether the resulting configuration fits within $\mathcal{B_{\mathrm{GPU}}}$. 
Once the feasibility test fails (e.g., out-of-memory), we roll back and return the largest feasible $K$. 

\paragraph{Subgraph Similarity-based GoG Edge Construction.}

After determining the GoG node set, we construct GoG edges based on subgraph similarity.
We define subgraph similarity as follows.

\begin{definition}[\textbf{Subgraph Similarity}]
\label{def:subgraph_similarity}
    The {\it similarity} between two subgraphs is defined as the cosine similarity of their embeddings. The {\it subgraph similarity matrix} ($\mathbf{S}$) includes all pairwise subgraph similarities as:
\begin{equation}
\mathbf{S}={\mathbf{X}}_{\mathrm{sub}}\big({\mathbf{X}}_{\mathrm{sub}}\big)^{\top}\in\mathbb{R}^{K\times K},
\end{equation}
where $\mathbf{X}_\mathrm{sub}$ is the subgraph embedding matrix.
\end{definition}

Rather than densely connecting all subgraphs, we construct a sparse GoG by sampling edges according to subgraph similarity.
Specifically, we normalize the similarity scores into a sampling distribution:

\begin{equation}
\mathrm{Prob}(i,j)=\frac{{\mathrm{e}}^{\mathbf{S}[i,j]}}{\sum_u\sum_ve^{\mathbf{S}[u,v]}},
\label{eq:gog_global_sampling}
\end{equation}
where $\mathrm{Prob}(i, j)$ is {the probability of sampling edge ($i,j$)}, $\mathrm{e}$ is Euler's number.

We then sample $\mathcal{B}_{\mathrm{edge}}$ edges without replacement\footnote{We set $\mathcal{B}_\mathrm{edge}=0.6 \times\frac{K(K-1)}{2}$, as discussed in Section~\ref{sec:hparam}. 

}, as shown in Equation~\ref{eq:sampleedge}.
\begin{equation}
\label{eq:sampleedge}
\mathcal{E} \sim \mathrm{Sample}\big(\mathrm{Prob};\, \mathcal{B}_\mathrm{edge}\big), \mathrm{subject\ to} \ |\mathcal{E}| = \mathcal{B}_\mathrm{edge} 
\end{equation}
where $\mathcal{E}$ is the GoG edge set.

Finally, we symmetrize the sampled edges to enable bidirectional message passing, which empirically improves structural pattern identification and representation quality~\cite{egressy2024directedmultigraphs}.

{\paragraph{Theoretical Analysis.}
We provide theoretical justification for our GoG construction strategy from two perspectives: (1) why sampling subgraphs from varied hops is preferable to using a fixed hop (Theorem~\ref{thm:theorem1}),
and (2) why constructing GoG edges improves subgraph representation quality. 
(Theorem~\ref{thm:gog_edges_help_simple_1}).

To begin, we first introduce the concept of {\it embedding noise} and the {\it sampled subgraph embedding} to analyze the impact of embedding noise on subgraph representations
\cite{li2024noise}. Typically, the subgraph embedding ($\mathbf{x}$) follows a Gaussian distribution ($\mathcal{N}$) with noise level $\sigma$, where smaller $\sigma$ corresponds to higher representation quality~\cite{li2024noise}), as shown in Equation~\ref{label:noisedefintion}.
\begin{equation}
\label{label:noisedefintion}
\mathbf{x}\sim \mathcal{N}(\boldsymbol{\mu}, \boldsymbol{\sigma}^2),
\end{equation}
where $\boldsymbol{\mu}$ is {noise-free subgraph embedding}.

Let $\boldsymbol{\sigma}_\mathrm{F}$ denote the embedding noise induced by sampling from a fixed hop, and let $\boldsymbol{\sigma}_\mathrm{V}$ denote the embedding noise induced by sampling across multiple hops under our strategy.
Theorem~\ref{thm:theorem1} follows from these definitions.

\begin{theorem}
\label{thm:theorem1}
The embedding noise of sampling from various hops is lower than that of sampling from a fixed hop: $\|\boldsymbol{\sigma}_\mathrm{V}\|_2 \le \|\boldsymbol{\sigma}_\mathrm{F}\|_2$, where $\|\cdot\|$ denotes the $\ell_2$ norm.
\end{theorem}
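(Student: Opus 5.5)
The plan is to make the informal statement precise with a simple noise model and then reduce it to an averaging argument. For each hop $k\in\{1,\dots,K\}$, I would model the sampled subgraph embedding as $\mathbf{x}^{(k)}=\boldsymbol{\mu}+\boldsymbol{\epsilon}^{(k)}$, where $\boldsymbol{\epsilon}^{(k)}\sim\mathcal{N}(\mathbf{0},\boldsymbol{\sigma}_k^2)$ is the noise of Equation~\ref{label:noisedefintion} at scale $k$. The shared mean $\boldsymbol{\mu}$ is the noise-free embedding of the target node. The fixed-hop strategy uses a single predetermined hop $k_0$, so $\boldsymbol{\sigma}_\mathrm{F}=\boldsymbol{\sigma}_{k_0}$. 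For the multi-hop strategy, I would take the fused representation to be a convex combination $\mathbf{x}_\mathrm{V}=\sum_{k}w_k\mathbf{x}^{(k)}$ with $w_k\ge 0$ and $\sum_k w_k=1$. This matches Stage D, where the GoG embeddings are aggregated into a single fused embedding, and since the weights sum to one it preserves the mean $\boldsymbol{\mu}$.

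The first step is to compute the noise of $\mathbf{x}_\mathrm{V}$. For each coordinate $d$, the variance is $\sigma_{\mathrm{V},d}^2=\mathrm{Var}\big(\sum_k w_k\epsilon^{(k)}_d\big)=\sum_{k,l}w_kw_l\,\mathrm{Cov}(\epsilon_d^{(k)},\epsilon_d^{(l)})$. By Cauchy--Schwarz, $|\mathrm{Cov}|\le\sigma_{k,d}\sigma_{l,d}$, so $\sigma_{\mathrm{V},d}\le\sum_k w_k\sigma_{k,d}$. This bound needs no independence assumption across hops, which is important here because nested $k$-hop neighborhoods are clearly correlated.

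The second step is to take the $\ell_2$ norm over coordinates. Applying the triangle inequality to the vector bound gives $\|\boldsymbol{\sigma}_\mathrm{V}\|_2\le\sum_k w_k\|\boldsymbol{\sigma}_k\|_2$. The multi-hop scheme has $k_0$ among its candidate hops, so the degenerate choice $w_{k_0}=1$ recovers the fixed-hop case exactly. If the weights are optimized, or equivalently if the model is allowed to concentrate its weight on the least noisy scale, then $\|\boldsymbol{\sigma}_\mathrm{V}\|_2\le\min_k\|\boldsymbol{\sigma}_k\|_2\le\|\boldsymbol{\sigma}_{k_0}\|_2=\|\boldsymbol{\sigma}_\mathrm{F}\|_2$. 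Alternatively, with uniform weights and independent hop noises, the variance bound sharpens to $\sum_k\sigma_{k,d}^2/K^2$. That yields strict improvement whenever the per-hop noise levels are comparable, for example if $\max_k\|\boldsymbol{\sigma}_k\|_2\le\sqrt{K}\,\|\boldsymbol{\sigma}_\mathrm{F}\|_2$.

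The main obstacle is modeling rather than calculation. The statement itself does not say how $\boldsymbol{\sigma}_\mathrm{V}$ is formed from the per-hop samples, nor whether different hops share the same $\boldsymbol{\mu}$. I would therefore state explicitly the two assumptions the proof rests on. The first is that there is a common target $\boldsymbol{\mu}$ across hops, or else that the bias is absorbed into the noise term. The second is that the aggregation can represent the fixed-hop choice, so that the fixed-hop model is contained in the adaptive family. Under these assumptions the result is essentially ``minimum over a superset,'' combined with the Cauchy--Schwarz bound on correlated Gaussian noise. I would present the uniform-averaging independent case as a remark showing when the inequality is strict.
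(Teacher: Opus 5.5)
Your proposal is correct and follows the paper's proof: the paper likewise takes $\boldsymbol{\sigma}_\mathrm{V}$ to be the noise of a convex hop fusion whose weights $\mathbf{w}^\star$ minimize $\|\boldsymbol{\sigma}_v(\mathbf{w})\|_2$ over the simplex. It then concludes directly from the fact that the one-hot vector $\mathbf{e}_{k_{\mathrm{fixed}}}$ is feasible. Your Cauchy--Schwarz/triangle-inequality bound and your shared-mean assumption are harmless but not needed: the paper allows hop-dependent $\boldsymbol{\mu}_v^{(k)}$, since only the noise enters $\boldsymbol{\sigma}$, and it computes $\|\boldsymbol{\sigma}_v(\mathbf{w})\|_2^2=\sum_{k,\ell}w_kw_\ell\,\mathrm{tr}(\boldsymbol{\Sigma}_{v,k\ell})$ exactly.
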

The proof of Theorem~\ref{thm:theorem1} is in Appendix~\ref{app:proof_noise_analysis}.

Theorem~\ref{thm:theorem1} implies that sampling subgraphs across multiple hops yields lower embedding noise than fixed-hop sampling, thereby validating the effectiveness of our sampling strategy.

We next analyze the effectiveness of our GoG edge construction strategy by comparing it with alternative GoG edge designs under an identical set of GoG nodes.
Specifically, we show that our similarity-based sparse construction achieves strictly lower expected embedding error than two canonical constructions: a GoG without edges and a fully connected GoG.
These constructions correspond to the extreme cases of no cross-scale interaction and indiscriminate interaction, respectively, thereby demonstrating the advantage of selective structural connectivity as shown in Theorem~\ref{thm:gog_edges_help_simple_1}.

\begin{theorem}[\bf Effectiveness of GoG Edge Construction]
\label{thm:gog_edges_help_simple_1}
Consider GoG constructions that share the same set of GoG nodes. Let $e_\mathrm{none}$, $e_\mathrm{full}$, and $e_\mathrm{ours}$ denote the squared embedding error obtained by (i) a GoG without edges, (ii) a full connected GoG, and (iii) our similarity score-based GoG, respectively. Then, $e_\mathrm{ours} < e_\mathrm{none}$ and $e_\mathrm{ours} < e_\mathrm{full}$.
\end{theorem}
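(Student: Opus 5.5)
We work in the Gaussian noise model of Equation~\ref{label:noisedefintion}. Each GoG node $i\in\{1,\dots,K\}$ carries an observed embedding $\mathbf{x}_i=\boldsymbol{\mu}_i+\boldsymbol{\epsilon}_i$, where the $\boldsymbol{\epsilon}_i$ are independent with $\mathbb{E}\|\boldsymbol{\epsilon}_i\|^2=\sigma^2$. One round of GoG message passing is modeled as a row-stochastic mean aggregation over the closed neighborhood:
\[
\hat{\mathbf{x}}_i=\sum_{j\in\mathcal{N}(i)\cup\{i\}} w_{ij}\mathbf{x}_j,\qquad w_{ij}=\frac{1}{|\mathcal{N}(i)|+1}.
\]
The squared error of node $i$ is $e=\mathbb{E}\|\hat{\mathbf{x}}_i-\boldsymbol{\mu}_i\|^2$. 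It splits into a bias term and a variance term,
\[
e=\Bigl\|\sum_j w_{ij}(\boldsymbol{\mu}_j-\boldsymbol{\mu}_i)\Bigr\|^2+\sigma^2\sum_j w_{ij}^2 .
\]
I would then evaluate the three constructions in this decomposition.

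\textbf{Step 1 (no edges).} Here $w_{ii}=1$, so the bias is $0$ and $e_\mathrm{none}=\sigma^2$.

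\textbf{Step 2 (full GoG).} Here $w_{ij}=1/K$, so the variance is $\sigma^2/K$. The bias is $\|\bar{\boldsymbol{\mu}}-\boldsymbol{\mu}_i\|^2=:b_\mathrm{full}$, which collects the discrepancies of all scales, including dissimilar ones.

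\textbf{Step 3 (our GoG).} By Equation~\ref{eq:gog_global_sampling}, edge $(i,j)$ is sampled with probability increasing in $\mathbf{S}[i,j]$. Under the noise model, higher similarity corresponds to smaller $\|\boldsymbol{\mu}_i-\boldsymbol{\mu}_j\|$. I would make this precise with a modeling assumption: the expected discrepancy of sampled neighbors is at most the average discrepancy over all nodes. A rearrangement (Chebyshev sum) inequality, applied to the monotone weights $e^{\mathbf{S}[i,j]}$ against the decreasing function $\|\boldsymbol{\mu}_i-\boldsymbol{\mu}_j\|$, should then give $b_\mathrm{ours}<b_\mathrm{full}$. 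For the variance, with $d=|\mathcal{N}(i)|\approx 0.6(K-1)\ge 1$, we get $\sigma^2/(d+1)<\sigma^2$.

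\textbf{Step 4 (comparisons).} We now have $e_\mathrm{ours}=b_\mathrm{ours}+\sigma^2/(d+1)$.
\begin{itemize}
\item Against the empty GoG, we need $b_\mathrm{ours}<\sigma^2 d/(d+1)$. This is a low-bias regime condition: similarity-selected neighbors differ from $\boldsymbol{\mu}_i$ by less than the noise level. I would state it as an assumption, or derive it from the similarity threshold implied by sampling.
\item Against the full GoG, we need $b_\mathrm{full}-b_\mathrm{ours}>\sigma^2\bigl(\tfrac{1}{d+1}-\tfrac1K\bigr)$. The bias gained by excluding dissimilar scales must exceed the variance lost by averaging over fewer neighbors. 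This holds when some scales are genuinely dissimilar, i.e.\ a separation margin $\Delta$ in $\|\boldsymbol{\mu}_i-\boldsymbol{\mu}_j\|$ between high- and low-similarity pairs. Because $d$ is a constant fraction of $K$, the right-hand side is $O(\sigma^2/K)$.
\end{itemize}

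\textbf{Main obstacle.} The hard part is turning the stochastic, softmax-based edge sampling into a strict bias reduction. The sampling probabilities depend on noisy similarities, so the noise and the selected neighborhoods are correlated. I would first condition on the sampled edge set and treat the noise in $\mathbf{S}$ as negligible or independent. Then I would bound the sampling-weighted mean discrepancy strictly below the uniform mean using the rearrangement inequality together with the margin $\Delta$. Strictness in both comparisons will require explicitly stated regularity conditions, namely $\sigma$ and $\Delta$ large enough relative to $\sigma^2/K$. I would state these conditions openly rather than claim the result unconditionally.
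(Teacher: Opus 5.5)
Your skeleton matches the paper's: an exact bias--variance split, a monotonicity assumption tying similarity to discrepancy, a rearrangement argument for the leakage term, and openly stated trade-off conditions for the two strict comparisons. Your Step~4 conditions play the role of the paper's Assumption~\ref{assump:tradeoff_window_v3}. The gap is in Step~3: the rearrangement inequality does not yield $b_\mathrm{ours}<b_\mathrm{full}$ in your model, for two reasons.

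\emph{Vector cancellation.} Your bias is the squared norm of a \emph{vector} average, $\|\sum_j w_{ij}(\boldsymbol{\mu}_j-\boldsymbol{\mu}_i)\|^2$. Chebyshev or rearrangement only controls a weighted average of the \emph{scalars} $\|\boldsymbol{\mu}_j-\boldsymbol{\mu}_i\|$. The uniform average can benefit from cancellation that a selective one loses. Take $K=3$ with $\boldsymbol{\mu}_a-\boldsymbol{\mu}_i=0.9\mathbf{v}$ and $\boldsymbol{\mu}_b-\boldsymbol{\mu}_i=-\mathbf{v}$. Keeping only the more similar neighbor $a$ (your weights $1/2,1/2$) gives bias $0.2025\|\mathbf{v}\|^2$. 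The full GoG gives $(0.1/3)^2\|\mathbf{v}\|^2$. A separation margin $\Delta$ on the norms does not rule this out.

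\emph{Realized versus expected weights.} The monotone weights $e^{\mathbf{S}[i,j]}$ are sampling probabilities, but your estimator averages uniformly over a \emph{realized} random neighbor set. If you condition on the edge set, as you propose for the noise, a particular realization need not favor similar scales. If you average over the sampling instead, then $\mathbb{E}\|Y\|^2=\|\mathbb{E}Y\|^2+\mathbb{E}\|Y-\mathbb{E}Y\|^2$. Bounding the expected weights controls only the first term, so Jensen goes the wrong way. In addition, under the global sampling of Equation~\ref{eq:gog_global_sampling}, the degree $|\mathcal{N}(i)|$ is random and can be $0$. So $d\approx 0.6(K-1)$ is not guaranteed per node.

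The paper closes both holes with two modeling choices you would need to adopt.
\begin{itemize}
\item It analyzes deterministic row-softmax mixing, $\widehat{\mathbf{P}}^{(\mathrm{ours})}_c[k,t]\propto\exp(\beta s_c(k,t))$ over $t\neq k$, with a residual self-weight $\alpha$, rather than the sampled sparse graph. Neither the budget $\mathcal{B}_{\mathrm{edge}}$ nor selection variance enters.
\item It imposes a one-factor mismatch: $\boldsymbol{\mu}_{c,t}-\boldsymbol{\mu}_{c,k}=\delta_{c,k}(t)\,\mathbf{u}_{c,k}$ for a common unit vector $\mathbf{u}_{c,k}$, with $\delta_{c,k}(t)\ge 0$ nonincreasing in similarity.
\end{itemize}
Under these choices the bias collapses exactly to $(1-\alpha)^2\bigl(\sum_{t\neq k}\widehat{\mathbf{P}}^{(\mathrm{ours})}_c[k,t]\,\delta_{c,k}(t)\bigr)^2$. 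Abel summation then gives the exact gap $\bar\delta^{\mathrm{full}}_{c,k}-\bar\delta^{\mathrm{ours}}_{c,k}=A_{c,k}\ge 0$, because sorted decreasing softmax weights have prefix sums at least $j/(K-1)$. Nonnegativity of the $\delta$'s then yields $(\bar\delta^{\mathrm{full}}_{c,k})^2-(\bar\delta^{\mathrm{ours}}_{c,k})^2\ge A_{c,k}^2$. This is precisely the quantity in which the full-GoG trade-off condition is stated. With these two changes, your Step~4 comparisons go through as you planned.
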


The proof of Theorem~\ref{thm:gog_edges_help_simple_1} is in Appendix~\ref{app:proof_gog_edges_help_simple_1}.

\subsection{Dynamic MoE-based Riemannian Routing}
\label{sec:method_moe}
Given a constructed GoG $\mathcal{G}=(\mathcal{V},\mathcal{E},\mathbf{X}_\mathrm{G})$,
we learn its representation using a dynamic MoE-based Riemannian routing strategy.
Our routing is dynamic in two aspects: (1) the candidate curvature set
$\mathcal{K}$ is dynamically selected to determine the candidate Riemmanian expert set $M$, based on the node degree distributions of given datasets; and
(2) the number of activated experts is adjusted dynamically via confidence-aware Top-$m$ routing during training.

\paragraph{Dynamical Candidate Expert Set Determination.}
Motivated by prior work showing that heterogeneous graph structures benefit from diverse geometric inductive biases~\cite{gu2019learn},
we adopt a structure-aware strategy based on node degree distributions~\cite{hu2017hetero}.
We quantify structural heterogeneity using the coefficient of variation (CV):
\begin{equation}
\mathrm{CV}(\mathcal{D}_i)\;=\;\frac{\mathrm{std}(\deg({\mathcal{D}_i}))}{\mathrm{mean}(\deg({\mathcal{D}_i}))},
\label{eq:deg_cv}
\end{equation}
where $\deg({\mathcal{D}_i})$ is the node degrees of graph dataset ${\mathcal{D}_i}$.

Then, we define the score $\mathcal{S}_i$ to quantify the structural heterogeneity of datasets $\{\mathcal{D}_1, \mathcal{D}_2, \cdots, \mathcal{D}_i\}$ as:
\begin{equation}
\label{eq:scorec}
\mathcal{S}_i = \mathrm{normalize}(\mu_i+\sigma_i),
\end{equation}
where $\mu_i$ is the mean of $\{\mathrm{CV}(\mathcal{D}_1), \cdots, \mathrm{CV}(\mathcal{D}_{i})\}$ and $\sigma_i$ is the standard variance of that. Here, {$\mu_i$ captures the average structural heterogeneity across datasets, while $\sigma_i$ measures its variability. 
We apply normalization to ensure scale consistency.
}

Then, based on the score, we design a candidate expert set determination strategy as shown below.
Initially, we start with a single expert and predefine a Riemannian curvature list, where each $\kappa_i$ corresponds to the curvature of the $i$-th candidate expert.

{As datasets $\{\mathcal{D}_1, \mathcal{D}_2, \cdots\}$ are introduced sequentially, we update the heterogeneity score $\mathcal{S}_i$ upon receiving each new dataset according to Equation~\ref{eq:scorec}. 
Before training on $\{\mathcal{D}_1, \mathcal{D}_2, \cdots, \mathcal{D}_i \}$, we compute $\mathcal{S}_i$ and determine the candidate set size as $\left\lceil \mathcal{S}_i \cdot \zeta \right\rceil$.
After that, we generate the set of curvature values by starting at $0$ and alternating between decreasing and increasing integer values (e.g., $0,-1,+1,-2,+2,\ldots$) until the set reaches size $\left\lceil \mathcal{S}_i \cdot \zeta \right\rceil$. Then, we use this curvature set to determine the candidate Riemannian experts. Each curvature defines a specific Riemannian manifold, which is associated with one expert.}

\paragraph{Confidence-aware Dynamic Top-$m$ Routing.}
Standard MoE models select a fixed Top-$m$ experts for routing~\cite{shazeer2017outrageously}.
In contrast, we dynamically adjust $m$ based on routing confidence.

To begin, we define the routing score of a $\mathcal{G}$ in Equation~\ref{eq:router_softmax}.  
\begin{equation}
\boldsymbol{\alpha}_\mathcal{G}=\mathrm{softmax}\!\left(g(\mathcal{G})/\tau\right),
\label{eq:router_softmax}
\end{equation}
where $\tau$ is a temperature. We use GCN~\cite{kipf2017gcn} as the GNN encoder $g(\cdot)$, following the setting of~\cite{guo2025graphmore}. $\boldsymbol{\alpha}_\mathcal{G} \in \mathbb{R}^{\psi\times1}$, where $\psi$ is the number of experts. 

Then we define a global confidence score in Equation~\ref{eq:confidence}.
\begin{equation}
\mathrm{conf}=\frac{1}{\psi}\sum_{1 \leq i \leq \psi}\max{\alpha^{(i)}_\mathcal{G}},
\label{eq:confidence}
\end{equation}
where $\alpha^{(i)}_\mathcal{G}$ is the routing score of expert $i$.

Since the router confidence typically grows during training, the router becomes more confident (i.e., producing more peaked distributions). This implies that fewer experts are sufficient to capture GoG structure information.

Motivated by this~\cite{huang2024hardneedmore}, we design the workflow of updating $m$ as follows. Initially, $m$ is set to an initial value $m_0$. 
Then, we dynamically adjust the active expert number $m$ at the epoch level by decreasing it stepwise:
$m \leftarrow \max\!\big(1,\, m-\mathrm{conf}\big)$.

\paragraph{Theoretical Analysis.}
Here, we theoretically analyze our strategies from these two perspectives:
(1) why a dynamic candidate expert set is needed for different input datasets (Theorem~\ref{thm:dynamic_expert}) and
(2) why our method theoretically outperforms MDGFM~\cite{wang2025multi}, which is the state-of-the-art GFM with theoretical analysis~(Theorem~\ref{thm:main_tight}).

\begin{theorem}[\textbf{Excess-risk Upper Bound Analysis}]
\label{thm:dynamic_expert}
Considering $N$ training datasets $\{\mathcal{D}_1, \mathcal{D}_2, \cdots, \mathcal{D}_N \}$, let the excess risk~\cite{koltchinskii2010rademacher} upper bound of selected expert number with  $j$ experts as
$\mathcal{R}(j)$:
\begin{equation}
\label{eq:UB_def}
\mathcal{R}(j) = \frac{A\,\mathcal{S}_{N}}{j}+B\sqrt{\frac{j}{n_N}},
\end{equation}
where $A,B >0$ are constants, $n_N$ is the number of constructed GoGs from $\mathcal{D}_1$ to $\mathcal{D}_N$, and $S_N$ is the score defined in Equation~\ref{eq:scorec}.

Let $\psi_F$ be the fixed candidate expert number, and let
$\psi_D$ be our dynamically selected expert number.
Then:
\begin{equation}
\label{eq:UD_UF}
\mathcal{R}(\psi_D) \le \mathcal{R}(\psi_F).
\end{equation}
Equality holds if and only if the fixed number of experts is identical to the number determined by our dynamic strategy.

\end{theorem}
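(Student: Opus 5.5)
The plan is to treat $\mathcal{R}(j)$ as a function of a continuous variable $j>0$ and show that our dynamic rule $\psi_D=\lceil \mathcal{S}_N\cdot\zeta\rceil$ picks (the integer realization of) its minimizer, for a suitable choice of $\zeta$. The statement then reduces to the fact that any other integer choice $\psi_F$ cannot do better.

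\textbf{Step 1: locate the minimizer.} Differentiate:
\[
\mathcal{R}'(j)=-\frac{A\mathcal{S}_N}{j^2}+\frac{B}{2\sqrt{j\,n_N}} .
\]
Setting this to zero gives
\[
j^\star=\Big(\frac{2A\mathcal{S}_N\sqrt{n_N}}{B}\Big)^{2/3}.
\]
We also have
\[
\mathcal{R}''(j)=\frac{2A\mathcal{S}_N}{j^3}-\frac{B}{4\sqrt{n_N}}\,j^{-3/2}.
\]
Rather than check convexity on all of $(0,\infty)$, we use that $\mathcal{R}'<0$ on $(0,j^\star)$ and $\mathcal{R}'>0$ on $(j^\star,\infty)$. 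Hence $\mathcal{R}$ is strictly decreasing then strictly increasing, i.e.\ strictly unimodal with unique minimizer $j^\star$.

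\textbf{Step 2: connect to $\psi_D$.} The dynamic rule is $\psi_D=\lceil\mathcal{S}_N\zeta\rceil$. We interpret $\zeta$ as the calibration constant chosen so that $\psi_D$ coincides with the integer minimizer of $\mathcal{R}$ over $\{1,2,\dots\}$. Given the dependence of $j^\star$ on $\mathcal{S}_N$, this amounts to taking $\zeta$ dataset-scale dependent, absorbing $A,B,n_N$. The integer minimizer exists and is one of $\lfloor j^\star\rfloor$ or $\lceil j^\star\rceil$ by unimodality. This identification is the step I expect to be the main obstacle. The literal rule is linear in $\mathcal{S}_N$, while $j^\star\propto \mathcal{S}_N^{2/3}$, so the claim only holds under the modeling assumption that $\zeta$ is tuned so that $\lceil\mathcal{S}_N\zeta\rceil=\arg\min_{j\in\mathbb{N}}\mathcal{R}(j)$. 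I would state this explicitly as the defining property of the dynamic choice. A fallback is to note that $\mathcal{S}_N$ is normalized to a bounded range, where a linear rule can match the $2/3$-power on the relevant grid of integers.

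\textbf{Step 3: conclude.} With $\psi_D=\arg\min_{j\in\mathbb{N}}\mathcal{R}(j)$, we immediately get $\mathcal{R}(\psi_D)\le\mathcal{R}(\psi_F)$ for any fixed integer $\psi_F$. For the equality case, strict unimodality shows $\mathcal{R}(j)>\mathcal{R}(\psi_D)$ for all integers $j$ on the same side of $j^\star$ as, but farther from, $\psi_D$. The only possible tie is between $\lfloor j^\star\rfloor$ and $\lceil j^\star\rceil$. This is a measure-zero coincidence in $(A,B,\mathcal{S}_N,n_N)$, which we exclude by a genericity assumption or by adopting the tie-breaking convention that $\psi_D$ is the unique minimizer. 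Under that convention, equality holds iff $\psi_F=\psi_D$.
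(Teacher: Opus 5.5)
Your proposal is correct and follows essentially the same route as the paper. Both posit, as a defining assumption, that the dynamic choice $\psi_D$ is the minimizer of $\mathcal{R}$ over the admissible integers, so $\mathcal{R}(\psi_D)\le\mathcal{R}(\psi_F)$ is immediate, and the equality case rests on a uniqueness or deterministic tie-breaking convention. The differences are peripheral: the paper additionally uses its Steps 1--4 to derive $\mathcal{R}(j)$ as an excess-risk bound via an estimation--approximation decomposition, which the statement takes as given. Your unimodality computation of $j^\star=(2A\mathcal{S}_N\sqrt{n_N}/B)^{2/3}$ is an extra the paper lacks; it pinpoints the only possible tie ($\lfloor j^\star\rfloor$ versus $\lceil j^\star\rceil$) and honestly exposes that the literal rule $\lceil\mathcal{S}_N\zeta\rceil$ does not by itself realize this minimizer, a point the paper sidesteps by assumption.
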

Typically, existing methods~\cite{guo2025graphmore} employ a manually specified number of experts, which requires extensive trial-and-error tuning and fails to adapt to various graph datasets (i.e., the equality in Equation~\ref{eq:UD_UF} does not hold in general).
The proof of Theorem~\ref{thm:dynamic_expert} is in Appendix~\ref{app:proof_dynamic_expert}.

{\begin{theorem}[\textbf{Domain Generalization Error Bound Analysis}]
\label{thm:main_tight}
Let $\Phi_{\mathrm M}$ denote the encoder class induced by MDGFM~\cite{wang2025multi}, and let $\Phi_{\mathrm R}$ denote the encoder class induced by \methodname{}.
Let the best achievable target-domain surrogate upper bounds of MDGFM and \methodname{} be $\epsilon_{\mathrm{MDGFM}}, \epsilon_{\mathrm{R\text{-}GFM}}$. 
Under the assumptions detailed in Appendix~\ref{app:tight}, we have $\epsilon_{\mathrm{R\text{-}GFM}}
\ <\
\epsilon_{\mathrm{MDGFM}}$.
\end{theorem}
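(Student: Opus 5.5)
The plan is to work with a standard Ben-David–style domain adaptation bound, since this is the kind of bound MDGFM itself uses. For an encoder class $\Phi$ and hypothesis class $\mathcal{H}$ on top of it, I would write the target surrogate bound as
\begin{equation*}
\epsilon(\Phi)=\min_{\phi\in\Phi}\Big[\sum_i w_i\,\widehat{\mathcal{L}}_{D_i}(\phi)+\tfrac12\, d_{\mathcal{H}\Delta\mathcal{H}}\big(\phi_\#D_\mathrm{Train},\phi_\#D_\mathrm{Test}\big)+\lambda(\phi)\Big]+\mathfrak{C}(\Phi).
\end{equation*}
Here $\widehat{\mathcal{L}}$ is the source surrogate loss, $\lambda$ is the joint optimal risk, and $\mathfrak{C}$ is a complexity (Rademacher) term. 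The comparison $\epsilon_{\mathrm{R\text{-}GFM}}<\epsilon_{\mathrm{MDGFM}}$ should then follow by bounding each term for $\Phi_{\mathrm R}$ against the corresponding term for $\Phi_{\mathrm M}$, with strictness coming from at least one term.

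\textbf{Step 1 (nesting).} I would argue that $\Phi_{\mathrm M}$ embeds into $\Phi_{\mathrm R}$. Take the configuration with $K$ fixed to MDGFM's hop, an empty GoG edge set, and one Euclidean expert with $\kappa=0$ and $m=1$; this reproduces an MDGFM encoder. Since the minimum is taken over a larger class, the empirical-risk, divergence and $\lambda$ part of the bound can only decrease. The price is that $\mathfrak{C}(\Phi_{\mathrm R})$ may be larger than $\mathfrak{C}(\Phi_{\mathrm M})$. To control that increase I would use the excess-risk tradeoff of Theorem~\ref{thm:dynamic_expert}: with the dynamically chosen $\psi_D$, the sum of the approximation and complexity terms $\mathcal{R}(\psi_D)$ is no larger than the value at any fixed expert count, including the single-geometry count that corresponds to MDGFM.

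\textbf{Step 2 (strict gain).} Strictness would come from the noise terms. By Theorem~\ref{thm:theorem1}, $\|\boldsymbol{\sigma}_\mathrm{V}\|_2\le\|\boldsymbol{\sigma}_\mathrm{F}\|_2$. By Theorem~\ref{thm:gog_edges_help_simple_1}, $e_\mathrm{ours}<e_\mathrm{none}$, and MDGFM's per-node embedding corresponds to the no-edge case. Under the appendix assumptions I would take the surrogate loss to be Lipschitz in the embedding, with the source loss and $\lambda$ bounded by a monotone function of the squared embedding error. The strict inequality $e_\mathrm{ours}<e_\mathrm{none}$ would then give a strictly smaller source term at the optimizer. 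I would also bound the divergence term through the embedding noise, since lower-variance embeddings give closer pushforward distributions in a Wasserstein/Lipschitz sense.

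\textbf{Main obstacle.} The hardest step will be reconciling the complexity increase of the richer class with the strict gain. I would first try to state, as an explicit assumption, that the error reduction $\Delta e=e_\mathrm{none}-e_\mathrm{ours}>0$, multiplied by the Lipschitz constant, exceeds $\mathcal{R}(\psi_D)-\mathcal{R}(1)$. Failing that, I would try to show this difference is nonpositive, using the minimality of $\psi_D$ established in the proof of Theorem~\ref{thm:dynamic_expert}. If minimality holds, then Step 1 gives $\le$ and Step 2 upgrades it to $<$.
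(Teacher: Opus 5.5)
Your Step~1 does not go through, and Theorem~\ref{thm:dynamic_expert} does not resolve your ``main obstacle.'' In your bound, the complexity $\mathfrak{C}(\Phi)$ is a property of the whole class and sits outside the minimum. Enlarging $\Phi_{\mathrm M}$ to $\Phi_{\mathrm R}$ (all hop counts, all GoG edge sets, all expert counts and curvatures) can therefore only increase it, and choosing $\psi_D$ from the data does not shrink the class. The inequality $\mathcal{R}(\psi_D)\le\mathcal{R}(1)$ holds automatically by the selection rule (Assumption~\ref{assump:dynamic_select}). That is why your proposed explicit assumption $L\,\Delta e>\mathcal{R}(\psi_D)-\mathcal{R}(1)$ is vacuous: it bounds the wrong quantity. $\mathcal{R}(j)$ is only an upper bound on excess risk within \methodname{}'s own curvature family. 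Its decrease in $j$ comes from the approximation term $A\mathcal{S}_N/j$, which in your bound already lives inside the minimum, so using it to pay for complexity counts the same gain twice. Nothing in $\mathcal{R}$ controls $\mathfrak{C}(\Phi_{\mathrm R})-\mathfrak{C}(\Phi_{\mathrm M})$, and ordering two upper bounds does not order the quantities they bound. The nesting itself is also unsupported. Nothing in the setup makes MDGFM, with its topology-alignment modules, a special case of an edgeless single-expert \methodname{}. Moreover, an edgeless GoG over hops $1,\dots,K$ is not a fixed-hop encoder, so $e_{\mathrm{none}}$ is not MDGFM's error.

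Step~2 does not give strictness either. Bounding the loss \emph{above} by a Lipschitz or monotone function of the squared embedding error turns $e_{\mathrm{ours}}<e_{\mathrm{none}}$ into a smaller upper bound, not a strictly smaller loss at the optimizer. You would need a lower-bound link, or to define the surrogate directly through $e$. The divergence claim fails outright. For domains with different noise-free means, shrinking equal noise leaves the Wasserstein distance unchanged and makes the pushforwards more separable, so $d_{\mathcal{H}\Delta\mathcal{H}}$ can grow.

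The paper's proof avoids all of this. Its $\mathcal{B}(\phi)$ places a per-encoder complexity $\mathfrak{C}_n(\phi)$ and a separate $\Delta_{\mathrm{scale}}(\phi)$ inside the infimum, so no nesting and no class-level comparison are needed. The argument then runs as follows:
\begin{itemize}
\item It takes a $\xi$-optimal $\phi_\xi\in\Phi_{\mathrm M}$.
\item Citing Theorems~\ref{thm:theorem1}--\ref{thm:dynamic_expert}, it posits some $\psi_\xi\in\Phi_{\mathrm R}$ that lowers fitting $+$ complexity $+$ scale by a margin $\varepsilon_g>0$ independent of $\xi$.
\item It caps the increase of $\Delta_{\mathrm{dom}}$ by $\varepsilon_d$ via Assumption~\ref{assump:mild_shift}.
\item It requires $\eta=\varepsilon_g-\varepsilon_d>0$ and lets $\xi\downarrow 0$.
\end{itemize}
Your plan is missing exactly these two ingredients: a uniform strict gain against every near-optimal MDGFM encoder, and an explicit cap on the domain-term increase that lies below that gain. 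The earlier theorems do not yield them by themselves, which is precisely where your derivation breaks. They have to be imposed as assumptions.
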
}

The proof of Theorem~\ref{thm:main_tight} is in Appendix~\ref{app:tight}.

\section{Experiments}
\label{sec:experiments}
In this section, we conduct extensive experiments to answer the following research questions: \\
(\textbf{RQ1}): Can \methodname{} outperform existing models on downstream tasks under the same training settings? \\
(\textbf{RQ2}): How robust is \methodname{} to graph perturbations compared with existing models?\\
(\textbf{RQ3}): How do the key components of \methodname{} contribute to its performance? \\
(\textbf{RQ4}): How does \methodname{} scale? \\
(\textbf{RQ5}): How should we choose the key hyperparameters of \methodname{} in practice, and how do they affect performance?
\subsection{Experimental Setup}
\label{sec:exp_setup}

\paragraph{Datasets}
In our main setting, we use 10 benchmark graphs from 4 domains, which are widely used in prior GFM studies~\cite{zhao2024all, wang2025multi, yu2025pronog}, as shown in Table~\ref{table.datasets} of Appendix~\ref{app:dataset:main}.
We adopt a leave-one-dataset-out transfer setting: for each target dataset, we pretrain on the other datasets and test on the held-out target.
To further validate \methodname{} under a unified semantic feature space and at larger scale, we additionally use 4 large training datasets including ArXiv\_2023~\cite{he2023explanation}, ogbn-Arxiv~\cite{wang2020ogb}, Reddit~\cite{huang2024reddit}, and PubMed~\cite{namata2012query}, and 4 test datasets including Cora~\cite{sen2008collective}, Ele-Computers and Books-History~\cite{yan2023ele}, and Instagram~\cite{huang2024reddit}. 
More information about the datasets is provided in Appendix~\ref{app:dataset:lmtext}.

\paragraph{Baselines.}
As summarized in Table~\ref{tab:main}, we group all baselines into four categories based on their training and adaptation paradigms:
(1) \textbf{Task-Supervised GNNs}, 
(2) \textbf{Self-Supervised Pretraining with Fine-Tuning}, 
(3) \textbf{Prompt-based Adaptation}, and 
(4) \textbf{Graph Foundation Models}. 
More information is provided in Appendix~\ref{app:baselines}.

\paragraph{Evaluation Metrics}
We evaluate \methodname{} on two task types: node classification and link prediction.
Following the metric setting in~\cite{sun2025riemanngfm}, we use accuracy in node classification and AUC-ROC in link prediction.

\subsection{Downstream Task Performance (RQ1)}
\label{sec:performance}
We address (\textbf{RQ1}) by evaluating on diverse downstream tasks, including node classification and link prediction.

\paragraph{\textbf{Node Classification.}}
\label{sec:perf_node}
\begin{table*}[t]
	\centering
	\small
	\caption{Accuracy Evaluation on 1-shot Node Classification.}
	\label{table.node-classification}
	\resizebox{1\linewidth}{!}{
    \begin{tabular}{@{}l|c|c|c|c|c|c|c|c@{}}
    \toprule
    Methods & Wisconsin & Cornell & Citeseer & Cora & Pubmed & Computers & Photos & Texas \\\midrule
    \multicolumn{9}{@{}l@{}}{\textbf{Task-Supervised GNNs}}\\
    \midrule
    \method{GCN}
    & 17.46 $\pm$ \phantom{0}3.35
    & 19.53 $\pm$ \phantom{0}5.77
    & 26.89 $\pm$ \phantom{0}3.01
    & 31.98 $\pm$ \phantom{0}4.11
    & 44.29 $\pm$ \phantom{0}2.22
    & 39.43 $\pm$ \phantom{0}5.68
    & 50.39 $\pm$ \phantom{0}1.19
    & 18.48 $\pm$ \phantom{0}4.08
    \\
    
    \method{GAT}
    & 16.86 $\pm$ \phantom{0}3.16
    & 16.51 $\pm$ \phantom{0}8.41
    & 25.27 $\pm$ \phantom{0}2.10
    & 26.81 $\pm$ \phantom{0}4.42
    & 45.11 $\pm$ \phantom{0}5.52
    & 38.05 $\pm$ \phantom{0}4.47
    & 56.51 $\pm$ \phantom{0}2.35
    & 18.36 $\pm$ \phantom{0}4.63
    \\
    
    \method{H2GCN}
    & 20.34 $\pm$ \phantom{0}5.77
    & 29.30 $\pm$ 10.40
    & 24.47 $\pm$ \phantom{0}1.35
    & 23.61 $\pm$ \phantom{0}3.92
    & 40.97 $\pm$ \phantom{0}1.26
    & 19.29 $\pm$ \phantom{0}6.87
    & 42.95 $\pm$ \phantom{0}4.32
    & 29.94 $\pm$ 16.38
    \\
    
    \method{FAGCN}
    & 26.19 $\pm$ \phantom{0}7.93
    & 28.72 $\pm$ \phantom{0}8.12
    & 20.56 $\pm$ \phantom{0}1.02
    & 20.75 $\pm$ \phantom{0}3.85
    & 41.65 $\pm$ \phantom{0}1.46
    & 27.02 $\pm$ \phantom{0}2.97
    & 47.91 $\pm$ \phantom{0}2.17
    & 31.35 $\pm$ 14.17
    \\
    
    \midrule
    \multicolumn{9}{@{}l@{}}{\textbf{Self-Supervised Pretraining with Fine-tuning}}\\
    \midrule
    \method{DGI}
    & 28.24 $\pm$ \phantom{0}5.30
    & 29.73 $\pm$ 10.11
    & 34.52 $\pm$ \phantom{0}8.81
    & 39.88 $\pm$ \phantom{0}5.53
    & 47.10 $\pm$ \phantom{0}4.95
    & 34.59 $\pm$ \phantom{0}5.94
    & 50.66 $\pm$ \phantom{0}6.93
    & 29.73 $\pm$ \phantom{0}6.34
    \\
    
    \method{GraphCL}
    & 32.16 $\pm$ \phantom{0}8.83
    & 31.35 $\pm$ 12.47
    & 27.56 $\pm$ \phantom{0}5.01
    & 35.26 $\pm$ \phantom{0}8.62
    & 43.66 $\pm$ \phantom{0}3.12
    & 34.40 $\pm$ \phantom{0}6.93
    & 48.86 $\pm$ \phantom{0}5.53
    & 31.35 $\pm$ 22.66
    \\
    
    \method{GraphACL}
    & 29.41 $\pm$ 11.09
    & 25.41 $\pm$ \phantom{0}9.86
    & 28.34 $\pm$ \phantom{0}4.19
    & 36.96 $\pm$ \phantom{0}8.71
    & 38.20 $\pm$ 10.11
    & 44.42 $\pm$ \phantom{0}7.06
    & 56.60 $\pm$ \phantom{0}8.90
    & 24.86 $\pm$ \phantom{0}9.44
    \\

    {\method{MixHop}} & {29.82 $\pm$ \phantom{0}9.32} & {17.14 $\pm$ \phantom{0}5.54} & {21.35 $\pm$ \phantom{0}3.67} & {21.46 $\pm$ \phantom{0}5.95} & {43.57 $\pm$ \phantom{0}4.05} & {29.77 $\pm$ 15.57} & {32.77 $\pm$ \phantom{0}9.26} & {26.09 $\pm$ 15.93} \\
    
    \midrule
    \multicolumn{9}{@{}l@{}}{\textbf{Prompt-based Adaptation}}\\
    \midrule
    \method{GPPT}
    & 23.53 $\pm$ 15.93
    & 25.41 $\pm$ \phantom{0}5.27
    & 33.24 $\pm$ \phantom{0}8.73
    & 40.60 $\pm$ \phantom{0}6.00
    & 41.64 $\pm$ \phantom{0}9.29
    & 44.57 $\pm$ 13.38
    & 53.94 $\pm$ 11.93
    & 18.38 $\pm$ 12.00
    \\
    
    \method{GPF}
    & 23.53 $\pm$ \phantom{0}9.36 
    & 36.22 $\pm$ \phantom{0}9.46
    & 27.16 $\pm$ \phantom{0}4.10 
    & 34.74 $\pm$ \phantom{0}3.48 
    & 26.62 $\pm$ 10.18
    & 42.15 $\pm$ \phantom{0}8.75 
    & 59.13 $\pm$ \phantom{0}6.05 
    & 28.65 $\pm$ 17.31
    \\
    
    \method{GraphPrompt}
    & 20.39 $\pm$ \phantom{0}7.92
    & 28.65 $\pm$ \phantom{0}5.92
    & 19.80 $\pm$ \phantom{0}2.79
    & 33.68 $\pm$ \phantom{0}9.83
    & 42.94 $\pm$ \phantom{0}1.98
    & 33.34 $\pm$ \phantom{0}6.09
    & 42.60 $\pm$ \phantom{0}6.75
    & 24.32 $\pm$ \phantom{0}9.17
    \\
    
    \method{GraphPrompt+}
    & 16.86 $\pm$ \phantom{0}5.63
    & 29.19 $\pm$ \phantom{0}5.24
    & 28.28 $\pm$ \phantom{0}3.55
    & 29.76 $\pm$ \phantom{0}6.06
    & 38.74 $\pm$ \phantom{0}1.70
    & 43.01 $\pm$ \phantom{0}6.72
    & 58.23 $\pm$ \phantom{0}4.61
    & 19.46 $\pm$ \phantom{0}14.84
    \\
    
    \midrule
    \multicolumn{9}{@{}l@{}}{\textbf{Graph Foundation Models}}\\
    \midrule
    \method{GCOPE}
    & 28.69 $\pm$ \phantom{0}7.00
    & 25.00 $\pm$ \phantom{0}6.58
    & \underline{38.62 $\pm$ \phantom{0}2.58}
    & 39.05 $\pm$ \phantom{0}2.53
    & 44.52 $\pm$ \phantom{0}2.51
    & 28.50 $\pm$ \phantom{0}4.91
    & 59.17 $\pm$ \phantom{0}2.45
    & 19.50 $\pm$ 12.18
    \\
    
    \method{SAMGPT}
    & 21.57 $\pm$ 12.09
    & \underline{36.22 $\pm$ \phantom{0}5.30}
    & 28.78 $\pm$ \phantom{0}8.99
    & 43.08 $\pm$ 10.40
    & 37.95 $\pm$ \phantom{0}9.58
    & 45.80 $\pm$ \phantom{0}4.54
    & 49.80 $\pm$ 11.89
    & \underline{31.89 $\pm$ 11.26}
    \\
    
    \method{MDGFM}
    & 17.65 $\pm$ \phantom{0}7.34
    & 26.49 $\pm$ 10.73
    & 23.32 $\pm$ 15.32
    & 42.88 $\pm$ \phantom{0}9.12
    & 38.42 $\pm$ \phantom{0}7.92
    & 42.68 $\pm$ \phantom{0}2.53
    & 57.78 $\pm$ \phantom{0}9.07
    & 31.54 $\pm$ 11.78
    \\
    
    \method{RiemannGFM}
    & 31.20 $\pm$ \phantom{0}9.35
    & 31.35 $\pm$ \phantom{0}7.76
    & 31.86 $\pm$ \phantom{0}6.45
    & \underline{43.48 $\pm$ \phantom{0}5.28}
    & 44.10 $\pm$ \phantom{0}5.05
    & 46.20 $\pm$ \phantom{0}8.79
    & 57.20 $\pm$ 16.84
    & 29.12 $\pm$ \phantom{0}7.44
    \\

    {\method{GPM}} & {29.19 $\pm$ \phantom{0}4.69} & {27.83 $\pm$ 11.00} & {29.88 $\pm$ \phantom{0}6.36} & {38.96 $\pm$ \phantom{0}8.18} & {35.80 $\pm$ \phantom{0}8.20} & {48.58 $\pm$ 10.66} & {37.85 $\pm$ \phantom{0}8.17} & {27.70 $\pm$ 12.09} \\
    
    {\method{G2PM}} & {\underline{34.70 $\pm$ \phantom{0}8.73}} & {32.96 $\pm$ \phantom{0}5.44} & {31.04 $\pm$ \phantom{0}2.90} & {40.04 $\pm$ \phantom{0}3.13} & {\underline{49.71 $\pm$ \phantom{0}8.66}} & {\underline{51.92 $\pm$ 10.10}} & {\underline{59.90 $\pm$ \phantom{0}5.02}} & {30.37 $\pm$ \phantom{0}9.36} \\
    
    \midrule
    \method{\methodname{}}
    & \textbf{35.41 $\pm$ \phantom{0}7.29}
    & \textbf{36.71 $\pm$ \phantom{0}9.92}
    & \textbf{57.54 $\pm$ \phantom{0}9.49}
    & \textbf{49.50 $\pm$ \phantom{0}3.97}
    & \textbf{49.80 $\pm$ \phantom{0}5.38}
    & \textbf{52.30 $\pm$ \phantom{0}3.33}
    & \textbf{61.08 $\pm$ \phantom{0}5.26}
    & \textbf{32.36 $\pm$ 12.10}
    \\
    \bottomrule
    \end{tabular}}
	\\
	\parbox{1\textwidth}{\footnotesize Results are reported in percent. The best method is bolded, and the runner-up is underlined.}
    \label{tab:main}
\end{table*}

\begin{table*}[t]
	\centering
	\small
	\caption{Accuracy evaluation on \textbf{3-shot} node classification under the protocol in Section~\ref{sec:exp_setup}.}
	\label{tab:nodecls_3shot}
	\resizebox{1\linewidth}{!}{
    \begin{tabular}{@{}l|c|c|c|c|c|c|c|c@{}}
    \toprule
    Methods & Wisconsin & Cornell & Citeseer & Cora & Pubmed & Computers & Photos & Texas \\
    \midrule
    \multicolumn{9}{@{}l@{}}{\textbf{Task-Supervised GNNs}}\\
    \midrule
    
    \method{GCN}
    & 15.51 $\pm$ \phantom{0}5.61
    & 20.00 $\pm$ \phantom{0}2.04
    & 33.61 $\pm$ \phantom{0}4.98
    & 40.12 $\pm$ \phantom{0}6.52
    & 40.97 $\pm$ \phantom{0}2.16
    & 31.54 $\pm$ \phantom{0}5.47
    & 59.05 $\pm$ \phantom{0}4.60
    & 21.95 $\pm$ 11.22
    \\
    
    \method{GAT}
    & 18.94 $\pm$ \phantom{0}1.99
    & 15.46 $\pm$ \phantom{0}5.51
    & 22.07 $\pm$ \phantom{0}8.64
    & 29.20 $\pm$ \phantom{0}3.13
    & 34.70 $\pm$ \phantom{0}7.32
    & 38.92 $\pm$ \phantom{0}9.21
    & 47.20 $\pm$ \phantom{0}4.62
    & 28.05 $\pm$ 12.06
    \\
    
    \method{H2GCN}
    & 21.32 $\pm$ 13.55
    & 31.17 $\pm$ 11.47
    & 23.29 $\pm$ \phantom{0}2.97
    & 24.94 $\pm$ \phantom{0}6.05
    & 42.95 $\pm$ \phantom{0}3.02
    & 13.77 $\pm$ \phantom{0}4.30
    & 35.97 $\pm$ \phantom{0}4.50
    & 16.95 $\pm$ \phantom{0}1.36
    \\
    
    \method{FAGCN}
    & 31.32 $\pm$ \phantom{0}5.57
    & 31.90 $\pm$ 11.89
    & 20.15 $\pm$ \phantom{0}0.65
    & 19.75 $\pm$ \phantom{0}6.68
    & 40.09 $\pm$ \phantom{0}1.09
    & 25.51 $\pm$ \phantom{0}5.75
    & 42.35 $\pm$ \phantom{0}2.53
    & 26.95 $\pm$ \phantom{0}8.06
    \\
    
    \midrule
    \multicolumn{9}{@{}l@{}}{\textbf{Self-Supervised Pretraining with Fine-tuning}}\\
    \midrule
    \method{DGI}
    & 36.08 $\pm$ \phantom{0}3.82
    & 30.27 $\pm$ 14.48
    & 45.26 $\pm$ \phantom{0}2.85
    & 58.26 $\pm$ \phantom{0}2.61
    & 52.94 $\pm$ \phantom{0}7.65
    & 45.15 $\pm$ \phantom{0}6.63
    & 68.87 $\pm$ \phantom{0}9.46
    & 40.00 $\pm$ \phantom{0}5.54
    \\
    
    \method{GraphCL}
    & 36.08 $\pm$ 10.61
    & 44.32 $\pm$ 13.46
    & 32.02 $\pm$ \phantom{0}3.48
    & 45.30 $\pm$ \phantom{0}6.52
    & 44.20 $\pm$ \phantom{0}1.53
    & 49.62 $\pm$ \phantom{0}7.42
    & 62.69 $\pm$ \phantom{0}0.12
    & 41.62 $\pm$ 10.22
    \\
    
    \method{GraphACL}
    & 27.45 $\pm$ \phantom{0}6.04
    & 27.03 $\pm$ \phantom{0}6.04
    & 38.14 $\pm$ \phantom{0}5.59
    & 57.10 $\pm$ \phantom{0}4.72
    & 45.04 $\pm$ \phantom{0}3.37
    & 46.23 $\pm$ \phantom{0}8.33
    & 66.85 $\pm$ \phantom{0}3.98
    & 37.84 $\pm$ \phantom{0}9.56
    \\
    
    \midrule
    \multicolumn{9}{@{}l@{}}{\textbf{Prompt-based Adaptation}}\\
    \method{GPPT}
    & 28.63 $\pm$ \phantom{0}3.28
    & 35.68 $\pm$ \phantom{0}9.63
    & 43.66 $\pm$ \phantom{0}6.31
    & 39.22 $\pm$ \phantom{0}7.75
    & 38.06 $\pm$ 10.37
    & 49.70 $\pm$ \phantom{0}6.78
    & 56.04 $\pm$ \phantom{0}8.56
    & 39.46 $\pm$ 15.94
    \\
    
    \method{GPF}
    & 34.12 $\pm$ 10.35
    & 28.65 $\pm$ \phantom{0}5.01
    & 33.24 $\pm$ \phantom{0}6.79
    & 46.22 $\pm$ \phantom{0}4.83
    & 34.80 $\pm$ 14.11
    & 42.67 $\pm$ \phantom{0}7.08
    & 60.57 $\pm$ \phantom{0}8.19
    & \underline{41.62 $\pm$ \phantom{0}3.24}
    \\
    
    \method{GraphPrompt}
    & 29.02 $\pm$ \phantom{0}6.71
    & 32.43 $\pm$ \phantom{0}10.81
    & 28.94 $\pm$ \phantom{0}2.62
    & 44.42 $\pm$ \phantom{0}5.20
    & 51.44 $\pm$ \phantom{0}3.68
    & 42.62 $\pm$ \phantom{0}7.13
    & 57.73 $\pm$ \phantom{0}7.06
    & 30.27 $\pm$ \phantom{0}5.86
    \\
    
    \method{GraphPrompt+}
    & 27.06 $\pm$ \phantom{0}4.00
    & 31.35 $\pm$ \phantom{0}11.79
    & 35.14 $\pm$ \phantom{0}5.81
    & 46.12 $\pm$ \phantom{0}5.10
    & 52.16 $\pm$ \phantom{0}4.24
    & 54.43 $\pm$ \phantom{0}4.53
    & 64.42 $\pm$ \phantom{0}7.58
    & 34.59 $\pm$ \phantom{0}9.58
    \\
    
    \midrule
    \multicolumn{9}{@{}l@{}}{\textbf{Graph Foundation Models}}\\
    \midrule
    \method{GCOPE}
    & 38.21 $\pm$ \phantom{0}4.22
    & 23.44 $\pm$ 12.42
    & 37.52 $\pm$ \phantom{0}1.48
    & \underline{58.71 $\pm$ \phantom{0}1.96}
    & \underline{55.26 $\pm$ \phantom{0}1.93}
    & 53.01 $\pm$ \phantom{0}3.31
    & 68.15 $\pm$ \phantom{0}2.54
    & 34.90 $\pm$ 11.04
    \\
    
    \method{SAMGPT}
    & 30.98 $\pm$ \phantom{0}7.78
    & 29.73 $\pm$ 11.21
    & 46.46 $\pm$ 10.02
    & 45.76 $\pm$ \phantom{0}3.84
    & 48.26 $\pm$ \phantom{0}6.58
    & \underline{55.27 $\pm$ \phantom{0}6.04}
    & 62.98 $\pm$ \phantom{0}8.16
    & 25.95 $\pm$ 15.06
    \\
    
    \method{MDGFM}
    & 34.90 $\pm$ \phantom{0}6.37
    & 29.73 $\pm$ 10.40
    & \underline{47.72 $\pm$ \phantom{0}4.63}
    & 57.52 $\pm$ \phantom{0}6.12
    & 42.86 $\pm$ \phantom{0}4.88
    & 48.44 $\pm$ 10.53
    & 62.84 $\pm$ \phantom{0}4.87
    & 35.14 $\pm$ 11.72
    \\
    
    \method{RiemannGFM}
    & \underline{40.40 $\pm$ \phantom{0}5.71}
    & \underline{44.86 $\pm$ \phantom{0}8.65}
    & 47.44 $\pm$ \phantom{0}7.81
    & 58.20 $\pm$ \phantom{0}2.10
    & 53.60 $\pm$ 13.59
    & 40.77 $\pm$ \phantom{0}6.98
    & \underline{70.82 $\pm$ \phantom{0}3.10}
    & 33.51 $\pm$ 11.29
    \\
    
    \midrule
    \method{\methodname{}}
    & \textbf{43.10 $\pm$ \phantom{0}4.03}
    & \textbf{45.39 $\pm$ \phantom{0}3.09}
    & \textbf{73.98 $\pm$ \phantom{0}1.42}
    & \textbf{59.26 $\pm$ \phantom{0}5.04}
    & \textbf{59.19 $\pm$ \phantom{0}3.01}
    & \textbf{56.02 $\pm$ \phantom{0}3.90}
    & \textbf{71.50 $\pm$ \phantom{0}2.80}
    & \textbf{44.18 $\pm$ \phantom{0}7.02}
    \\
    \bottomrule
    \end{tabular}}
	\\
	\parbox{1\textwidth}{\footnotesize Results are reported in percent. The best method is bolded, and the runner-up is underlined.}
\end{table*}

\begin{table*}
	\centering
	\small
	\caption{Accuracy evaluation on 5-shot node classification}
	\label{tab:nodecls_5shot}
	\resizebox{1\linewidth}{!}{
    \begin{tabular}{@{}l|c|c|c|c|c|c|c|c@{}}
    \toprule
    Methods & Wisconsin & Cornell & Citeseer & Cora & Pubmed & Computers & Photos & Texas \\
    \midrule
    \multicolumn{9}{@{}l@{}}{\textbf{Task-Supervised GNNs}}\\
    \midrule
    
    \method{GCN}
    & 27.00 $\pm$ \phantom{0}6.60
    & 20.92 $\pm$ \phantom{0}6.02
    & 31.61 $\pm$ \phantom{0}6.05
    & 39.48 $\pm$ \phantom{0}6.08
    & 38.82 $\pm$ \phantom{0}2.98
    & 30.06 $\pm$ 10.72
    & 66.96 $\pm$ \phantom{0}9.21
    & 38.08 $\pm$ 13.65
    \\
    
    \method{GAT}
    & 28.85 $\pm$ \phantom{0}8.91
    & 18.30 $\pm$ \phantom{0}7.49
    & 23.99 $\pm$ \phantom{0}4.10
    & 29.82 $\pm$ \phantom{0}2.31
    & 34.92 $\pm$ 12.27
    & 37.12 $\pm$ \phantom{0}8.21
    & 54.37 $\pm$ \phantom{0}4.90
    & 28.21 $\pm$ 11.78
    \\
    
    \method{H2GCN}
    & 28.39 $\pm$ 13.19
    & 35.82 $\pm$ 10.30
    & 22.29 $\pm$ \phantom{0}3.74
    & 20.83 $\pm$ \phantom{0}6.56
    & 42.61 $\pm$ \phantom{0}3.61
    & 19.67 $\pm$ 11.45
    & 47.07 $\pm$ \phantom{0}3.78
    & 32.82 $\pm$ 14.56
    \\
    
    \method{FAGCN}
    & 24.61 $\pm$ \phantom{0}4.70
    & 25.36 $\pm$ \phantom{0}9.00
    & 18.69 $\pm$ \phantom{0}1.31
    & 15.82 $\pm$ \phantom{0}3.01
    & 35.92 $\pm$ \phantom{0}7.71
    & 28.68 $\pm$ \phantom{0}1.89
    & 50.83 $\pm$ \phantom{0}3.79
    & 46.15 $\pm$ \phantom{0}9.94
    \\
    
    \midrule
    \multicolumn{9}{@{}l@{}}{\textbf{Self-Supervised Pretraining with Fine-tuning}}\\
    \midrule
    \method{DGI}
    & 39.61 $\pm$ \phantom{0}3.77
    & 38.92 $\pm$ \phantom{0}6.78
    & 53.66 $\pm$ \phantom{0}6.21
    & \underline{63.72 $\pm$ \phantom{0}2.23}
    & 57.02 $\pm$ \phantom{0}3.47
    & 52.03 $\pm$ \phantom{0}5.29
    & 69.74 $\pm$ \phantom{0}4.95
    & 42.70 $\pm$ \phantom{0}9.04
    \\
    
    \method{GraphCL}
    & 39.61 $\pm$ \phantom{0}8.25
    & 42.70 $\pm$ \phantom{0}5.54
    & 39.26 $\pm$ \phantom{0}3.63
    & 49.34 $\pm$ \phantom{0}4.94
    & 44.88 $\pm$ \phantom{0}2.40
    & 57.78 $\pm$ \phantom{0}3.96
    & 69.39 $\pm$ \phantom{0}6.46
    & 47.03 $\pm$ 13.05
    \\
    
    \method{GraphACL}
    & 30.98 $\pm$ \phantom{0}6.56
    & 29.73 $\pm$ \phantom{0}1.91
    & 44.68 $\pm$ \phantom{0}6.36
    & 62.90 $\pm$ \phantom{0}5.69
    & 50.10 $\pm$ \phantom{0}5.39
    & 51.60 $\pm$ \phantom{0}6.46
    & 65.22 $\pm$ \phantom{0}8.34
    & 34.59 $\pm$ \phantom{0}7.74
    \\
    
    \midrule
    \multicolumn{9}{@{}l@{}}{\textbf{Prompt-based Adaptation}}\\
    \method{GPPT}
    & 35.29 $\pm$ \phantom{0}7.72
    & 44.86 $\pm$ \phantom{0}4.52
    & 45.98 $\pm$ \phantom{0}5.97
    & 48.78 $\pm$ \phantom{0}5.49
    & 48.88 $\pm$ \phantom{0}8.25
    & 30.14 $\pm$ 12.77
    & 64.89 $\pm$ \phantom{0}2.38
    & \underline{48.11 $\pm$ 10.71}
    \\
    
    \method{GPF}
    & 33.73 $\pm$ \phantom{0}3.37
    & 33.51 $\pm$ \phantom{0}6.96
    & 38.34 $\pm$ \phantom{0}6.35
    & 49.98 $\pm$ \phantom{0}5.98
    & 43.00 $\pm$ \phantom{0}3.48 
    & 47.46 $\pm$ \phantom{0}6.54
    & 70.89 $\pm$ \phantom{0}3.81
    & 30.81 $\pm$ 12.86
    \\
    
    \method{GraphPrompt}
    & 31.37 $\pm$ \phantom{0}3.10
    & 35.14 $\pm$ \phantom{0}8.55
    & 35.96 $\pm$ \phantom{0}1.97
    & 48.24 $\pm$ \phantom{0}3.43
    & 58.56 $\pm$ \phantom{0}2.40
    & 48.43 $\pm$ \phantom{0}4.50
    & 64.34 $\pm$ \phantom{0}3.99
    & 27.03 $\pm$ \phantom{0}3.82
    \\
    
    \method{GraphPrompt+}
    & 27.84 $\pm$ \phantom{0}5.46
    & 32.43 $\pm$ 10.11
    & 41.60 $\pm$ \phantom{0}5.29
    & 51.62 $\pm$ \phantom{0}2.00
    & 58.76 $\pm$ \phantom{0}6.22
    & 57.04 $\pm$ \phantom{0}5.29
    & 69.34 $\pm$ \phantom{0}2.48
    & 45.95 $\pm$ 10.81
    \\
    
    \midrule
    \multicolumn{9}{@{}l@{}}{\textbf{Graph Foundation Models}}\\
    \midrule
    \method{GCOPE}
    & 34.38 $\pm$ \phantom{0}3.93
    & 35.92 $\pm$ \phantom{0}9.30
    & 47.32 $\pm$ \phantom{0}1.59
    & 57.33 $\pm$ \phantom{0}2.08
    & 60.17 $\pm$ \phantom{0}1.81
    & 55.76 $\pm$ \phantom{0}3.86
    & 68.28 $\pm$ \phantom{0}2.45
    & 26.34 $\pm$ 10.63
    \\
    
    \method{SAMGPT}
    & 39.22 $\pm$ \phantom{0}9.20
    & 41.08 $\pm$ 11.13
    & \underline{59.44 $\pm$ \phantom{0}5.70}
    & 62.76 $\pm$ \phantom{0}3.93
    & 47.08 $\pm$ \phantom{0}9.10
    & \underline{58.81 $\pm$ \phantom{0}7.14}
    & 72.08 $\pm$ \phantom{0}4.20
    & 31.35 $\pm$ \phantom{0}6.07
    \\
    
    \method{MDGFM}
    & 36.08 $\pm$ \phantom{0}7.40
    & 41.08 $\pm$ \phantom{0}7.72
    & 57.08 $\pm$ \phantom{0}3.71
    & 61.86 $\pm$ \phantom{0}4.40
    & 48.84 $\pm$ \phantom{0}9.86
    & 49.68 $\pm$ \phantom{0}4.93
    & 70.37 $\pm$ \phantom{0}7.36
    & 21.08 $\pm$ \phantom{0}5.77
    \\
    
    \method{RiemannGFM}
    & \underline{40.40 $\pm$ \phantom{0}8.71}
    & \underline{44.86 $\pm$ \phantom{0}4.05}
    & 54.68 $\pm$ \phantom{0}3.67
    & 62.92 $\pm$ \phantom{0}5.67
    & \underline{61.48 $\pm$ \phantom{0}6.54}
    & 48.93 $\pm$ \phantom{0}5.75
    & \underline{73.47 $\pm$ \phantom{0}6.42}
    & 41.08 $\pm$ 12.01
    \\
    
    \midrule
    \method{\methodname{}}
    & \textbf{47.75 $\pm$ \phantom{0}4.97}
    & \textbf{47.97 $\pm$ \phantom{0}2.82}
    & \textbf{74.59$\pm$ \phantom{0}1.87} 
    & \textbf{63.77 $\pm$ \phantom{0}2.47}
    & \textbf{63.39 $\pm$ \phantom{0}1.85}
    & \textbf{59.55 $\pm$ \phantom{0}4.17}
    & \textbf{73.62 $\pm$ \phantom{0}1.15}
    & \textbf{51.64 $\pm$ \phantom{0}6.31}
    \\
    \bottomrule
    \end{tabular}}
	\\
	\parbox{1\textwidth}{\footnotesize Results are reported in percent. The best method is bolded, and the runner-up is underlined.}
\end{table*}

We report the 1-shot results in Table~\ref{table.node-classification}.  The 3-shot and 5-shot results are in Tables~\ref{tab:nodecls_3shot} and \ref{tab:nodecls_5shot}.
Under different shot settings, \methodname{} consistently achieves the best performance on all datasets, demonstrating strong cross-domain generalization.
Notably, \methodname{} yields large gains on citation networks and WebKB-style graphs (e.g., 18.9\% improvement on Citeseer).

We additionally evaluate \methodname{} in a large-scale setting using unified language features described in Section~\ref{sec:exp_setup}.
The results are summarized in Table~\ref{tab:lm_unified}.
Overall, \methodname{} remains the best performance under this setting. 

\begin{table}[t]
    \centering
    \scriptsize
    \setlength{\tabcolsep}{3pt}
    \renewcommand{\arraystretch}{1.05}
    \caption{Cross-domain node classification under the LM unified-feature setting.}
    \label{tab:lm_unified}
    \resizebox{\columnwidth}{!}{
    \begin{tabular}{@{}l|cccc@{}}
        \toprule
        Methods & Books-History & Ele-Computers & Cora & Instagram \\
        \midrule
        \method{GCOPE}              & \underline{30.67 $\pm$ \phantom{0}7.90} & 23.10 $\pm$ 3.38 & 37.80 $\pm$ 2.58 & \underline{59.17 $\pm$ 7.40} \\
        \method{SAMGPT}             & OOM & OOM & OOM & OOM \\
        \method{MDGFM}              & OOM & OOM & OOM & OOM \\
        \method{RiemannGFM}         & 17.66 $\pm$  \phantom{0}9.12 & \underline{24.56 $\pm$8.97} & \underline{50.92 $\pm$ 3.58} & 46.60 $\pm$  9.87 \\
        \midrule
        \method{\methodname{}}      & \textbf{37.05 $\pm$ 11.06} & \textbf{30.73 $\pm$ 5.31} & \textbf{61.78 $\pm$ 7.77} & \textbf{59.49 $\pm$ 4.46} \\
        \bottomrule
    \end{tabular}}
\end{table}

\paragraph{\textbf{Link Prediction}}
\label{sec:perf_lp}

We evaluate on link prediction via full-parameter fine-tuning in Table~\ref{tab:linkpred_main}, following the task setting in~\cite{sun2025riemanngfm}.

More results are in Tables~\ref{tab:linkpred_full} of  Appendix~\ref{app:linkpred_full}.
Overall, \methodname{} achieves consistently strong AUC-ROC on these representative datasets.

\begin{table}[t]
    \centering
    \scriptsize
    \setlength{\tabcolsep}{3pt}
    \renewcommand{\arraystretch}{1.05}
    \caption{Link prediction performance on representative datasets (\textbf{AUC-ROC}, \%).}
    \label{tab:linkpred_main}
    \resizebox{\columnwidth}{!}{
    \begin{tabular}{@{}l|cccc@{}}
        \toprule
        Methods & Cora & Pubmed & Photos & Texas \\
        \midrule
        \method{GCOPE}      & \underline{88.71$\pm$0.57} & 84.99 $\pm$ 0.06 & 78.45$\pm$5.51 & \underline{80.51$\pm$1.75} \\
        \method{SAMGPT}     & 88.62$\pm$3.51 & 84.92 $\pm$ 0.40 & 80.59$\pm$0.33 & 51.20$\pm$4.42 \\
        \method{MDGFM}      & 88.41$\pm$2.12 & 84.78 $\pm$ 2.09 & 81.24$\pm$5.80 & 65.22$\pm$4.05 \\
        \method{RiemannGFM} & 88.54$\pm$0.11 & \underline{86.37 $\pm$ 0.33} & \underline{81.44$\pm$0.95} & 72.41$\pm$5.96 \\
        \midrule
        \method{\methodname{}} & \textbf{89.27$\pm$0.64} & \textbf{88.66$\pm$5.16} & \textbf{81.53$\pm$0.83} & \textbf{87.94$\pm$0.96} \\
        \bottomrule
    \end{tabular}}
\end{table}

\subsection{Robustness to Perturbations (RQ2)}
\label{sec:robustness}
To answer \textbf{RQ2}, we evaluate \methodname{} under two graph perturbations: edge drop and node masking.
Based on Appendix~\ref{app:linkpred_full}, we apply perturbations only at the evaluation phase and report how performance changes as the perturbation magnitude increases.

\paragraph{Edge Drop.}
We randomly remove edges with drop rate $p \in \{0, 0.1, 0.2, 0.3, 0.4, 0.5\}$.
In Figure~\ref{fig:robust_edge_drop}, \methodname{} consistently achieves the best accuracy across all perturbation levels
and exhibits the slowest degradation rate as $p$ increases.
For example, at $p=0.5$, \methodname{} retains 46.91\% (a drop of 5.76\% points from $p=0$), while the strongest baseline drops to around 41.2\%.

\paragraph{Node Masking.}
We randomly mask node attributes with masking ratio $r \in \{0, 0.1, 0.2, 0.3, 0.4, 0.5\}$ by setting the corresponding feature vectors to zero.
Figure~\ref{fig:robust_node_mask} shows a similar trend: \methodname{} maintains stable performance under increasing attribute missingness.
At $r=0.5$, \methodname{} retains 46.84\% accuracy, whereas the strongest baseline drops to 41.78\%, resulting in a 5.06\% advantage.

\begin{figure}[t]
  \centering
  \captionsetup[subfigure]{skip=-2pt}
  \includegraphics[width=\linewidth]{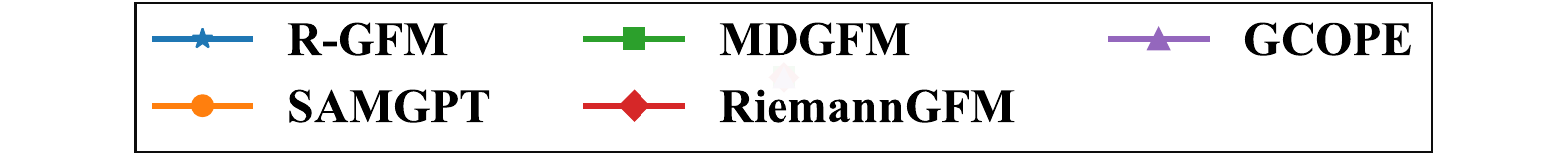}\vspace{-0.1em}
  \begin{subfigure}[t]{0.495\linewidth}
    \centering
    \includegraphics[width=\linewidth]{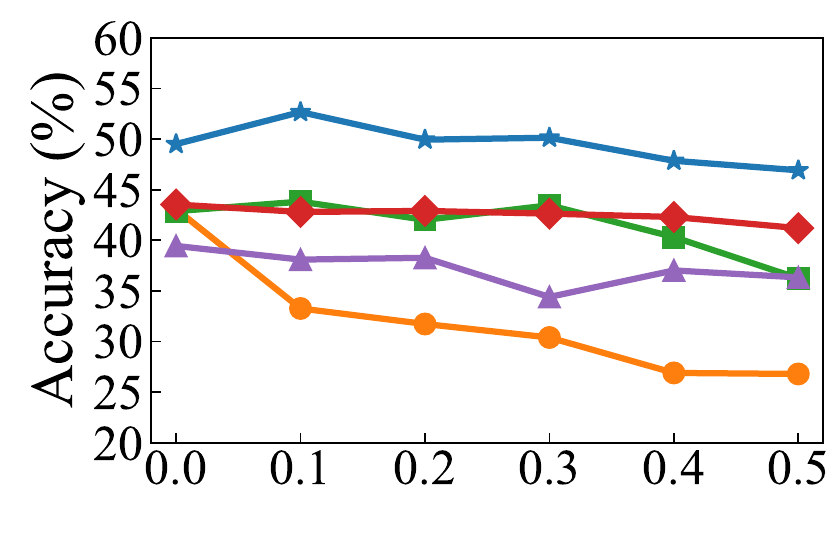}
    \caption{Edge drop.}
    \label{fig:robust_edge_drop}
  \end{subfigure}\hspace{-0.4em}
  \begin{subfigure}[t]{0.495\linewidth}
    \centering
    \includegraphics[width=\linewidth]{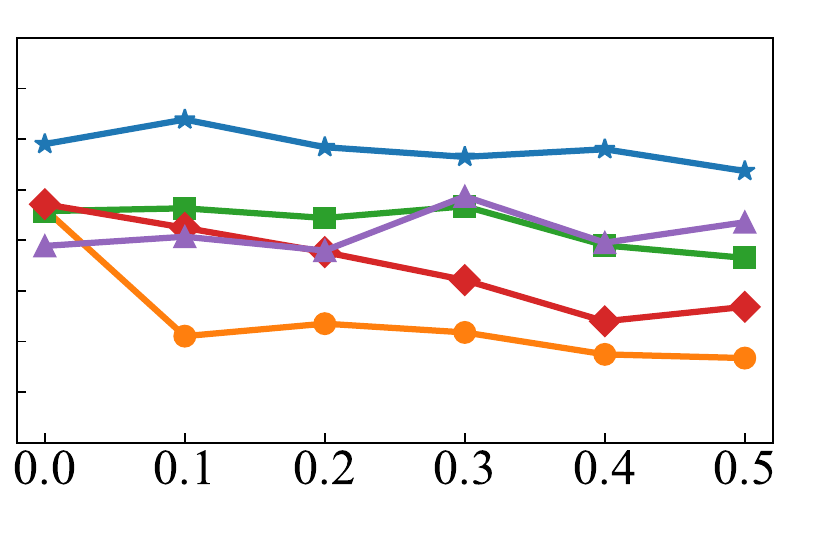}
    \caption{Node masking.}
    \label{fig:robust_node_mask}
  \end{subfigure}
  \caption{Robustness to structural and attribute perturbations.}
  \label{fig:robustness}
\end{figure}
\begin{figure}[t]
    \centering
    \includegraphics[width=0.95\linewidth]{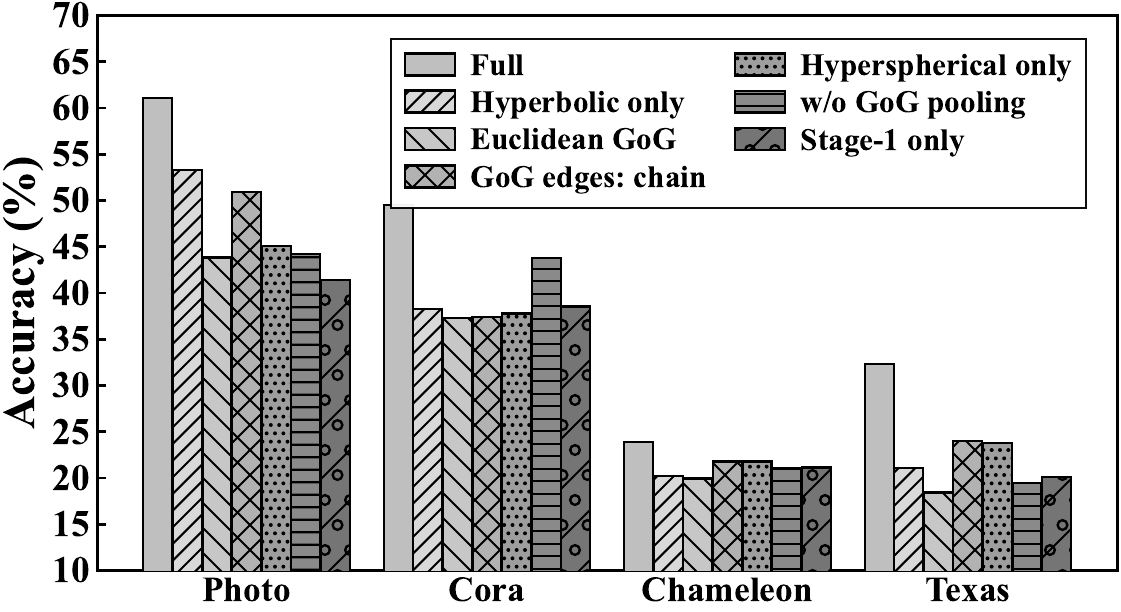}
    \caption{Ablation results on four representative datasets (Photo, Cora, Chameleon, Texas) under 1-shot node classification.}
    \label{fig:ablation_bar}
\end{figure}
\subsection{Ablation Study (RQ3)}
\label{sec:ablation}
To answer \textbf{RQ3}, we ablate the two key design choices of \methodname{}: (i) constructing the GoG, and (ii) employing Riemannian encoders on the GoG.
We report results on four datasets, each from a different domain, to examine the effect across diverse domains.
As shown in Figure~\ref{fig:ablation_bar}, removing or weakening the GoG module (Stage-1 only, w/o GoG pooling, or using a GoG topology with chain edges) consistently hurts performance, verifying the effectiveness of GoG construction.
Moreover, replacing the Riemannian encoder on GoG with an Euclidean one (Euclidean GoG) leads to a clear drop across datasets, and using only one manifold (Hyperbolic only or Hyperspherical only) is inferior to the full model, indicating the benefit of Riemannian MoE.

\subsection{Scalability (RQ4)}
\label{sec:scalability}

To answer \textbf{RQ4}, we test scalability from two perspectives:
(1) scaling with structural scales (i.e., increasing $K$-hop),
and (2) scaling with graph size (i.e., larger graphs), reporting both accuracy and resource costs.

\paragraph{\textbf{Scaling with Hop Number ($K$)}}
\label{sec:scaling_k}
 We study how performance scales with the hop number $K$ on cross-domain 1-shot node classification by extending GCOPE to sample subgraphs of varying hops.
In Figure~\ref{fig:k_scaling}, simply increasing $K$ yields only marginal gains for the naive GCOPE extension,
suggesting that enlarging the receptive field alone is insufficient.
In contrast, \methodname{} benefits more consistently from larger $K$, indicating that our GoG-based multi-scale construction can better leverage additional hop information.

\begin{figure}[t]
    \centering
    \includegraphics[width=\columnwidth]{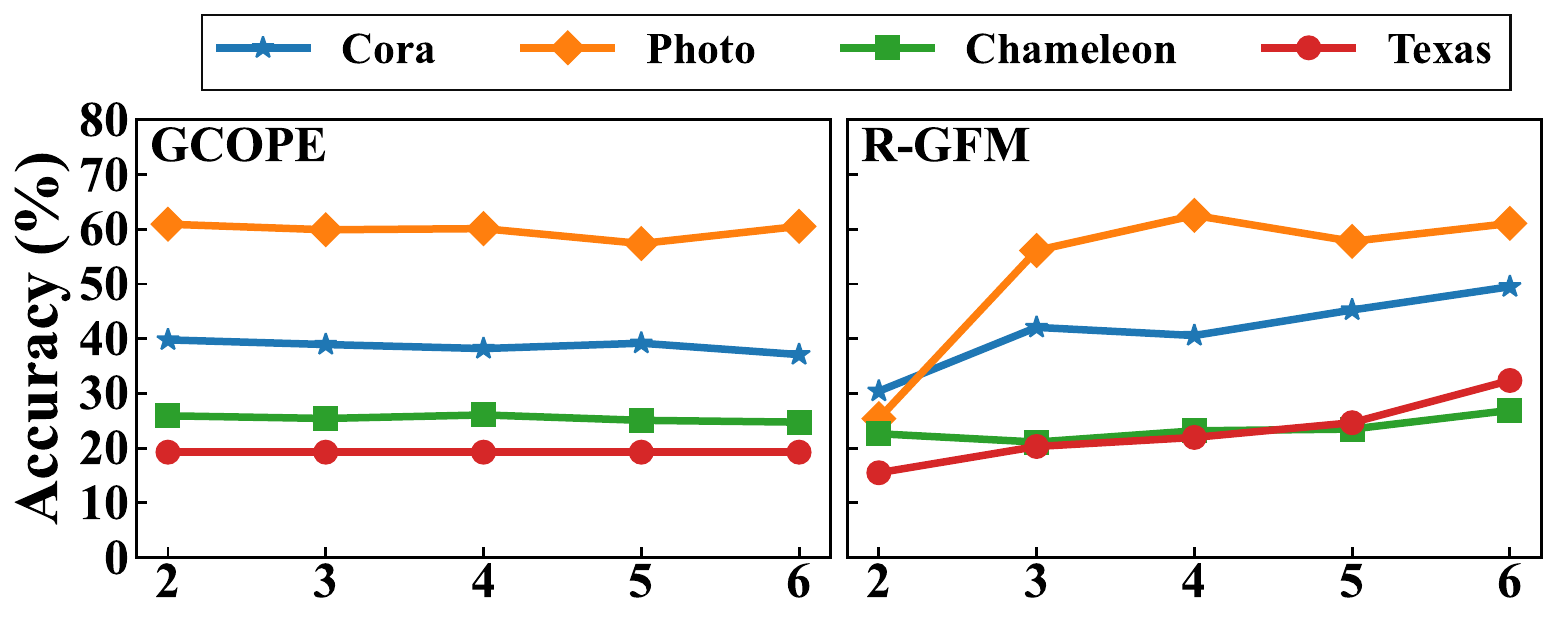}
    \caption{Performance scaling with hop number $K$: \ a naive $K$-hop extension of GCOPE vs. \methodname{} under matched settings.}
    \label{fig:k_scaling}
\end{figure}
\noindent{\textbf{Scaling with Graph Size.}} Next, we evaluate \methodname{} on larger graphs under practical budgets with results shown in Table~\ref{tab:lm_unified}. Under this setting, \method{SAMGPT} and \method{MDGFM} hit out-of-memory (OOM) due to RAM exhaustion, as expected given their full-graph training pipeline.
In contrast, \methodname{} leverages subgraph sampling, which enables it to scale well to larger graphs under the same memory budget. It also delivers the best transfer performance on the downstream datasets, demonstrating strong scalability in both efficiency and effectiveness.

\subsection{Hyperparameter Study(RQ5)}
\label{sec:hparam}
\begin{figure}[t]
  \centering
  \captionsetup[subfigure]{skip=-2pt}
  \makebox[\linewidth][c]{\hspace{0.8em}\includegraphics[width=0.98\linewidth]{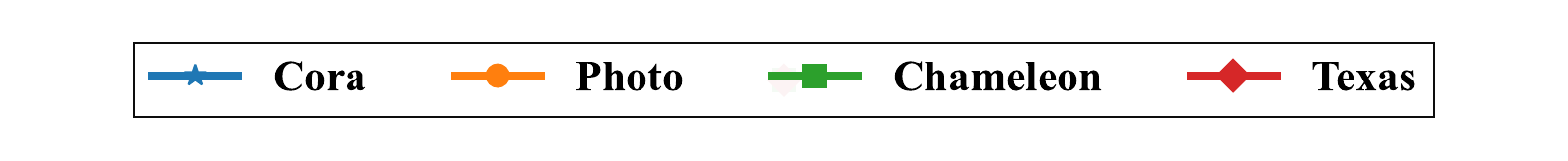}}\vspace{-0.3em}
  \begin{subfigure}[t]{0.495\linewidth}
    \centering
    \includegraphics[width=\linewidth]{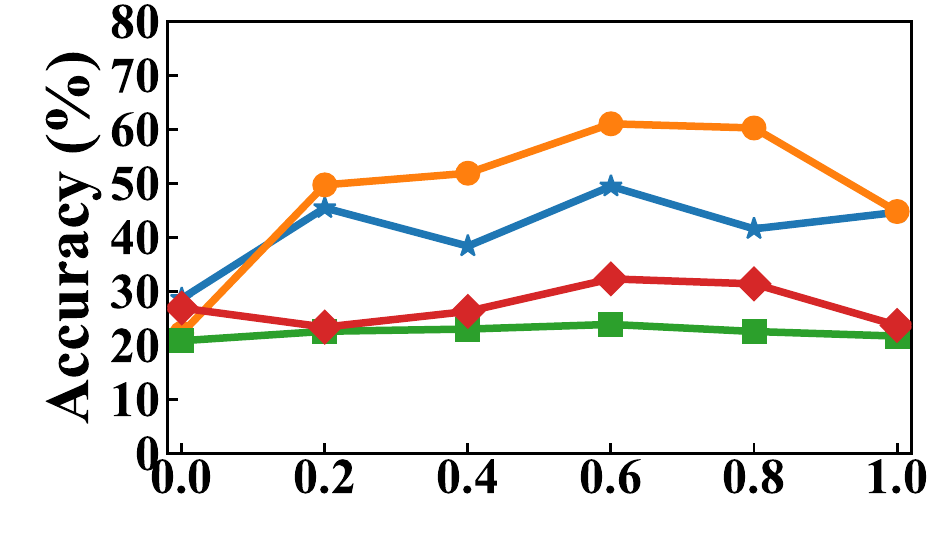}
    \caption{Sensitivity to $\mathcal{B}_{edge}$.}
    \label{fig:hparam_beta}
  \end{subfigure}\hspace{-0.4em}
  \begin{subfigure}[t]{0.495\linewidth}
    \centering
    \includegraphics[width=\linewidth]{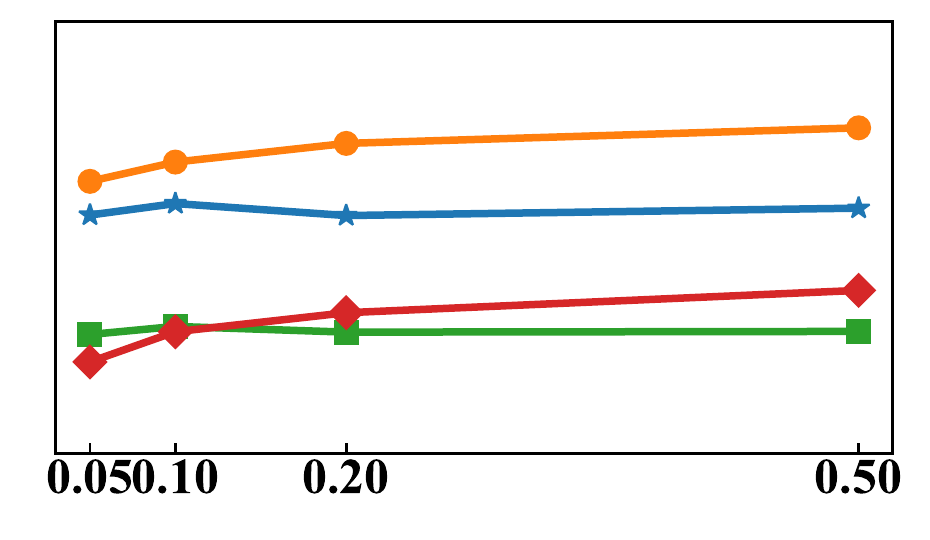}
    \caption{Sensitivity to temperature $\tau$.}
    \label{fig:hparam_temp}
  \end{subfigure}
  \caption{Hyperparameter sensitivity on four datasets. Left: varying $\mathcal{B}_{edge}$. Right: varying temperature $\tau$.}
  \label{fig:hparam}
\end{figure}
To answer \textbf{RQ5}, we conduct a hyperparameter study and report the 1-shot performance across datasets.

\paragraph{GoG Edge Budget $\mathcal{B}_{edge}$.}
Since the node number $K$ in GoG is dynamic, GoG does not have a fixed number of edges. We therefore control the edge budget by fixing the retained-edge ratio $\beta\in[0,1]$ ($\beta=\frac{2\mathcal{B}_{edge}}{K(K-1)}$). In Figure~~\ref{fig:hparam_beta}, performance peaks around $\beta\approx 0.6$. Therefore, we $\mathcal{B}_\mathrm{edge}=0.6\times \frac{K(K-1)}{2}$.
More analysis is in Appendix~\ref{app:proof_gog_edges_help_simple_1}.

\paragraph{Candidate Expert Number$\mathcal{M}$.}
\begin{figure}[t]
  \centering
  \captionsetup[subfigure]{skip=-2pt}
   \makebox[\linewidth][c]{\hspace{0.8em}\includegraphics[width=0.98\linewidth]{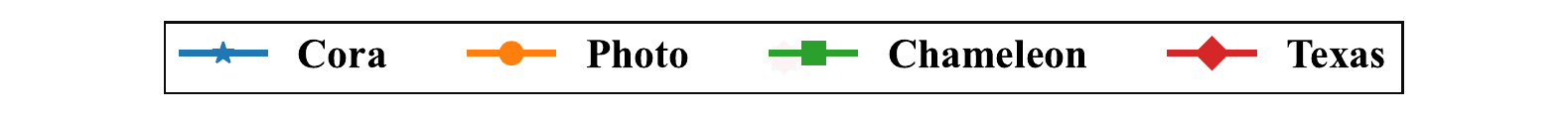}}

  \begin{subfigure}[t]{0.495\linewidth}
    \centering
    \includegraphics[width=\linewidth]{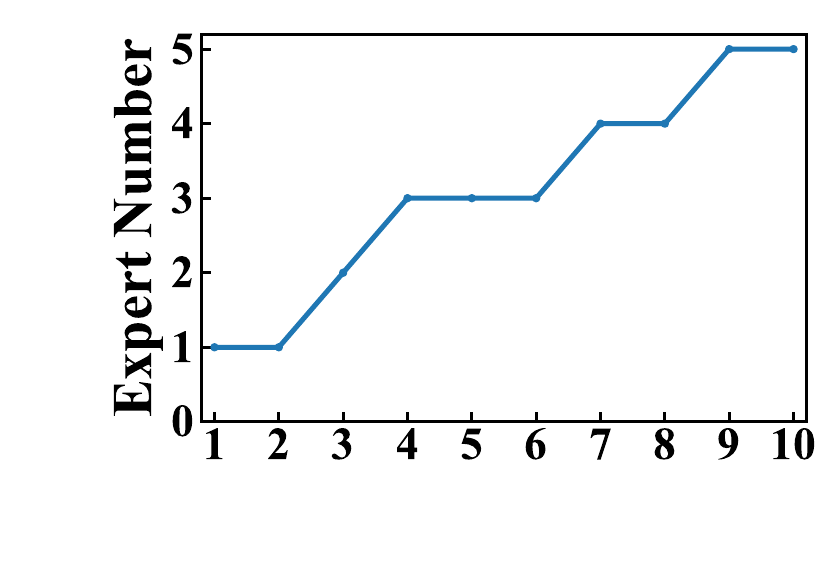}
    \caption{Experts $\mathcal{M}$ vs. Datasets $|\mathcal{D}|$.}
    \label{fig:cand_expert_vs_datasets}
  \end{subfigure}\hspace{-0.4em}
  \begin{subfigure}[t]{0.495\linewidth}
    \centering
    \includegraphics[width=\linewidth]{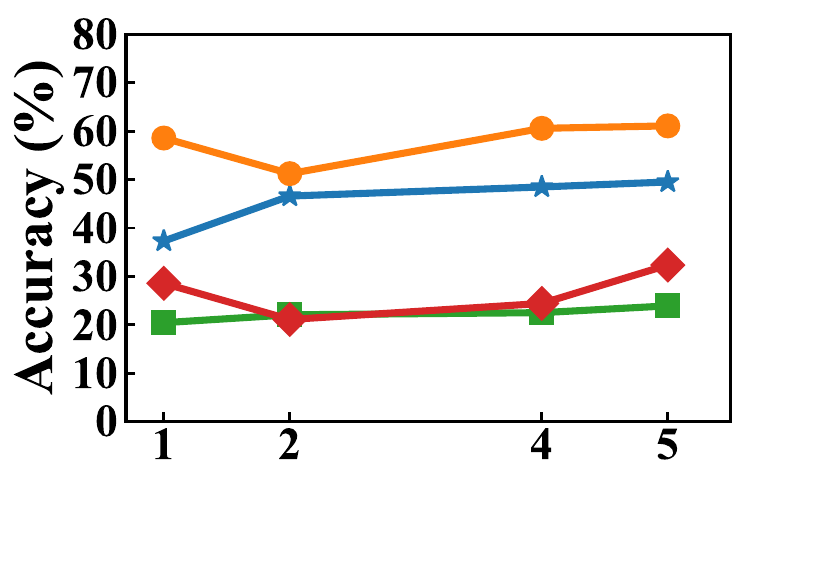}
    \caption{Accuracy vs. Experts $\mathcal{M}$.}
    \label{fig:acc_vs_experts}
  \end{subfigure}

  \caption{Effect of dataset scale and expert number. 
  }
  \label{fig:expert_scaling}
    \centering
    \includegraphics[width=\linewidth]{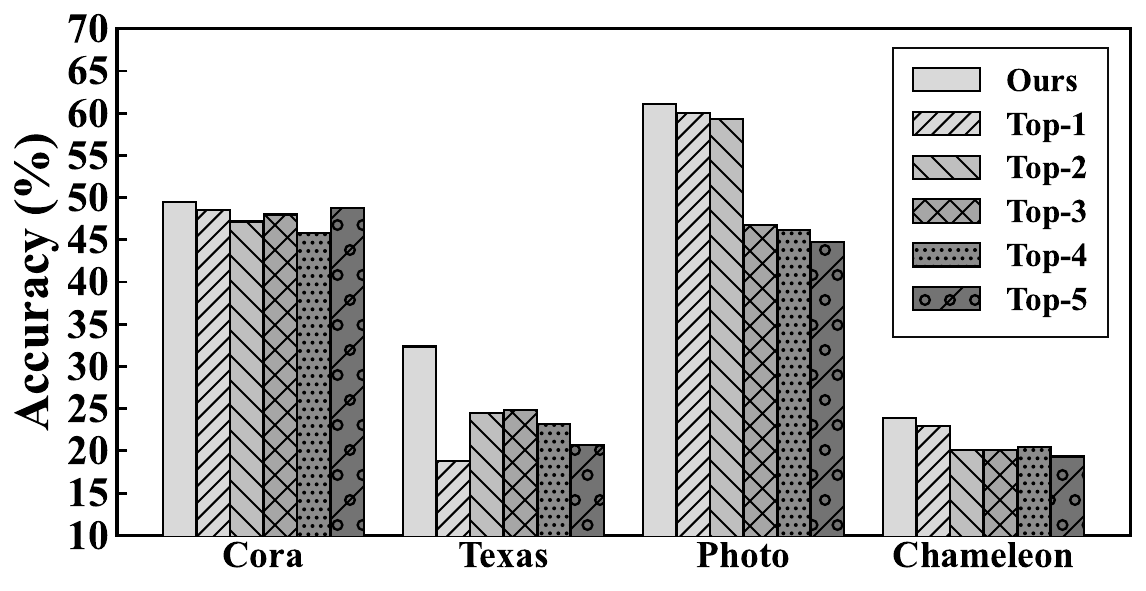}
    \caption{Effect of gate sparsity (Top-$m$) across datasets.}
    \label{fig:topm_grouped}
\end{figure}

Figure~\ref{fig:cand_expert_vs_datasets} shows that our dynamic selection strategy increases the candidate expert number as more datasets are introduced (added in the order: Wisconsin, Texas, Cornell, Cora, Citeseer, Pubmed, Computers, Photo, Chameleon, Squirrel), reflecting the growing diversity across tasks. In the full 10-dataset setting, we further vary $\mathcal{M}$ and observe the accuracy trends in Figure~\ref{fig:acc_vs_experts}; the results indicate that the best-performing $\mathcal{M}$ is dataset-dependent, and our strategy can automatically choose an appropriate expert set without manual tuning.

\noindent\textbf{Dynamic Gate Sparsity (Top-$m$).} We compare the our dynamic top-$m$ strategy with fixed Top-$m$ gating ($m\!\in\!\{1,\dots,5\}$). In Figure~\ref{fig:topm_grouped}, our method achieves the best accuracy across datasets, while fixed $m$ is more sensitive and no single choice generalizes well. 

\paragraph{Temperature $\tau$.}
Figure~\ref{fig:hparam_temp} sweeps the temperature in the contrastive objective.
We observe a reasonably stable region (e.g., $\tau\in[0.1,0.5]$), whereas overly small or overly large values can hurt performance on some datasets.
We set $\tau{=}0.2$ by default, which is competitive across datasets.
\section{Related Work}

One prevalent category of GFMs utilizes GNNs as their primary encoders, including masked graph modeling and contrastive pretraining methods (e.g., DGI~\cite{petar2019deep}, GraphCL~\cite{you2020graph}, and GraphMAE~\cite{hou2022graphmae}). More recent GNN-based GFM designs explicitly address distribution shift across graph domains by topology or structure alignment, prompt-based adaptation, or domain-aware pretraining~(e.g., GCOPE~\cite{zhao2024all}, RiemannGFM~\cite{sun2025riemanngfm}, SAMGPT~\cite{yu2025samgpt}).
However, these methods typically use fixed-hop sampling, which limits the ability to match the multi-scale structural information required by different tasks.

{A natural way to model multi-scale graph structure is to construct a Graph-of-Graphs (GoG), where each node in the higher-level graph corresponds to a graph or subgraph instance. However, existing GoG methods are mainly designed for static graph collections and are therefore not directly applicable to cross-domain graph foundation model pretraining. For example, G2GNN~\cite{wang2020gognn} constructs the GoG according to kernel similarity between graph instances, while ImbGNN~\cite{xu2024imbgnn} adopts similarity-aware random walks to extract informative subgraphs, which can introduce considerable computational overhead. In contrast, our GoG construction is dynamic and tailored to graph foundation models.}

Another line integrates GNN encoders with LLMs.
For example, OFA~\cite{liu2024ofa} trains a single model across domains and tasks via a unified prompting paradigm and language-based descriptions to bridge heterogeneous graph feature spaces.
GraphGPT~\citep{tang2024graphgpt} aligns LLMs with graph structural knowledge through graph instruction tuning and lightweight graph-text projection modules.
Such models treat the GNN encoder as a modular component, and performance depends critically on how well the GNN module captures structure information~\cite{kong2025gofa}.

R-GFM complements these GNN+LLM-based GFMs, as it can be seamlessly integrated as an advanced GNN encoder. This integration bolsters structural modeling capabilities while preserving the semantic reasoning abilities of LLMs.

\section{Conclusion}

We study graph foundation models in the challenging cross-domain setting, where graphs exhibit substantial heterogeneity in topology and geometry. 
To improve transferability, we propose \methodname{}, a Riemannian Graph-of-Graphs GFM framework that (1) constructs Graph-of-Graphs by adaptive-hop subgraphs based on similarity, and (2) adapts to diverse topologies via dynamic routing to a mixture of Riemannian experts.
Empirically, \methodname{} consistently improves cross-domain generalization across diverse benchmarks, highlighting that both geometry-aware specialization and multi-scale structural modeling are essential for robust graph transfer.

\newpage
\section*{Impact Statement}
Our work presents a new graph foundation model that aims to advance the field of Machine Learning, particularly graph machine learning.
This work is primarily foundational research. While our work may be applied in many real-world settings, we do not foresee any immediate negative societal impacts that are specific to this work.

\section*{Acknowledgements}
{This work was supported by the National Natural Science Foundation of China (NSFC) under Grant
62472400.}

\bibliography{example_paper}

@inproceedings{ChenK0H20,
  author       = {Ting Chen and
                  Simon Kornblith and
                  Mohammad Norouzi and
                  Geoffrey E. Hinton},
  title        = {A Simple Framework for Contrastive Learning of Visual Representations},
  booktitle    = {ICML},
  pages        = {1597--1607},
  year         = {2020},
}

@article{abs-2403-17404,
  author       = {Jinze Zhao and
                  Peihao Wang and
                  Zhangyang Wang},
  title        = {Generalization Error Analysis for Sparse Mixture-of-Experts: {A} Preliminary
                  Study},
  journal      = {CoRR},
  volume       = {abs/2403.17404},
  year         = {2024},
}

@inproceedings{Keriven22,
  author       = {Nicolas Keriven},
  title        = {Not too little, not too much: a theoretical analysis of graph (over)smoothing},
  booktitle    = {NeurIPS},
  year         = {2022},
}

@inproceedings{DongK23,
  author       = {Mingze Dong and
                  Yuval Kluger},
  title        = {Towards Understanding and Reducing Graph Structural Noise for GNNs},
  booktitle    = {ICML},
  pages        = {8202--8226},
  year         = {2023},
}

@inproceedings{LukovnikovF21,
  author       = {Denis Lukovnikov and
                  Asja Fischer},
  title        = {Improving Breadth-Wise Backpropagation in Graph Neural Networks Helps
                  Learning Long-Range Dependencies},
  booktitle    = {ICML},
  pages        = {7180--7191},
  year         = {2021},
}

@inproceedings{0002Y21,
  author       = {Uri Alon and
                  Eran Yahav},
  title        = {On the Bottleneck of Graph Neural Networks and its Practical Implications},
  booktitle    = {ICLR},
  year         = {2021},
}

@inproceedings{00020GTW22,
  author       = {Kai Han and
                  Yunhe Wang and
                  Jianyuan Guo and
                  Yehui Tang and
                  Enhua Wu},
  title        = {Vision {GNN:} An Image is Worth Graph of Nodes},
  booktitle    = {NeurIPS},
  year         = {2022},
}

@article{sen2008collective,
  author       = {Prithviraj Sen and
                  Galileo Namata and
                  Mustafa Bilgic and
                  Lise Getoor and
                  Brian Gallagher and
                  Tina Eliassi{-}Rad},
  title        = {Collective Classification in Network Data},
  journal      = {{AI} Mag.},
  number       = {3},
  pages        = {93--106},
  year         = {2008}
}

@inproceedings{namata2012query,
  author       = {Zhilin Yang and
                  William W. Cohen and
                  Ruslan Salakhutdinov},
  title        = {Revisiting Semi-Supervised Learning with Graph Embeddings},
  booktitle    = {{ICML}},
  pages        = {40--48},
  year         = {2016}
}

@article{pei2020geom,
  title        = {{Geom-GCN: Geometric Graph Convolutional Networks}},
  booktitle    = {8th International Conference on Learning Representations, {ICLR} 2020,
                  Addis Ababa, Ethiopia, April 26-30, 2020},
  author={Pei, Hongbin and Wei, Bingzhe and Chang, Kevin Chen-Chuan and Lei, Yu and Yang, Bo},
  journal={arXiv preprint arXiv:2002.05287},
  year={2020}
}

@inproceedings{mcauley2015image,
  title     = {{Image-based Recommendations on Styles and Substitutes}},
  author    = {McAuley, Julian and Targett, Christopher and Shi, Qinfeng and Van Den Hengel, Anton},
  booktitle    = {{SIGIR}},
  pages     = {43--52},
  year      = {2015}
}

@article{shchur2018pitfalls,
  title={Pitfalls of graph neural network evaluation},
  author={Shchur, Oleksandr and Mumme, Maximilian and Bojchevski, Aleksandar and G{\"u}nnemann, Stephan},
  journal={arXiv preprint arXiv:1811.05868},
  year={2018}
}

@article{velickovic2017graph,
author       = {Petar Velickovic and
                  Guillem Cucurull and
                  Arantxa Casanova and
                  Adriana Romero and
                  Pietro Li{\`{o}} and
                  Yoshua Bengio},
  title        = {Graph Attention Networks},
  journal      = {CoRR},
  year         = {2017}
}

@inproceedings{bo2021beyond,
  author       = {Deyu Bo and
                  Xiao Wang and
                  Chuan Shi and
                  Huawei Shen},
  title        = {Beyond Low-frequency Information in Graph Convolutional Networks},
  booktitle    = {{AAAI}},
  pages        = {3950--3957},
  year         = {2021}
}

@inproceedings{zhu2020beyond,
  author       = {Jiong Zhu and
                  Yujun Yan and
                  Lingxiao Zhao and
                  Mark Heimann and
                  Leman Akoglu and
                  Danai Koutra},
  title        = {Beyond Homophily in Graph Neural Networks: Current Limitations and
                  Effective Designs},
  booktitle    = {{NeurIPS}},
  year         = {2020}
}

@inproceedings{petar2019deep,
  author       = {Petar Velickovic and
                  William Fedus and
                  William L. Hamilton and
                  Pietro Li{\`{o}} and
                  Yoshua Bengio and
                  R. Devon Hjelm},
  title        = {Deep Graph Infomax},
  booktitle    = {{ICLR}},
  year         = {2019}
}

@inproceedings{you2020graph,
  author       = {Yuning You and
                  Tianlong Chen and
                  Yongduo Sui and
                  Ting Chen and
                  Zhangyang Wang and
                  Yang Shen},
  title        = {Graph Contrastive Learning with Augmentations},
  booktitle    = {NeurIPS},
  year         = {2020}
}

@inproceedings{xiao2023simple,
  author       = {Teng Xiao and
                  Huaisheng Zhu and
                  Zhengyu Chen and
                  Suhang Wang},
  title        = {{Simple and Asymmetric Graph Contrastive Learning without Augmentations}},
  booktitle    = {NeurIPS},
  year         = {2023}
}

@inproceedings{sun2022gppt,
  author       = {Mingchen Sun and
                  Kaixiong Zhou and
                  Xin He and
                  Ying Wang and
                  Xin Wang},
  title        = {{GPPT:} Graph Pre-training and Prompt Tuning to Generalize Graph Neural
                  Networks},
  booktitle    = {{KDD}},
  pages        = {1717--1727},
  year         = {2022}
}

@inproceedings{liu2023graphprompt,
  author       = {Zemin Liu and
                  Xingtong Yu and
                  Yuan Fang and
                  Xinming Zhang},
  title        = {{GraphPrompt: Unifying Pre-Training and Downstream Tasks for Graph
                  Neural Networks}},
  booktitle    = {{WWW}},
  pages        = {417--428},
  year         = {2023}
}

@article{yu2024generalized,
  author       = {Xingtong Yu and
                  Zhenghao Liu and
                  Yuan Fang and
                  Zemin Liu and
                  Sihong Chen and
                  Xinming Zhang},
  title        = {{Generalized Graph Prompt: Toward a Unification of Pre-Training and
                  Downstream Tasks on Graphs}},
  pages        = {6237--6250},
  year         = {2024}
}

@inproceedings{zhao2024all,
  author       = {Haihong Zhao and
                  Aochuan Chen and
                  Xiangguo Sun and
                  Hong Cheng and
                  Jia Li},
  title        = {{All in One and One for All: {A} Simple yet Effective Method towards
                  Cross-domain Graph Pretraining}},
  booktitle    = {{KDD}},
  pages        = {4443--4454},
  year         = {2024}
}

@inproceedings{yu2025samgpt,
  author       = {Xingtong Yu and
                  Zechuan Gong and
                  Chang Zhou and
                  Yuan Fang and
                  Hui Zhang},
  title        = {{SAMGPT: Text-free Graph Foundation Model for Multi-domain Pre-training
                  and Cross-domain Adaptation}},
  booktitle    = {{WWW}},
  pages        = {1142--1153},
  year         = {2025}
}

@inproceedings{wang2025multi,
  author       = {Shuo Wang and
                  Bokui Wang and
                  Zhixiang Shen and
                  Boyan Deng and
                  Zhao Kang},
  title        = {{Multi-Domain Graph Foundation Models: Robust Knowledge Transfer via
                  Topology Alignment}},
  booktitle    = {{ICML}},
  year         = {2025}
}

@inproceedings{sun2025riemanngfm,
  author       = {Li Sun and
                  Zhenhao Huang and
                  Suyang Zhou and
                  Qiqi Wan and
                  Hao Peng and
                  Philip S. Yu},
  title        = {{RiemannGFM}: Learning a Graph Foundation Model from Riemannian Geometry},
  booktitle    = {{WWW}},
  pages        = {1154--1165},
  year         = {2025}
}

@inproceedings{fang2023gpf,
  author       = {Taoran Fang and
                  Yunchao Zhang and
                  Yang Yang and
                  Chunping Wang and
                  Lei Chen},
  title        = {Universal Prompt Tuning for Graph Neural Networks},
  booktitle    = {NeurIPS},
  year         = {2023}
}

@inproceedings{wang2020gognn,
  author       = {Hanchen Wang and
                  Defu Lian and
                  Ying Zhang and
                  Lu Qin and
                  Xuemin Lin},
  title        = {GoGNN: Graph of Graphs Neural Network for Predicting Structured Entity
                  Interactions},
  booktitle    = {{IJCAI}},
  pages        = {1317--1323},
  year         = {2020}
}

@inproceedings{gu2019learn,
  author       = {Albert Gu and
                  Frederic Sala and
                  Beliz Gunel and
                  Christopher R{\'{e}}},
  title        = {Learning Mixed-Curvature Representations in Product Spaces},
  booktitle    = {{ICLR}},
  year         = {2019}
}

@inproceedings{zhang2018link,
  title={Link prediction based on graph neural networks},
  author={Zhang, Muhan and Chen, Yixin},
  pages={5165--5175},
  booktitle    = {NeuIPS},
  year={2018}
}

@inproceedings{zhen2023gog,
  author       = {Zhiwei Zhen and
                  Yuzhou Chen and
                  Murat Kantarcioglu and
                  Yulia R. Gel},
  title        = {Graph of Graphs: {A} New Knowledge Representation Mechanism for Graph
                  Learning (Student Abstract)},
  booktitle    = {{AAAI}},
  pages        = {16386--16387},
  year         = {2023}
}

@article{wu2018moleculenet,
  author       = {Zhenqin Wu and
                  Bharath Ramsundar and
                  Evan N. Feinberg and
                  Joseph Gomes and
                  Caleb Geniesse and
                  Aneesh S. Pappu and
                  Karl Leswing and
                  Vijay S. Pande},
  title        = {MoleculeNet: {A} Benchmark for Molecular Machine Learning},
  journal      = {CoRR},
  year         = {2017}
}

@inproceedings{gilmer2017neural,
  author       = {Justin Gilmer and
                  Samuel S. Schoenholz and
                  Patrick F. Riley and
                  Oriol Vinyals and
                  George E. Dahl},
  title        = {Neural Message Passing for Quantum Chemistry},
  booktitle    = {{ICML}},
  pages        = {1263--1272},
  year         = {2017}
}

@inproceedings{ying2018pinsage,
  author       = {Rex Ying and
                  Ruining He and
                  Kaifeng Chen and
                  Pong Eksombatchai and
                  William L. Hamilton and
                  Jure Leskovec},
  title        = {Graph Convolutional Neural Networks for Web-Scale Recommender Systems},
  booktitle    = {{KDD}},
  pages        = {974--983},
  year         = {2018}
}

@inproceedings{he2020lightgcn,
  author       = {Xiangnan He and
                  Kuan Deng and
                  Xiang Wang and
                  Yan Li and
                  Yong{-}Dong Zhang and
                  Meng Wang},
  title        = {LightGCN: Simplifying and Powering Graph Convolution Network for Recommendation},
  booktitle    = {{SIGIR}},
  pages        = {639--648},
  year         = {2020}
}

@inproceedings{schlichtkrull2018rgcn,
  author       = {Michael Sejr Schlichtkrull and
                  Thomas N. Kipf and
                  Peter Bloem and
                  Rianne van den Berg and
                  Ivan Titov and
                  Max Welling},
  title        = {Modeling Relational Data with Graph Convolutional Networks},
  booktitle    = {{ESWC}},
  pages        = {593--607},
  year         = {2018}
}

@inproceedings{li2018dcrnn,
  author       = {Yaguang Li and
                  Rose Yu and
                  Cyrus Shahabi and
                  Yan Liu},
  title        = {Diffusion Convolutional Recurrent Neural Network: Data-Driven Traffic
                  Forecasting},
  booktitle    = {{ICLR}},
  year         = {2018}
}

@inproceedings{hou2022graphmae,
  author       = {Zhenyu Hou and
                  Xiao Liu and
                  Yukuo Cen and
                  Yuxiao Dong and
                  Hongxia Yang and
                  Chunjie Wang and
                  Jie Tang},
  title        = {GraphMAE: Self-Supervised Masked Graph Autoencoders},
  booktitle    = {{KDD}},
  pages        = {594--604},
  year         = {2022}
}

@article{wang2025gfm_survey,
  author       = {Zehong Wang and
                  Zheyuan Liu and
                  Tianyi Ma and
                  Jiazheng Li and
                  Zheyuan Zhang and
                  Xingbo Fu and
                  Yiyang Li and
                  Zhengqing Yuan and
                  Wei Song and
                  Yijun Ma and
                  Qingkai Zeng and
                  Xiusi Chen and
                  Jianan Zhao and
                  Jundong Li and
                  Meng Jiang and
                  Pietro Lio and
                  Nitesh V. Chawla and
                  Chuxu Zhang and
                  Yanfang Ye},
  title        = {Graph Foundation Models: {A} Comprehensive Survey},
  journal      = {CoRR},
  year         = {2025}
}

@article{kipf2017gcn,
  author       = {Thomas N. Kipf and
                  Max Welling},
  title        = {Semi-Supervised Classification with Graph Convolutional Networks},
  journal      = {CoRR},
  year         = {2016}
}

@article{cavallo2022ncs,
  author       = {Andrea Cavallo and
                  Claas Grohnfeldt and
                  Michele Russo and
                  Giulio Lovisotto and
                  Luca Vassio},
  title        = {2-hop Neighbor Class Similarity {(2NCS):} {A} graph structural metric
                  indicative of graph neural network performance},
  journal      = {CoRR},
  year         = {2022}
}

@inproceedings{liu2021pcgnn,
  author       = {Yang Liu and
                  Xiang Ao and
                  Zidi Qin and
                  Jianfeng Chi and
                  Jinghua Feng and
                  Hao Yang and
                  Qing He},
  title        = {Pick and Choose: {A} GNN-based Imbalanced Learning Approach for Fraud
                  Detection},
  booktitle    = {{WWW}},
  pages        = {3168--3177},
  year         = {2021}
}

@article{xia2024anygraph,
  author       = {Lianghao Xia and
                  Chao Huang},
  title        = {AnyGraph: Graph Foundation Model in the Wild},
  journal      = {CoRR},
  year         = {2024}
}

@inproceedings{egressy2024directedmultigraphs,
  author       = {B{\'{e}}ni Egressy and
                  Luc von Niederh{\"{a}}usern and
                  Jovan Blanusa and
                  Erik R. Altman and
                  Roger Wattenhofer and
                  Kubilay Atasu},
  title        = {Provably Powerful Graph Neural Networks for Directed Multigraphs},
  booktitle    = {{AAAI}},
  pages        = {11838--11846},
  year         = {2024}
}

@inproceedings{li2024noise,
  author       = {Shiyuan Li and
                  Yixin Liu and
                  Qingfeng Chen and
                  Geoffrey I. Webb and
                  Shirui Pan},
  title        = {Noise-Resilient Unsupervised Graph Representation Learning via Multi-Hop
                  Feature Quality Estimation},
  booktitle    = {{CIKM}},
  pages        = {1255--1265},
  year         = {2024}
}

@inproceedings{guo2025graphmore,
  author       = {Zihao Guo and
                  Qingyun Sun and
                  Haonan Yuan and
                  Xingcheng Fu and
                  Min Zhou and
                  Yisen Gao and
                  Jianxin Li},
  title        = {GraphMoRE: Mitigating Topological Heterogeneity via Mixture of Riemannian
                  Experts},
  booktitle    = {AAAI},
  pages        = {11754--11762},
  year         = {2025}
}

@article{koltchinskii2010rademacher,
  author       = {Vladimir Koltchinskii},
  title        = {Rademacher Complexities and Bounding the Excess Risk in Active Learning},
  journal      = {J. Mach. Learn. Res.},
  pages        = {2457--2485},
  year         = {2010}
}

@article{hu2017hetero,
  author       = {Xiaojun Hu and
                  Loet Leydesdorff and
                  Ronald Rousseau},
  title        = {Heterogeneity in an undirected network: Definition and measurement},
  pages        = {669--682},
  year         = {2017}
}

@inproceedings{shazeer2017outrageously,
  author       = {Noam Shazeer and
                  Azalia Mirhoseini and
                  Krzysztof Maziarz and
                  Andy Davis and
                  Quoc V. Le and
                  Geoffrey E. Hinton and
                  Jeff Dean},
  title        = {Outrageously Large Neural Networks: The Sparsely-Gated Mixture-of-Experts
                  Layer},
  booktitle    = {{ICLR}},
  year         = {2017}
}

@inproceedings{liu2024ofa,
  author       = {Hao Liu and
                  Jiarui Feng and
                  Lecheng Kong and
                  Ningyue Liang and
                  Dacheng Tao and
                  Yixin Chen and
                  Muhan Zhang},
  title        = {One For All: Towards Training One Graph Model For All Classification
                  Tasks},
  booktitle    = {{ICLR}},
  year         = {2024}
}

@inproceedings{tang2024graphgpt,
  author       = {Jiabin Tang and
                  Yuhao Yang and
                  Wei Wei and
                  Lei Shi and
                  Lixin Su and
                  Suqi Cheng and
                  Dawei Yin and
                  Chao Huang},
  title        = {GraphGPT: Graph Instruction Tuning for Large Language Models},
  booktitle    = {{SIGIR}},
  pages        = {491--500},
  publisher    = {{ACM}},
  year         = {2024}
}

@article{wang2020ogb,
  author       = {Kuansan Wang and
                  Zhihong Shen and
                  Chiyuan Huang and
                  Chieh{-}Han Wu and
                  Yuxiao Dong and
                  Anshul Kanakia},
  title        = {Microsoft Academic Graph: When experts are not enough},
  journal      = {Quant. Sci. Stud.},
  pages        = {396--413},
  year         = {2020}
}

@article{he2023explanation,
  author       = {Xiaoxin He and
                  Xavier Bresson and
                  Thomas Laurent and
                  Bryan Hooi},
  title        = {Explanations as Features: LLM-Based Features for Text-Attributed Graphs},
  journal      = {CoRR},
  volume       = {abs/2305.19523},
  year         = {2023}
}

@inproceedings{huang2024reddit,
  author       = {Xuanwen Huang and
                  Kaiqiao Han and
                  Yang Yang and
                  Dezheng Bao and
                  Quanjin Tao and
                  Ziwei Chai and
                  Qi Zhu},
  title        = {Can {GNN} be Good Adapter for LLMs?},
  booktitle    = {{WWW}},
  pages        = {893--904},
  year         = {2024}
}

@inproceedings{yan2023ele,
  author       = {Hao Yan and
                  Chaozhuo Li and
                  Ruosong Long and
                  Chao Yan and
                  Jianan Zhao and
                  Wenwen Zhuang and
                  Jun Yin and
                  Peiyan Zhang and
                  Weihao Han and
                  Hao Sun and
                  Weiwei Deng and
                  Qi Zhang and
                  Lichao Sun and
                  Xing Xie and
                  Senzhang Wang},
  title        = {A Comprehensive Study on Text-attributed Graphs: Benchmarking and
                  Rethinking},
  booktitle    = {NeurIPS},
  year         = {2023}
}

@inproceedings{reimers-2020-multilingual-sentence-bert,
  author       = {Nils Reimers and
                  Iryna Gurevych},
  title        = {Making Monolingual Sentence Embeddings Multilingual using Knowledge
                  Distillation},
  booktitle    = {{EMNLP}},
  pages        = {4512--4525},
  year         = {2020}
}

@inproceedings{platonov2023critical,
  author       = {Oleg Platonov and
                  Denis Kuznedelev and
                  Michael Diskin and
                  Artem Babenko and
                  Liudmila Prokhorenkova},
  title        = {A critical look at the evaluation of GNNs under heterophily: Are we
                  really making progress?},
  booktitle    = {{ICLR}},
  year         = {2023}
}

@inproceedings{yu2025pronog,
  author       = {Xingtong Yu and
                  Jie Zhang and
                  Yuan Fang and
                  Renhe Jiang},
  title        = {Non-Homophilic Graph Pre-Training and Prompt Learning},
  booktitle    = {{KDD}},
  pages        = {1844--1854},
  year         = {2025}
}

@inproceedings{kong2025gofa,
  author       = {Lecheng Kong and
                  Jiarui Feng and
                  Hao Liu and
                  Chengsong Huang and
                  Jiaxin Huang and
                  Yixin Chen and
                  Muhan Zhang},
  title        = {{GOFA:} {A} Generative One-For-All Model for Joint Graph Language
                  Modeling},
  booktitle    = {{ICLR}},
  year         = {2025}
}

@article{huang2024hardneedmore,
  author       = {Quzhe Huang and
                  Zhenwei An and
                  Nan Zhuang and
                  Mingxu Tao and
                  Chen Zhang and
                  Yang Jin and
                  Kun Xu and
                  Liwei Chen and
                  Songfang Huang and
                  Yansong Feng},
  title        = {Harder Tasks Need More Experts: Dynamic Routing in MoE Models},
  journal      = {CoRR},
  year         = {2024}
}

@inproceedings{wang2025gpm,
  author       = {Zehong Wang and
                  Zheyuan Zhang and
                  Tianyi Ma and
                  Nitesh V. Chawla and
                  Chuxu Zhang and
                  Yanfang Ye},
  title        = {Beyond Message Passing: Neural Graph Pattern Machine},
  booktitle    = {{ICML}},
  series       = {Proceedings of Machine Learning Research},
  publisher    = {{PMLR} / OpenReview.net},
  year         = {2025}
}

@inproceedings{wang2025g2pm,
   title={Generative Graph Pattern Machine},
   author={Zehong Wang and Zheyuan Zhang and Tianyi Ma and Chuxu Zhang and Yanfang Ye},
   booktitle={NeurIPS},
   year={2025}
}

@inproceedings{sami2019mixhop,
  author       = {Sami Abu{-}El{-}Haija and
                  Amol Kapoor and
                  Nazanin Alipourfard and
                  Kristina Lerman and
                  Hrayr Harutyunyan and
                  Greg Ver Steeg and
                  Aram Galstyan},
  title        = {MixHop: Higher-Order Graph Convolutional Architectures via Sparsified
                  Neighborhood Mixing},
  booktitle    = {{ICML}},
  publisher    = {{PMLR}},
  year         = {2019}
}

@inproceedings{hu2020ogb,
  author       = {Weihua Hu and
                  Matthias Fey and
                  Marinka Zitnik and
                  Yuxiao Dong and
                  Hongyu Ren and
                  Bowen Liu and
                  Michele Catasta and
                  Jure Leskovec},
  title        = {Open Graph Benchmark: Datasets for Machine Learning on Graphs},
  booktitle    = {NeurIPS},
  year         = {2020}
}

@article{gaulton2012chembl,
  author       = {Anna Gaulton and
                  Louisa J. Bellis and
                  A. Patr{\'{\i}}cia Bento and
                  Jon Chambers and
                  Mark Davies and
                  Anne Hersey and
                  Yvonne Light and
                  Shaun McGlinchey and
                  David Michalovich and
                  Bissan Al{-}Lazikani and
                  John P. Overington},
  title        = {ChEMBL: a large-scale bioactivity database for drug discovery},
  year         = {2012}
}

@inproceedings{xu2024imbgnn,
  author       = {Wei Xu and
                  Pengkun Wang and
                  Zhe Zhao and
                  Binwu Wang and
                  Xu Wang and
                  Yang Wang},
  title        = {When Imbalance Meets Imbalance: Structure-driven Learning for Imbalanced
                  Graph Classification},
  booktitle    = {{WWW}},
  year         = {2024}
}

@inproceedings{muandet2013domain,
  author       = {Krikamol Muandet and
                  David Balduzzi and
                  Bernhard Sch{\"{o}}lkopf},
  title        = {Domain Generalization via Invariant Feature Representation},
  booktitle    = {{ICML}},
  pages        = {10--18},
  year         = {2013}
}

@inproceedings{zhang2019bridge,
  author       = {Yuchen Zhang and
                  Tianle Liu and
                  Mingsheng Long and
                  Michael I. Jordan},
  title        = {Bridging Theory and Algorithm for Domain Adaptation},
  booktitle    = {{ICML}},
  pages        = {7404--7413},
  year         = {2019}
}

@article{ganin2016domain,
  author       = {Yaroslav Ganin and
                  Evgeniya Ustinova and
                  Hana Ajakan and
                  Pascal Germain and
                  Hugo Larochelle and
                  Fran{\c{c}}ois Laviolette and
                  Mario Marchand and
                  Victor S. Lempitsky},
  title        = {Domain-Adversarial Training of Neural Networks},
  journal      = {JMLR},
  year         = {2016}
}

\bibliographystyle{icml2026}

\newpage
\appendix
\onecolumn
\section*{Appendix Overview}
In the Appendix, we provide additional details organized as follows:
\begin{enumerate}
    \item Appendix \ref{app:notaion}: Notations.
    \item Appendix \ref{app:complexity}: Time Complexity Analysis.
    \item Appendix \ref{app:dataset}: Datasets Details.
    \item Appendix \ref{app:additional}: Additional Experiments.
    \item Appendix \ref{app:experimental_details}: Experimental Details.
    \item Appendix \ref{app:more_related_work}: More Related Work Analysis.
    \item Appendix \ref{app:proof_noise_analysis}: Proof of Theorem~\ref{thm:theorem1} (Noise Analysis).
    \item Appendix \ref{app:proof_gog_edges_help_simple_1}: Proof of Theorem~\ref{thm:gog_edges_help_simple_1} (Effectiveness of GoG Edge Construction).
    \item Appendix \ref{app:proof_dynamic_expert}: Proof of Theorem~\ref{thm:dynamic_expert} (Excess-risk Upper Bound Analysis).
    \item Appendix \ref{app:tight}: Proof of Theorem~\ref{thm:main_tight} (Domain Generalization Error Bound Analysis).
\end{enumerate}

\section{Notations}
\label{app:notaion}
\renewcommand{\arraystretch}{1.08}

\begin{longtable}{>{\raggedright\arraybackslash}p{0.26\textwidth} | >{\raggedright\arraybackslash}p{0.5\textwidth}}
\caption{Notations.}\label{tab:notations}\\
\toprule
\textbf{Notation} & \textbf{Description} \\
\midrule
\endfirsthead

\toprule
\textbf{Notation} & \textbf{Description} \\
\midrule
\endhead

\midrule
\multicolumn{2}{r}{\small Continued on next page.}\\
\endfoot

\bottomrule
\endlastfoot

\(G=(V,E,X_V)\) & An attributed graph with node set \(V\), edge set \(E\), and node features \(X_V\). \\
\(\{G_i\}\) & A set/pool of graphs from different domains. \\
\(D_i\) & The domain associated with graph/dataset \(i\). \\
\(D=\{D_i\}\) & The set of all domains. \\
\(\mathrm{dist}(u,v)\) & Shortest-path distance between nodes \(u\) and \(v\) in \(G\). \\
\(N_k(u)\) & \(k\)-hop neighbors of node \(u\): \(N_k(u)=\{v\in V:\mathrm{dist}(u,v)\le k\}\). \\

\midrule

\(G_{\text{GoG}}=(V_{\text{GoG}},E_{\text{GoG}},X_G)\) & Graph-of-graphs (GoG) with GoG nodes \(V_{\text{GoG}}\), edges \(E_{\text{GoG}}\), and features \(X_G\). \\
\(v\) & A center (training) node in the original graph, used to construct a center-local GoG. \\
\(\{G^{(i)}_v\}_{i=1}^{K}\) & Adaptive-hop sampled subgraphs (GoG nodes) from hop \(1\) to hop \(K\) around center \(v\). \\
\(K\) & Hop number; also equals the number of GoG nodes for each center-local GoG. \\
\(B_{\mathrm{GPU}}\) & GPU memory budget used to determine the maximal feasible \(K\). \\
\(X_{\text{sub}}\in\mathbb{R}^{K\times d}\) & Subgraph embedding matrix (each row is a subgraph embedding). \\
\(S\in\mathbb{R}^{K\times K}\) & Subgraph similarity matrix (cosine; implemented as \(S=X_{\text{sub}}X_{\text{sub}}^{\top}\)). \\
\(\mathrm{Prob}(i,j)\) & Sampling probability of GoG edge \((i,j)\) after normalizing similarity scores. \\
\(B_{\mathrm{edge}}\) & Edge budget: number of GoG edges to sample (before symmetrization). \\
\(E_{\text{GoG}}\) & Sampled GoG edge set under budget \(B_{\mathrm{edge}}\). \\
\(e\) & Euler's number (used in the normalization for \(\mathrm{Prob}\)). \\

\midrule

\(x\) & (Random) subgraph embedding viewed as a noisy variable. \\
\(\mu\) & Noise-free subgraph embedding (mean of \(x\)). \\
\(\sigma\) & Embedding noise (std); smaller \(\sigma\) indicates higher quality. \\
\(\sigma_F\) & Noise when sampling from a fixed hop. \\
\(\sigma_V\) & Noise when sampling from various hops (adaptive-hop strategy). \\
\(\|\cdot\|_2\) & \(\ell_2\) norm. \\
\(e_{none}\) & Expected squared embedding error under a GoG without edges. \\
\(e_w\) & Expected squared embedding error with GoG edges. \\
\(e_{full}\) & Expected squared embedding error under a fully connected GoG. \\
\(\hat{\mu}_{none}\) & Subgraph-embedding estimators with constructed a GoG without edges. \\
\(\hat{\mu}_w\) & Subgraph-embedding estimators with constructed GoG edges. \\
\(\hat{\mu}_{full}\) & Subgraph-embedding estimators with fully connected GoG edges.\\

\midrule

\(\kappa\) & Constant curvature of a manifold (e.g., \(\kappa<0\) Hyperbolic, \(\kappa=0\) Euclidean, \(\kappa>0\) Hyperspherical). \\
\(\mathcal{K}\) & Candidate curvature set (selected based on structure statistics). \\
\(\mathcal{M}\) & Candidate Riemannian expert set induced by \(\mathcal{K}\). \\
\(\psi\) & Number of candidate experts (size of \(\mathcal{M}\)). \\
\(\deg(D_i)\) & Node-degree values of dataset/graph \(D_i\). \\
\(\mathrm{CV}(D_i)\) & Coefficient of variation of degree distribution: \(\mathrm{std}(\deg(D_i))/\mathrm{mean}(\deg(D_i))\). \\

\midrule

\(c\) & Center index (each center induces a GoG). \\
\(k\in\{1,\dots,K\}\) & GoG node index (corresponding to hop \(k\)). \\
\(j\in\{1,\dots,\psi\}\) & Expert index. \\
\(z^{(j)}_{c,k}\) & Router logits for assigning GoG node \((c,k)\) to expert \(j\). \\
\(\alpha^{(j)}_{c,k}\) & Routing weights (softmax over experts) for GoG node \((c,k)\). \\
\(\tau\) & Temperature in the softmax routing distribution. \\
\(\mathrm{conf}_{c,k}\) & Confidence score, defined as \(\max_j \alpha^{(j)}_{c,k}\). \\
\(m\) & Number of activated experts in Top-\(m\) routing (dynamic). \\
\(\odot\) & Element-wise multiplication (masking non-Top-\(m\) experts). \\
\(\mathbf{1}[\cdot]\) & Indicator function (e.g., \(\mathbf{1}[\mathrm{rank}(\alpha)\le m]\)). \\
\(\tilde{\alpha}^{(j)}_{c,k}\) & Masked-and-renormalized routing weights after Top-\(m\). \\
\(u_j\) & Normalized usage / selection frequency of expert \(j\) (for balancing). \\
\(\mathcal{L}_{\text{lb}}\) & Load-balancing loss to mitigate expert collapse. \\

\end{longtable}
\clearpage
\section{Time Complexity Analysis}
\label{app:complexity}
\begin{algorithm}
\caption{Dynamical Candidate Expert set Determination}
\label{alg:candidate_set}
\begin{algorithmic}[t]
\STATE \textbf{Input:} dataset pool $\{\mathcal{D}_1,\ldots,\mathcal{D}_i\}$ with full graphs $\{G_{\mathcal{D}_j}\}_{j=1}^i$ (self-loops added), scale $\zeta$, predefined ordered curvature list $\mathcal{K}_0$
\STATE \textbf{Output:} candidate curvature set $\mathcal{K}_{\mathcal{D}_i}$, candidate expert set $\mathcal{M}_{\mathcal{D}_i}$, expert count $M_{\mathcal{D}_i}$
\FOR{$j=1$ \textbf{to} $i$}
    \STATE compute $CV(\mathcal{D}_j)=\frac{\mathrm{std}(\deg(\mathcal{D}_j))}{\mathrm{mean}(\deg(\mathcal{D}_j))}$ \hfill (Eq.~(5))
\ENDFOR
\STATE $\mu_i \leftarrow \mathrm{mean}\big(\{CV(\mathcal{D}_1),\ldots,CV(\mathcal{D}_i)\}\big)$,\ \ 
$\sigma_i \leftarrow \mathrm{std}\big(\{CV(\mathcal{D}_1),\ldots,CV(\mathcal{D}_i)\}\big)$
\STATE $S_i \leftarrow \mathrm{normalize}(\mu_i+\sigma_i)$ \hfill (Eq.~(6))
\STATE $M_{\mathcal{D}_i} \leftarrow \min\big(|\mathcal{K}_0|,\ \lceil S_i\cdot\zeta\rceil\big)$
\STATE $\mathcal{K}_{\mathcal{D}_i} \leftarrow \mathcal{K}_0[1:M_{\mathcal{D}_i}]$ \hfill (prefix of ordered list)
\STATE $\mathcal{M}_{\mathcal{D}_i} \leftarrow \{\mathrm{GNN}_{\kappa}\mid \kappa\in\mathcal{K}_{\mathcal{D}_i}\}$
\STATE \textbf{return} $\mathcal{K}_{\mathcal{D}_i},\mathcal{M}_{\mathcal{D}_i},M_{\mathcal{D}_i}$
\end{algorithmic}
\end{algorithm}

\begin{algorithm}[H]
\caption{Subgraph Similarity-based GoG Edge Construction}
\label{alg:build_gog}
\begin{algorithmic}[t]
\STATE \textbf{Input:} hop-wise embeddings $\{\mathbf{x}_{c,k}\}_{k=1}^{K}$, edge budget $\mathcal{B}_{edge}$
\STATE \textbf{Output:} center-local GoG $\mathcal{G}_c=(\mathcal{V}_c,\mathcal{E}_c)$ and node features $\mathbf{X}_c$
\STATE $\mathcal{V}_c \leftarrow \{1,\ldots,K\}$
\STATE $\mathbf{X}_c \leftarrow [\mathbf{x}_{c,1};\ldots;\mathbf{x}_{c,K}] \in \mathbb{R}^{K\times d}$
\STATE row-normalize $\mathbf{X}_c$ to obtain $\bar{\mathbf{X}}_c$
\STATE $\mathbf{S}_c \leftarrow \bar{\mathbf{X}}_c(\bar{\mathbf{X}}_c)^\top \in \mathbb{R}^{K\times K}$ \hfill (Eq.~(1))
\STATE $\mathcal{U}_c \leftarrow \{(i,j)\mid 1\le i<j\le K\}$
\STATE $q_c(i,j) \leftarrow \frac{{\mathrm{e}}^{\mathbf{S}[i,j]}}{\sum_u\sum_ve^{\mathbf{S}[u,v]}}$, \ \ $\forall (i,j)\in\mathcal{U}_c$ \hfill (Eq.~(2))
\STATE sample $\tilde{\mathcal{E}}_c \leftarrow \mathrm{SampleWOR}(q_c;\ B^{\text{gog}}_{e})$ \hfill (Eq.~(3))
\STATE $\mathcal{E}_c \leftarrow \{(i,j),(j,i)\mid (i,j)\in \tilde{\mathcal{E}}_c\}$ \ \
\STATE \textbf{return} $\mathcal{G}_c=(\mathcal{V}_c,\mathcal{E}_c)$ and $\mathbf{X}_c$
\end{algorithmic}
\end{algorithm}

\begin{algorithm}
\caption{Confidence-aware Dynamic Top-$m$ Routing}
\label{alg:dynamic_routing}
\begin{algorithmic}[t]
\STATE \textbf{Input:} GoG $\mathcal{G}_c=(\mathcal{V}_c,\mathcal{E}_c)$, initial hop features $\mathbf{X}_c=\{\mathbf{x}_{c,k}\}_{k=1}^{K}$, candidate experts $\mathcal{M}_{\mathcal{D}}=\{\mathrm{GNN}_{\kappa}\mid \kappa\in\mathcal{K}_{\mathcal{D}}\}$ with $M_{\mathcal{D}}=|\mathcal{K}_{\mathcal{D}}|$, shortlist $m_{\text{start}}$, active count $m$, minimum $m_{\min}$, temperature $\tau$
\STATE \textbf{Output:} refined hop embeddings $\{\mathbf{h}_{c,k}\}_{k=1}^{K}$, load-balancing loss $\mathcal{L}_{\text{lb}}$
\FOR{$k=1$ \textbf{to} $K$}
    \STATE $\mathbf{z}_{c,k} \leftarrow g(\mathcal{G}_c,\mathbf{x}_{c,k}) \in \mathbb{R}^{M_{\mathcal{D}}}$ \hfill (router logits)
    \STATE $\boldsymbol{\alpha}_{c,k}\leftarrow \mathrm{softmax}(\mathbf{z}_{c,k}/\tau) \in \mathbb{R}^{M_{\mathcal{D}}}$
\ENDFOR
\STATE $\bar{\boldsymbol{\alpha}}_{c}\leftarrow \frac{1}{K}\sum_{k=1}^{K}\boldsymbol{\alpha}_{c,k}$
\STATE $\mathcal{I}^{\text{top}}_{c}\leftarrow \mathrm{TopK}(\bar{\boldsymbol{\alpha}}_c,\ m_{\text{start}})$ \hfill (indices in $[1,M_{\mathcal{D}}]$)
\STATE $\tilde{\boldsymbol{\alpha}}_{c}\leftarrow \mathrm{Normalize}\!\Big(\mathrm{softmax}(\bar{\boldsymbol{\alpha}}_c|_{\mathcal{I}^{\text{top}}_c})\odot \mathbf{1}[\mathrm{rank}\le m]\Big)$
\STATE \textbf{compute only shortlisted experts:} $\mathbf{H}_c^{(j)} \leftarrow \mathrm{GNN}_{\kappa_j}(\mathcal{G}_c,\mathbf{X}_c)$ for $j\in \mathcal{I}^{\text{top}}_c$
\FOR{$k=1$ \textbf{to} $K$}
    \STATE $\mathbf{h}_{c,k}\leftarrow \sum_{j\in\mathcal{I}^{\text{top}}_c}\tilde{\alpha}_{c}^{(j)}\,\mathbf{H}_c^{(j)}[k]$
\ENDFOR
\STATE $u_j\leftarrow \mathbb{E}_{c,k}[\alpha_{c,k}^{(j)}]$, \ \ $\mathcal{L}_{\text{lb}}\leftarrow \|\mathbf{u}-\tfrac{1}{M_{\mathcal{D}}}\mathbf{1}\|_2^2$
\STATE update $m\leftarrow \max(m_{\min},\, m-\Delta(\mathrm{conf}))$ \hfill
\STATE \textbf{return}  $\{\mathbf{h}_{c,k}\}_{k=1}^{K},\mathcal{L}_{\text{lb}}$
\end{algorithmic}
\end{algorithm}

\paragraph{Dynamical Candidate Expert set Determination (Algorithm~\ref{alg:candidate_set}).}
Computing node degrees on the full graph $G_{\mathcal{D}}=(V_{\mathcal{D}},E_{\mathcal{D}})$ takes $\mathcal{O}(|E_{\mathcal{D}}|)$, and computing the coefficient of variation (CV) and score statistics over degrees takes $\mathcal{O}(|V_{\mathcal{D}}|)$.
Mapping the score to $M_{\mathcal{D}}$ and prefix truncation are $\mathcal{O}(M_{\mathcal{D}})$.
Overall, the candidate-set determination cost is
\begin{equation}
\mathcal{O}\big(|E_{\mathcal{D}}|+|V_{\mathcal{D}}|\big),
\end{equation}
and is incurred once per dataset (amortized across all training steps).

\paragraph{Subgraph Similarity-based GoG Edge Construction (Algorithm.~\ref{alg:build_gog}).}
Given hop vectors, row-normalization costs $\mathcal{O}(Kd)$.
The similarity matrix $\mathbf{S}={\mathbf{X}}(\bar{\mathbf{X}})^\top$ costs
\begin{equation}
\mathcal{O}(K^2 d).
\end{equation}
Forming the pair set $\mathcal{U}_c$ and computing the sampling distribution $q_c(i,j)$ over $\Theta(K^2)$ unordered pairs costs $\mathcal{O}(K^2)$, and sampling $\mathcal{B}_{edge}$ edges without replacement can be implemented in $\tilde{\mathcal{O}}(K^2)$ time.
After symmetrization, $E_c = 2\mathcal{B}_{edge}$.

\paragraph{Confidence-aware Dynamic Top-$m$ Routing (Algorithm~\ref{alg:dynamic_routing}).}
The router is a 2-layer GCN on $\mathcal{G}_c$.
Using the standard decomposition of (i) sparse propagation and (ii) linear transforms, each layer costs
$\mathcal{O}(E_c d_r + K d_r^2)$, hence the router costs
\begin{equation}
\mathcal{O}\big(E_c d_r + K d_r^2\big).
\end{equation}
Computing the shortlist via $\mathrm{TopK}(\cdot,m_{\text{start}})$ over $M_{\mathcal{D}}$ experts costs
$\mathcal{O}(M_{\mathcal{D}}\log m_{\text{start}})$ per center (or $\mathcal{O}(M_{\mathcal{D}})$ with selection algorithms).
The remaining normalization and masking are $\mathcal{O}(M_{\mathcal{D}})$.

\paragraph{Riemannian expert computation and fusion.}
Assume each activated expert performs $L$ layers of curvature-aware message passing on the same GoG topology.
Per expert, the cost is $\mathcal{O}\big(L(E_c d' + K d'^2)\big)$, hence activating only $m$ experts yields
\begin{equation}
\mathcal{O}\big(m\,L(E_c d' + K d'^2)\big)\ \le\ \mathcal{O}\big(m_{\text{start}}\,L(E_c d' + K d'^2)\big),
\label{eq:moe_complexity_app}
\end{equation}
instead of $\mathcal{O}\big(M_{\mathcal{D}}\,L(E_c d' + K d'^2)\big)$ under dense routing.
The final fusion $\sum_{m}\tilde{\alpha}_c^{(m)}\mathbf{H}_c^{(m)}[k]$ costs $\mathcal{O}(mKd')$.

\paragraph{End-to-end per-center complexity.}
Combining the above, the dominant per-center cost is
\begin{equation}
\mathcal{O}\big(K^2 d\big)\ +\ \mathcal{O}\big(E_c d_r + K d_r^2\big)\ +\ \mathcal{O}\big(m\,L(E_c d' + K d'^2)\big),
\end{equation}
where $K$ is explicitly controlled by the GPU memory budget and $E_c=2\mathcal{B}_{edge}$ by the GoG edge budget.

\section{Dataset Detail}
\label{app:dataset}
\subsection{Main setting: 10 benchmark graphs}
\label{app:dataset:main}

\textbf{Benchmarks.}
Our main setting uses 10 benchmark graphs covering four domains: citations (Cora, CiteSeer, PubMed), webpage hyperlinks (Cornell, Texas, Wisconsin), Wikipedia page networks (Chameleon, Squirrel), and e-commerce co-purchase graphs (Amazon-Computers, Amazon-Photo)~\cite{sen2008collective,namata2012query,pei2020geom,mcauley2015image,shchur2018pitfalls}.
We follow a leave-one-dataset-out protocol on this benchmark suite: for each target dataset, we pretrain on the remaining datasets and then evaluate transfer on the held-out target. Table~\ref{table.datasets} summarizes dataset statistics for the main setting.

\textbf{Why exclude Chameleon and Squirrel as targets.}
Recent work reports severe issues in Chameleon and Squirrel (e.g., duplicate nodes that may induce train-test leakage), which can make cross-method comparisons unreliable~\cite{platonov2023critical}.
To ensure fair comparisons while preserving domain coverage, we still include Chameleon and Squirrel as pretraining sources in this main setting, but we exclude them from downstream evaluation when reporting the main transfer results (i.e., we report results on the remaining 8 target datasets).
In later experiments, we additionally use Chameleon as a targeted downstream testbed to verify effectiveness on the Wikipedia-network domain.

\begin{table}[t]
    \centering
    \small
    \caption{Benchmark datasets used in standard evaluation.}
    \label{table.datasets}

    \setlength{\tabcolsep}{6pt} 
    \renewcommand{\arraystretch}{1.12}

    \resizebox{0.7\linewidth}{!}{
    \begin{tabular}{@{}l | r r r c l@{}}
        \toprule
        Dataset & \#Nodes & \#Edges & Class & Domain \\
        \midrule

        Cora           & 2,708  & 5,278   & 7  & Academic \\
        CiteSeer       & 3,327  & 4,552   & 6  & Academic \\
        PubMed         & 19,717 & 44,324  & 3  & Academic \\
        \midrule

        Cornell        & 183    & 277     & 5  & Webpage \\
        Texas          & 183    & 279     & 5  & Webpage \\
        Wisconsin      & 251    & 450     & 5  & Webpage \\
        \midrule

        Chameleon      & 2,277  & 31,371  & 5  & Wikipedia \\
        Squirrel       & 5,201  & 198,353 & 5  & Wikipedia \\
        \midrule

        Amazon-Computers & 13,752 & 245,861 & 10 & E-commerce \\
        Amazon-Photo   & 7,650  & 119,081 & 8  & E-commerce \\
        \bottomrule
    \end{tabular}}
\end{table}

\subsection{Large-scale setting: unified semantic node features and large graphs}
\label{app:dataset:lmtext}

\textbf{Motivation.}
To validate \methodname{} under (i) a unified semantic feature space across heterogeneous graphs and (ii) larger-scale real-world graphs, we consider an LM-text scaling setting.

\textbf{Text-based node features.}
For datasets with raw node texts, we encode each node's text using a frozen language model (Sentence-BERT all-MiniLM-L6-v2~\cite{reimers-2020-multilingual-sentence-bert}) and use the resulting embeddings as node features.
This provides semantically rich and fixed-dimensional representations shared across datasets.

\textbf{Pretraining and evaluation.}
Under this setting, we pretrain on ArXiv\_2023~\cite{he2023explanation}, ogbn-Arxiv~\cite{wang2020ogb}, Reddit~\cite{huang2024reddit}, and PubMed~\cite{namata2012query}, and evaluate transfer on Cora~\cite{sen2008collective}, Ele-Computers and Books-History~\cite{yan2023ele}, and Instagram~\cite{huang2024reddit}. Table~\ref{tab:dataset_stats} reports detailed statistics for the LM-text scaling setting.

\begin{table}[t]
    \centering
    \scriptsize
    \setlength{\tabcolsep}{4pt}
    \renewcommand{\arraystretch}{1.05}
    \caption{LM-text scaling setting: dataset statistics for source and target graphs.}
    \label{tab:dataset_stats}
    \resizebox{0.7\columnwidth}{!}{
    \begin{tabular}{@{}l|rrccc@{}}
        \toprule
        & \#Nodes & \#Edges & Domain & Class & Usage \\
        \midrule
        ogbn-ArXiv      & 169,343   & 2,315,598  & Academic   & 40 & $G_{\text{source}}$ \\
        ArXiv\_2023     & 33,868    & 611,344     & Academic   & 40 & $G_{\text{source}}$ \\
        PubMed           & 19,717    & 44,324     & Academic   & 3  & $G_{\text{source}}$ \\
        Reddit          & 33,434    & 269,442    & Social     & 2  & $G_{\text{source}}$ \\
        \midrule
        Cora            & 2,708     & 5,278      & Academic   & 7  & $G_{\text{target}}$ \\
        Ele-Computers    & 87,229    & 1,256,548    & E-commerce & 10 & $ G_{\text{target}}$ \\
        Books-History    & 41,551    & 503,180    & E-commerce & 12 & $G_{\text{target}}$ \\
        Instagram        & 11,339    & 144,010    & Social     & 2  & $G_{\text{target}}$ \\
        \bottomrule
    \end{tabular}}
\end{table}

\subsection{Datasets description}
\noindent\textbf{Citation networks.}
We use several citation graphs including Cora and CiteSeer~\cite{sen2008collective}, PubMed~\cite{namata2012query}, ogbn-ArXiv~\cite{wang2020ogb}, and ArXiv\_2023~\cite{he2023explanation}.
In these datasets, nodes represent papers and edges indicate citation links.
Each node is associated with textual attributes of the paper (e.g., title and abstract), and a bag-of-words feature vector depending on the benchmark construction.
The node label corresponds to the topic or category of the paper.
Compared with Cora and CiteSeer that cover diverse computer science topics, PubMed consists of biomedical publications focused on diabetes, where labels indicate the type of diabetes studied~\cite{namata2012query}.
For ogbn-ArXiv~\cite{wang2020ogb} and ArXiv\_2023~\cite{he2023explanation}, labels follow arXiv subject areas with 40 classes.

\medskip
\noindent\textbf{Webpage hyperlink networks (WebKB).}
We evaluate on the WebKB benchmarks Cornell, Texas, and Wisconsin~\cite{pei2020geom}, which are subsets derived from the WebKB corpus.
Each node corresponds to a web page, and edges denote hyperlinks between pages.
Node features are bag-of-words representations extracted from page content.
Nodes are classified into five categories: student, project, course, staff, and faculty~\cite{pei2020geom}.

\medskip
\noindent\textbf{Wikipedia page networks.}
We use Chameleon and Squirrel~\cite{pei2020geom}, two page-to-page graphs built from Wikipedia.
Nodes represent web pages and edges represent page links.
Node attributes are constructed from the page content (e.g., collections of nouns/words), and node labels correspond to discretized levels of average monthly traffic of the pages~\cite{pei2020geom}.

\medskip
\noindent\textbf{E-commerce graphs.}
We consider multiple product and item graphs, including Amazon-Computers and Amazon-Photo~\cite{mcauley2015image,shchur2018pitfalls}, as well as Ele-Computers and Books-History~\cite{yan2023ele}.
In these datasets, nodes denote products/items and edges connect related items defined by the benchmark construction (e.g., co-purchase/co-interaction relations).
Each node is associated with an attribute vector and/or text fields such as product title/description.
Node labels indicate item categories, with 10 classes for Amazon-Computers and Ele-Computers, 8 classes for Amazon-Photo, and 12 classes for Books-History~\cite{mcauley2015image,shchur2018pitfalls,yan2023ele}.

\medskip
\noindent\textbf{Online social networks.}
We use Reddit~\cite{huang2024reddit} and Instagram~\cite{huang2024reddit}.
These datasets represent large-scale online platforms, where nodes correspond to entities defined by the benchmark (e.g., users or content units) and edges capture their relations/interactions.
Nodes are associated with raw text attributes when available, and the tasks are binary node classification with 2 classes~\cite{huang2024reddit}.

\section{Additional Experiments}
\label{app:additional}
\subsection{Additional link prediction results}
\label{app:linkpred_full}
\begin{table}[t]
    \centering
    \scriptsize
    \setlength{\tabcolsep}{3pt}
    \renewcommand{\arraystretch}{1.05}
    \caption{Link prediction results (\textbf{AUC-ROC}(\%)). Results are reported in percent. The best method is bolded and the runner-up is underlined.}
    \label{tab:linkpred_full}
    \resizebox{\linewidth}{!}{
    \begin{tabular}{@{}l|ccccccc@{}}
        \toprule
        Methods & Wisconsin & Cornell & Citeseer & Pubmed & Cora & Photos & Texas \\
        \midrule
        \method{GCOPE}      
            & \underline{83.12 $\pm$ 1.51}  
            & 64.72 $\pm$ \phantom{0}4.99 
            & 85.09 $\pm$ 0.34 
            & 84.99 $\pm$ 0.06
            & \underline{88.71$\pm$0.57}
            & 78.45$\pm$5.51
            & \underline{80.51$\pm$1.75} \\
        \method{SAMGPT}     
            & 65.32 $\pm$ 7.27 
            & 53.53 $\pm$ \phantom{0}4.57 
            & 89.91 $\pm$ 2.70 
            & 84.92 $\pm$ 0.40
            & 88.62$\pm$3.51
            & 80.59$\pm$0.33
            & 51.20$\pm$4.42 \\
        \method{MDGFM}      
            & 67.46 $\pm$ 6.82 
            & 54.92 $\pm$ 21.10 
            & 89.12 $\pm$ 4.68 
            & 84.78 $\pm$ 2.09
            & 88.41$\pm$2.12
            & 81.24$\pm$5.80
            & 65.22$\pm$4.05 \\
        \method{RiemannGFM} 
            & 78.19 $\pm$ 2.08 
            & \underline{79.06 $\pm$ \phantom{0}2.96} 
            & \underline{90.63 $\pm$ 0.58} 
            & \underline{86.37 $\pm$ 0.33}
            & 88.54$\pm$0.11
            & \underline{81.44$\pm$0.95}
            & 72.41$\pm$5.96 \\
        \midrule
        \method{\methodname{}} 
            & \textbf{84.15 $\pm$ 0.71} 
            & \textbf{85.90 $\pm$ \phantom{0}0.69} 
            & \textbf{90.88 $\pm$ 0.70} 
            & \textbf{88.62 $\pm$ 0.41}
            & \textbf{89.27$\pm$0.64}
            & \textbf{81.53$\pm$0.83}
            & \textbf{87.94$\pm$0.96} \\
        \bottomrule
    \end{tabular}}
\end{table}

Table~\ref{tab:linkpred_full} reports the link prediction performance in terms of AUC-ROC.
Overall, \methodname{} consistently achieves the best results across all evaluated datasets, outperforming prior GFMs by clear margins.
Compared with the strongest baseline in each case (typically RiemannGFM), \methodname{} brings absolute improvements of +1.03 on Wisconsin, +6.84 on Cornell, +0.25 on Citeseer, +2.25 on Pubmed, +0.56 on Cora, +0.09 on Photos, and +7.43 on Texas AUC-ROC.
Notably, the gains are substantially larger on heterophilic WebKB graphs (Wisconsin/Cornell/Texas), indicating that \methodname{} is more robust to structural heterogeneity, while still remaining competitive on citation and co-purchase networks (Citeseer/Pubmed/Cora/Photos).

\subsection{{Additional scalability results}}
\label{app:scalability}

\begin{figure}[t]
    \centering
    \includegraphics[width=0.5\linewidth]{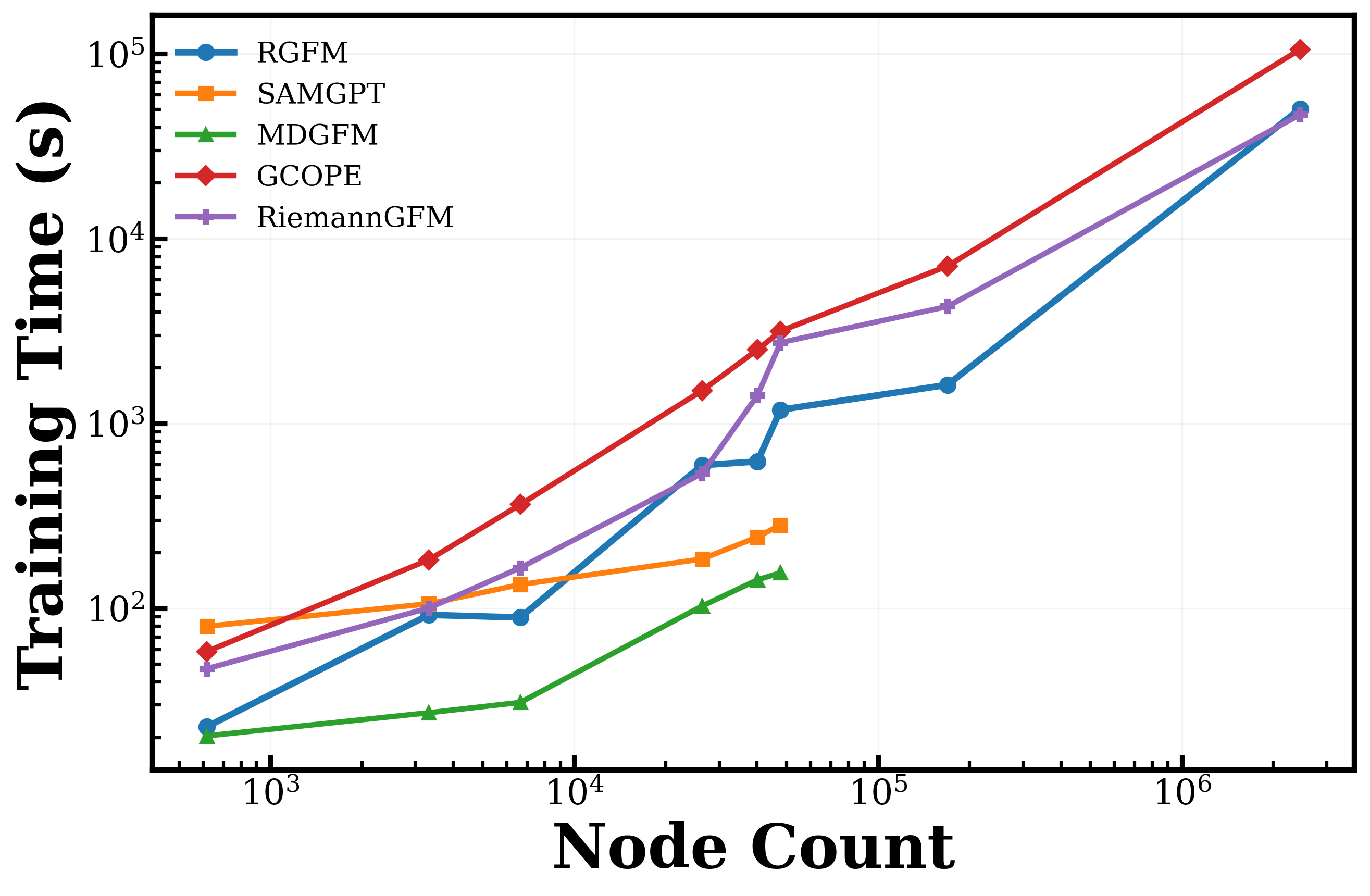}
    \caption{{Scalability comparison in terms of training time with respect to the number of nodes.}}
    \label{fig:scalability}
\end{figure}

{Figure~\ref{fig:scalability} reports the scalability comparison of different graph foundation models in terms of training time. Overall, \methodname{} demonstrates favorable scalability as the graph size increases, maintaining competitive training efficiency compared with existing GFMs. In particular, \methodname{} scales more efficiently than GCOPE and RiemannGFM on large-scale graphs, while achieving substantially better predictive performance as shown in the main experiments. These results indicate that \methodname{} not only improves effectiveness but also remains practical for large-scale graph applications.}

\subsection{{Graph Classification}}
\label{app:graph_classification}

{To further evaluate the generalizability of \methodname{} beyond node-level tasks, we conduct additional graph classification experiments on HIV~\citep{wu2018moleculenet,hu2020ogb}. Specifically, models are pretrained on the large-scale molecular graph dataset ChEMBL~\citep{gaulton2012chembl} and then fine-tuned on the downstream HIV graph classification benchmark.}

\begin{table}[t]
    \centering
    \setlength{\tabcolsep}{6pt}
    \renewcommand{\arraystretch}{1.05}
    \caption{{Transfer pretraining results on graph classification. Models are pretrained on ChEMBL and fine-tuned on HIV. The best result is bolded and the runner-up is underlined.}}
    \label{tab:graph_classification_transfer}
    \begin{tabular}{@{}lc@{}}
        \toprule
        {Model} & {Test AUC} \\
        \midrule
        {\method{MDGFM}}      & {0.7563} \\
        {\method{SAMGPT}}     & {0.7664} \\
        {\method{RiemannGFM}} & {\underline{0.7804}} \\
        {\method{GCOPE}}      & {0.7230} \\
        \midrule
        {\method{\methodname{}}} & {\textbf{0.7849}} \\
        \bottomrule
    \end{tabular}
\end{table}

\begin{table}[t]
    \centering
    \setlength{\tabcolsep}{5pt}
    \renewcommand{\arraystretch}{1.05}
    \caption{{Few-shot graph classification results on HIV.}}
    \label{tab:graph_classification_fewshot}
    \begin{tabular}{@{}lccc@{}}
        \toprule
        {Method} & {1-shot} & {3-shot} & {5-shot} \\
        \midrule
        {Multi-RF Fusion} & {0.5498} & {0.5764} & {0.5913} \\
        {DeepAUC}         & {0.5172} & {0.5910} & {0.6068} \\
        {HIG}             & {0.5433} & {0.5913} & {0.5973} \\
        {PAS-OGB}         & {0.5003} & {0.5948} & {0.6082} \\
        {\method{MDGFM}}      & {0.6101} & {0.6169} & {0.6365} \\
        {\method{SAMGPT}}     & {0.6170} & {0.6248} & {0.6424} \\
        {\method{RiemannGFM}} & {\underline{0.6175}} & {0.6273} & {0.6368} \\
        {\method{GCOPE}}      & {0.5825} & {\underline{0.6354}} & {\underline{0.6536}} \\
        \midrule
        {\method{\methodname{}}} & {\textbf{0.6473}} & {\textbf{0.6557}} & {\textbf{0.6648}} \\
        \bottomrule
    \end{tabular}
\end{table}

{As shown in Tables~\ref{tab:graph_classification_transfer} and~\ref{tab:graph_classification_fewshot}, \methodname{} achieves the best Test AUC in both transfer and few-shot graph classification settings. These results indicate that \methodname{} can generalize from node-level tasks to graph-level prediction while maintaining consistent advantages over existing GFM baselines.}

\section{Experimental Details}
\label{app:experimental_details}
\subsection{Baseline Selection}
\label{app:baselines}
\paragraph{Why these baselines.}
We select four categories of baselines to cover the most common training and adaptation paradigms for graph transfer.
(1) \textbf{Task-supervised GNNs} (GCN~\cite{kipf2017gcn}, GAT~\cite{velickovic2017graph}, H2GCN~\cite{zhu2020beyond}, FAGCN~\cite{bo2021beyond}) are included as widely-used supervised architectures spanning homophily-oriented and heterophily-aware designs, which is important given the diverse structural properties across our benchmarks.
(2) \textbf{Self-supervised pretraining with fine-tuning} baselines (DGI~\cite{petar2019deep}, GraphCL~\cite{you2020graph}, GraphACL~\cite{xiao2023simple}, {MixHop~\cite{sami2019mixhop}}) represent representative self-supervised objectives for learning transferable node/graph representations before downstream adaptation.
(3) \textbf{Prompt-based adaptation} baselines (GPPT~\cite{sun2022gppt}, GPF~\cite{fang2023gpf}, GraphPrompt~\cite{liu2023graphprompt}, GraphPrompt+~\cite{yu2024generalized}) are included as parameter-efficient alternatives to full fine-tuning, which is a common setting in recent graph transfer literature.
(4) \textbf{Existing GFMs} (GCOPE~\cite{zhao2024all}, SAMGPT~\cite{yu2025samgpt}, MDGFM~\cite{wang2025multi}, RiemannGFM~\cite{sun2025riemanngfm}, {GPM~\cite{wang2025gpm}, G2PM~\cite{wang2025g2pm}}) are included as recent general-purpose graph foundation models that aim to scale pretraining across diverse graphs and transfer via fine-tuning or lightweight adaptation.
Following common practice on graphs with varying homophily/heterophily, we adopt FAGCN as the Stage-1 encoder in \methodname{} for its robustness across both regimes.

\subsection{Expertiment Settings}
\label{sec:exp_settings}
In the standard benchmark setting, we follow a leave-one-dataset-out protocol: for each target dataset, we pretrain on all remaining datasets and evaluate transfer on the held-out target (Table~\ref{tab:transfer_settings_main}).  
In the large-scale setting, we pretrain on the designated large graphs and evaluate transfer on the designated targets as summarized in Table~\ref{tab:transfer_settings_lmtext}.
\begin{table}[t]
    \centering
    \small
    \caption{Settings of transfer evaluation on the 10-benchmark suite.}
    \label{tab:transfer_settings_main}

    \setlength{\tabcolsep}{3.8pt}
    \renewcommand{\arraystretch}{1.12}

    \resizebox{\linewidth}{!}{
    \begin{tabular}{l|ccc|ccc|cc|cc}
        \toprule
        \multirow{2}{*}{Target} &
        \multicolumn{3}{c|}{Citations} &
        \multicolumn{3}{c|}{Webpage} &
        \multicolumn{2}{c|}{Wikipedia} &
        \multicolumn{2}{c}{E-commerce} \\
        \cmidrule(lr){2-4}\cmidrule(lr){5-7}\cmidrule(lr){8-9}\cmidrule(lr){10-11}
        & Cora & CiteSeer & PubMed
        & Cornell & Texas & Wisconsin
        & Chameleon & Squirrel
        & Amazon-Computers & Amazon-Photo \\
        \midrule

        Cora            &       & \cmark & \cmark & \cmark & \cmark & \cmark & \cmark & \cmark & \cmark & \cmark \\
        CiteSeer        & \cmark &        & \cmark & \cmark & \cmark & \cmark & \cmark & \cmark & \cmark & \cmark \\
        PubMed          & \cmark & \cmark &        & \cmark & \cmark & \cmark & \cmark & \cmark & \cmark & \cmark \\
        \midrule
        Cornell         & \cmark & \cmark & \cmark &        & \cmark & \cmark & \cmark & \cmark & \cmark & \cmark \\
        Texas           & \cmark & \cmark & \cmark & \cmark &        & \cmark & \cmark & \cmark & \cmark & \cmark \\
        Wisconsin       & \cmark & \cmark & \cmark & \cmark & \cmark &        & \cmark & \cmark & \cmark & \cmark \\
        \midrule
        Amazon-Computers& \cmark & \cmark & \cmark & \cmark & \cmark & \cmark & \cmark & \cmark &        & \cmark \\
        Amazon-Photo    & \cmark & \cmark & \cmark & \cmark & \cmark & \cmark & \cmark & \cmark & \cmark &        \\
        \bottomrule
    \end{tabular}}
\end{table}

\begin{table}[t]
    \centering
    \small
    \caption{Large scale setting: transfer protocol.}
    \label{tab:transfer_settings_lmtext}

    \setlength{\tabcolsep}{6pt}
    \renewcommand{\arraystretch}{1.10}

    \resizebox{0.85\linewidth}{!}{
    \begin{tabular}{l|ccc|c}
        \toprule
        \multirow{2}{*}{Target} &
        \multicolumn{3}{c|}{Academic sources} &
        \multicolumn{1}{c}{Social source} \\
        \cmidrule(lr){2-4}\cmidrule(lr){5-5}
        & ogbn-ArXiv & ArXiv\_2023 & PubMed & Reddit \\
        \midrule
        Cora           & \cmark & \cmark & \cmark & \cmark \\
        Ele-Computers  & \cmark & \cmark & \cmark & \cmark \\
        Books-History  & \cmark & \cmark & \cmark & \cmark \\
        Instagram      & \cmark & \cmark & \cmark & \cmark \\
        \bottomrule
    \end{tabular}}
\end{table}

\paragraph{Hardware and software.}
All experiments are conducted on a server with $2\times$ Intel(R) Xeon(R) Platinum 8358 CPU @ 2.60GHz, 377\, GiB RAM, and NVIDIA A100 80GB PCIe GPUs (driver 550.54.15); unless otherwise stated, experiments use a single A100 80GB GPU.
The software stack (conda environment \texttt{prog}) includes Python 3.10.13, PyTorch 2.1.0+cu118, DGL 2.4.0+cu118, PyG 2.6.1, OGB 1.3.6, and Transformers 4.36.2.
Unless otherwise stated, each result is averaged over five runs with different random seeds.

\paragraph{Implementation details.}
Both tasks use a per-GPU batch size of 128 and FAGCN (dropout 0.2), and we use temperature $\tau=0.2$ for training.

For node tasks, training follows two stages: Stage~1 trains the encoder for 100 epochs with Adam (lr $5\times 10^{-3}$, wd $2\times 10^{-6}$), and Stage~2 optimizes the MoE for 50 epochs with Adam (lr $10^{-2}$, wd $2\times 10^{-6}$) together with RiemannianAdam (lr $10^{-3}$, wd $5\times 10^{-4}$, stabilize=100).

For link tasks, \texttt{edge\_task.sh} uses two stages with 200 epochs each (total 400 epochs): Stage~1 trains the encoder for 200 epochs and Stage~2 optimizes the MoE for 200 epochs, with $\texttt{encoder\_lr}=\texttt{moe\_lr}=\texttt{classifier\_lr}=3\times10^{-3}$, $\texttt{riemannian\_lr}=10^{-3}$, wd $10^{-5}$, and load-balance weight 0.01.

\subsection{Code-level Training and Evaluation Pipeline}
\label{app:pipeline}
Our released implementation instantiates the four-stage workflow in Fig.~2 (Stage A--D) as an end-to-end, two-stage routine:
(i) Stage-1 subgraph encoder training and (ii) Stage-2 GoG-MoE training, followed by downstream evaluation.

\paragraph{Entry points.}
\texttt{main.py} runs node-level tasks and \texttt{main-link.py} runs link-level tasks. Each entry script performs data preparation,
(sub)graph construction and caching, Stage-1 training, Stage-2 training, evaluation, and checkpointing in one run.

\paragraph{Step 1: Data acquisition and preprocessing.}
For node classification, the loader downloads datasets (PyG/OGB) on demand, standardizes node features by clipping or padding
to a fixed dimensionality, and stores \texttt{x/y/edge\_index} tensors. For link prediction, the loader applies a randomized split
(e.g., \texttt{RandomLinkSplit} with held-out validation/test edges and negative sampling).

\paragraph{Step 2: k-hop subgraph construction and caching.}
For each training center (node or edge), the implementation constructs 1\ldots$K$ hop ego-subgraphs and caches them to disk.
This implements the adaptive-hop idea in the paper: as $K$ increases, the model captures larger structural scales but incurs higher
GPU memory cost; therefore, $K$ is treated as a budget-controlled parameter and the implementation includes OOM-safe fallback.

\paragraph{Step 3: Graph augmentations and GraphCL views (node tasks).}
For node tasks, two augmented views are sampled per subgraph to form a GraphCL-style contrastive pair (e.g., node dropping,
edge perturbation, and attribute masking with a sampled corruption ratio). Augmented views can be precomputed and cached to
reduce repeated preprocessing overhead.

\paragraph{Step 4: Stage-1 training.}
\textbf{Node tasks:} train a GNN subgraph encoder with a contrastive objective (GraphCL-style) on sampled subgraph pairs.
\textbf{Link tasks:} train the encoder (and a lightweight classifier) with supervised edge labels constructed from positive/negative
pairs, using k-hop features as input.

\paragraph{Step 5: Stage-2 GoG-MoE training.}
Subgraph embeddings from multiple hops are treated as nodes in a higher-order Graph-of-Graphs (GoG). The GoG is constructed
based on similarity among subgraph embeddings (Stage C in Fig.~2), then encoded by a Mixture-of-Experts (MoE) router over
Riemannian experts, and fused into a single representation for downstream prediction (Stage D in Fig.~2). The router uses a
sparse top-$m$ expert shortlist; $m$ starts from a larger value and is decreased stepwise as router confidence increases.

\paragraph{Step 6: Evaluation and reporting.}
\textbf{Node tasks:} run multiple k-shot splits on the target dataset for a fixed number of epochs, select the best
checkpoint on validation accuracy, then report test accuracy. Results are averaged over multiple random seeds.
\textbf{Link tasks:} evaluate with AUC-ROC (and optionally Acc/Hits@K) on validation/test splits; save the best checkpoint and
report final test metrics.

\paragraph{Step 7: Outputs, logs, and checkpoints.}
Training prints per-epoch loss/metrics and the evolution of \textit{top-$m$}. Checkpoints are saved with dataset-specific names
(e.g., \texttt{checkpoints/node\_...pt} and \texttt{checkpoints/edge\_...pt}); users are encouraged to include seed in the
prefix to avoid overwriting when running multiple seeds.

The full source code is available in the Supplementary Materials.

\section{More Related Work Analysis}
\label{app:more_related_work}
A Euclidean manifold, which is widely used in existing GFMs~\cite{hou2022graphmae,zhao2024all}, lacks the capacity to capture the varying geometric patterns. 
Specifically, tree-like structures align naturally with a a Hyperbolic manifold, whereas cycle-like structures are better represented in a hyperspherical manifold.
However, the recent Riemannian manifold-based training mechanism, RiemannGFM~\cite{sun2025riemanngfm}, cannot be directly transferred to GoG-based GFMs because it relies on extracting tree-like and cycle-like structures and encoding them via fixed corresponding manifolds that are incompatible with the dynamically constructed GoGs.

\section{Proof of Theorem~\ref{thm:theorem1} (Noise Analysis)}
\label{app:proof_noise_analysis}

In this section, we provide a detailed proof of Theorem~\ref{thm:theorem1}
under an adaptive-hop fusion view.

\paragraph{Setup and assumptions.}
Fix a training node (center) $v$ and consider the sampled $k$-hop subgraph embedding
$\mathbf{x}_{v}^{(k)} \in \mathbb{R}^{d}$ for each $k\in\{1,\ldots,K\}$.

\begin{assumption}[Second-order hop-wise noise model (allowing correlations)]
\label{assump:second_order}
For each hop $k$, the sampled embedding admits the decomposition
\begin{equation}
\mathbf{x}_{v}^{(k)}=\boldsymbol{\mu}_{v}^{(k)}+\boldsymbol{\epsilon}_{v}^{(k)},
\qquad
\mathbb{E}\big[\boldsymbol{\epsilon}_{v}^{(k)}\big]=\mathbf{0},
\end{equation}
where $\boldsymbol{\mu}_{v}^{(k)}$ is the (possibly hop-dependent) noise-free embedding and
$\boldsymbol{\epsilon}_{v}^{(k)}$ is a zero-mean noise vector with finite second moments.

Moreover, the (possibly correlated) second-order structure across hops is characterized by
\begin{equation}
\boldsymbol{\Sigma}_{v,k\ell}
:=
\mathrm{Cov}\!\big(\boldsymbol{\epsilon}_{v}^{(k)},\boldsymbol{\epsilon}_{v}^{(\ell)}\big)
\in\mathbb{R}^{d\times d},
\qquad 1\le k,\ell\le K,
\end{equation}
and the block covariance matrix
$\boldsymbol{\Sigma}_{v}:=
\big[\boldsymbol{\Sigma}_{v,k\ell}\big]_{k,\ell=1}^{K}\in\mathbb{R}^{Kd\times Kd}$
is positive semidefinite.
\end{assumption}

\begin{assumption}[Various-hop sampling via hop fusion]
\label{assump:fusion_general}
The various-hop strategy forms a fused embedding by convexly combining multiple hop embeddings:
\begin{equation}
\tilde{\mathbf{x}}_{v}(\mathbf{w})
:=
\sum_{k=1}^{K} w_k\,\mathbf{x}_{v}^{(k)},
\qquad
\mathbf{w}\in\Delta_{K}:=\Big\{\mathbf{w}\in\mathbb{R}^{K}_{+}:\sum_{k=1}^{K} w_k=1\Big\}.
\end{equation}
Accordingly, the fused sampling noise is the random vector
\begin{equation}
\tilde{\boldsymbol{\epsilon}}_{v}(\mathbf{w})
:=
\sum_{k=1}^{K} w_k\,\boldsymbol{\epsilon}_{v}^{(k)}.
\end{equation}
Let $\boldsymbol{\sigma}_{v}(\mathbf{w})$ denote the coordinate-wise standard-deviation vector of
the fused sampling noise:
\begin{equation}
\boldsymbol{\sigma}_{v}(\mathbf{w})
:= \sqrt{\mathrm{diag}\!\big(\mathrm{Cov}(\tilde{\boldsymbol{\epsilon}}_{v}(\mathbf{w}))\big)}.
\end{equation}
The strategy chooses the fusion weights by minimizing the overall noise magnitude measured by
the $\ell_2$-norm:
\begin{equation}
\mathbf{w}^{\star}\in\arg\min_{\mathbf{w}\in\Delta_K}\ \big\|\boldsymbol{\sigma}_{v}(\mathbf{w})\big\|_{2}.
\end{equation}
\end{assumption}

\paragraph{Definitions of $\boldsymbol{\sigma}_{\mathrm{F}}$ and $\boldsymbol{\sigma}_{\mathrm{V}}$.}
A fixed-hop strategy selects a single hop $k_{\mathrm{fixed}}$ and always uses $\mathbf{x}_{v}^{(k_{\mathrm{fixed}})}$.
Under Assumption~\ref{assump:second_order}, its sampling noise is $\boldsymbol{\epsilon}_{v}^{(k_{\mathrm{fixed}})}$.
Let
\begin{equation}
\boldsymbol{\sigma}_{v,k}
:=\sqrt{\mathrm{diag}\!\big(\mathrm{Cov}(\boldsymbol{\epsilon}_{v}^{(k)})\big)}
=\sqrt{\mathrm{diag}\!\big(\boldsymbol{\Sigma}_{v,kk}\big)}
\in\mathbb{R}^d_{+}.
\end{equation}
We define
\begin{equation}
\boldsymbol{\sigma}_{\mathrm{F}}:=\boldsymbol{\sigma}_{v,k_{\mathrm{fixed}}},
\qquad
\boldsymbol{\sigma}_{\mathrm{V}}:=\boldsymbol{\sigma}_{v}(\mathbf{w}^{\star}).
\end{equation}

\begin{proof}[Proof of Theorem~\ref{thm:theorem1}]
By Assumption~\ref{assump:second_order}, for each hop $k$ the randomness of the sampled embedding
comes from $\boldsymbol{\epsilon}_{v}^{(k)}$ and satisfies $\mathbb{E}[\boldsymbol{\epsilon}_{v}^{(k)}]=\mathbf{0}$.

Under the various-hop fusion strategy (Assumption~\ref{assump:fusion_general}), the fused noise is
\begin{equation}
\tilde{\boldsymbol{\epsilon}}_{v}(\mathbf{w})=\sum_{k=1}^{K} w_k\,\boldsymbol{\epsilon}_{v}^{(k)}.
\end{equation}
Using bilinearity of covariance (and finite second moments), we obtain
\begin{align}
\mathrm{Cov}\!\big(\tilde{\boldsymbol{\epsilon}}_{v}(\mathbf{w})\big)
&=
\mathrm{Cov}\!\Big(\sum_{k=1}^{K} w_k\,\boldsymbol{\epsilon}_{v}^{(k)}\Big)
=
\sum_{k=1}^{K}\sum_{\ell=1}^{K} w_k w_\ell\,
\mathrm{Cov}\!\big(\boldsymbol{\epsilon}_{v}^{(k)},\boldsymbol{\epsilon}_{v}^{(\ell)}\big) \nonumber\\
&=
\sum_{k=1}^{K}\sum_{\ell=1}^{K} w_k w_\ell\,\boldsymbol{\Sigma}_{v,k\ell}.
\label{eq:cov_fusion}
\end{align}
Compared to hop-selection (sampling one hop), the cross-hop covariances $\boldsymbol{\Sigma}_{v,k\ell}$ now
enter explicitly through \eqref{eq:cov_fusion}.

By definition of the standard deviation vector,
\begin{equation}
\big\|\boldsymbol{\sigma}_{v}(\mathbf{w})\big\|_2^2
=
\sum_{r=1}^{d}\mathrm{Var}\!\Big(\big[\tilde{\boldsymbol{\epsilon}}_{v}(\mathbf{w})\big]_r\Big)
=
\mathrm{tr}\!\Big(\mathrm{Cov}\!\big(\tilde{\boldsymbol{\epsilon}}_{v}(\mathbf{w})\big)\Big).
\label{eq:norm_trace}
\end{equation}
Combining \eqref{eq:cov_fusion} and \eqref{eq:norm_trace} gives
\begin{equation}
\big\|\boldsymbol{\sigma}_{v}(\mathbf{w})\big\|_2^2
=
\sum_{k=1}^{K}\sum_{\ell=1}^{K} w_k w_\ell\, \mathrm{tr}\!\big(\boldsymbol{\Sigma}_{v,k\ell}\big).
\label{eq:quadratic_fusion}
\end{equation}

Now, by Assumption~\ref{assump:fusion_general}, $\mathbf{w}^\star$ minimizes
$\|\boldsymbol{\sigma}_{v}(\mathbf{w})\|_2$ over the simplex $\Delta_K$.
For any fixed hop $k_{\mathrm{fixed}}$, the one-hot vector
$\mathbf{e}_{k_{\mathrm{fixed}}}\in\Delta_K$ (with $(\mathbf{e}_{k_{\mathrm{fixed}}})_{k_{\mathrm{fixed}}}=1$)
is feasible and corresponds exactly to the fixed-hop sampling noise, because
\begin{equation}
\tilde{\mathbf{x}}_{v}(\mathbf{e}_{k_{\mathrm{fixed}}})=\mathbf{x}_{v}^{(k_{\mathrm{fixed}})},
\qquad
\tilde{\boldsymbol{\epsilon}}_{v}(\mathbf{e}_{k_{\mathrm{fixed}}})
=
\boldsymbol{\epsilon}_{v}^{(k_{\mathrm{fixed}})},
\qquad
\boldsymbol{\sigma}_{v}(\mathbf{e}_{k_{\mathrm{fixed}}})
=
\boldsymbol{\sigma}_{v,k_{\mathrm{fixed}}}.
\end{equation}
Therefore, by optimality of $\mathbf{w}^\star$,
\begin{equation}
\big\|\boldsymbol{\sigma}_{\mathrm{V}}\big\|_2
=
\big\|\boldsymbol{\sigma}_{v}(\mathbf{w}^{\star})\big\|_2
\le
\big\|\boldsymbol{\sigma}_{v}(\mathbf{e}_{k_{\mathrm{fixed}}})\big\|_2
=
\big\|\boldsymbol{\sigma}_{v,k_{\mathrm{fixed}}}\big\|_2
=
\big\|\boldsymbol{\sigma}_{\mathrm{F}}\big\|_2.
\end{equation}
This proves $\|\boldsymbol{\sigma}_{\mathrm{V}}\|_2 \le \|\boldsymbol{\sigma}_{\mathrm{F}}\|_2$,
which is exactly Theorem~\ref{thm:theorem1}.
\end{proof}

\paragraph{Remark (when fusion can be strictly better).}
Equation~\eqref{eq:quadratic_fusion} shows that the fused noise depends on both within-hop covariances
$\boldsymbol{\Sigma}_{v,kk}$ and cross-hop covariances $\boldsymbol{\Sigma}_{v,k\ell}$.
If the hop-wise noises are not perfectly correlated across hops (e.g., some $\mathrm{tr}(\boldsymbol{\Sigma}_{v,k\ell})$
are sufficiently smaller than $\sqrt{\mathrm{tr}(\boldsymbol{\Sigma}_{v,kk})\,\mathrm{tr}(\boldsymbol{\Sigma}_{v,\ell\ell})}$),
then an interior $\mathbf{w}$ may yield $\|\boldsymbol{\sigma}_{v}(\mathbf{w})\|_2$ strictly smaller than every one-hot choice,
leading to a strict improvement over any fixed hop under the same metric.

\section{Proof of Theorem~\ref{thm:gog_edges_help_simple_1} (Effectiveness of GoG Edge Construction)}
\label{app:proof_gog_edges_help_simple_1}

We prove Theorem~\ref{thm:gog_edges_help_simple_1} via an exact bias--variance decomposition on GoG hop nodes.
Cross-hop mixing reduces the estimation variance by convex smoothing, but it may introduce a leakage (bias) term
when hop-specific targets differ. Similarity-biased mixing reduces the leakage compared to fully-connected uniform mixing,
while still achieving non-trivial variance reduction compared to removing all cross-hop edges.

\paragraph{Setup.}
Fix a training center node $c$ and its $K$ hop-induced GoG nodes indexed by $k=1,\ldots,K$ (assume $K\ge 2$).
Allow hop-specific noise-free targets $\boldsymbol{\mu}_{c,k}\in\mathbb{R}^d$.
The hop-wise estimator before the GoG message passing is
\begin{equation}
\hat{\boldsymbol{\mu}}_{c,k}=\boldsymbol{\mu}_{c,k}+\mathbf e_{c,k},
\qquad k=1,\ldots,K,
\label{eq:hop_model_v2}
\end{equation}
where $\mathbf e_{c,k}$ is a stochastic estimation noise.

\paragraph{Three pipelines (same encoder, only GoG edges differ).}
All three use the same residual propagation form on hop nodes with $\alpha\in[0,1]$:
\begin{equation}
\hat{\boldsymbol{\mu}}^{(m)}_{c,k}
=
\alpha \hat{\boldsymbol{\mu}}_{c,k} + (1-\alpha)\sum_{t=1}^{K}\widehat{\mathbf{P}}^{(m)}_{c}[k,t]\hat{\boldsymbol{\mu}}_{c,t},
\qquad k=1,\ldots,K,
\label{eq:mp_generic_v2}
\end{equation}
where $m\in\{\mathrm{ours},\mathrm{full},\mathrm{none}\}$ and each $\widehat{\mathbf{P}}^{(m)}_{c}$ is row-stochastic.
For $m\in\{\mathrm{ours},\mathrm{full}\}$ we use no self-mixing:
$\widehat{\mathbf{P}}^{(m)}_{c}[k,k]=0$.

(none cross-hop edges): take $\widehat{\mathbf{P}}^{(\mathrm{none})}_{c}=\mathbf I$, hence
$\hat{\boldsymbol{\mu}}^{(\mathrm{none})}_{c,k}=\hat{\boldsymbol{\mu}}_{c,k}$.

(fully-connected): uniform mixing over all $t\neq k$:
$\widehat{\mathbf{P}}^{(\mathrm{full})}_{c}[k,t]=\frac{1}{K-1}\mathbf{1}\{t\neq k\}$.

(ours: similarity-biased):
Let $\mathbf z_{c,k}$ be the hop embedding used for edge construction and define cosine scores
$s_{c}(k,t)=\cos(\mathbf z_{c,k},\mathbf z_{c,t})$.
We use the softmax row distribution
\begin{equation}
\widehat{\mathbf{P}}^{(\mathrm{ours})}_{c}[k,t]
=
\frac{\exp(\beta\, s_c(k,t))}{\sum_{u\neq k}\exp(\beta\, s_c(k,u))}\cdot \mathbf 1\{t\neq k\},
\qquad \beta>0.
\label{eq:ours_softmax_P_v2}
\end{equation}

\paragraph{Convex weights.}
Define
\begin{equation}
w^{(m)}_{c,k}(t):=\alpha\mathbf{1}\{t=k\}+(1-\alpha)\widehat{\mathbf{P}}^{(m)}_{c}[k,t],
\qquad t=1,\ldots,K,
\label{eq:def_weights_v2}
\end{equation}
so that $w^{(m)}_{c,k}(t)\ge 0$ and $\sum_{t=1}^K w^{(m)}_{c,k}(t)=1$.

\paragraph{Error metric.}
We measure hop-wise MSE to the hop-specific target:
\begin{equation}
e_{m}
:=\mathbb{E}_{c,k}\big\|\hat{\boldsymbol{\mu}}^{(m)}_{c,k}-\boldsymbol{\mu}_{c,k}\big\|_2^2,
\qquad m\in\{\mathrm{ours},\mathrm{full},\mathrm{none}\}.
\label{eq:mse_def_v2}
\end{equation}

\paragraph{Assumptions.}
\begin{assumption}[Noise model compatible with weight construction]
\label{assump:noise_v2}
Conditioned on $c$ and on the realized propagation weights $\{w^{(m)}_{c,k}(t)\}_{t=1}^K$,
the noises satisfy:
(i) $\mathbb E[\mathbf e_{c,t}\mid c,\{w\}]=\mathbf 0$ for all $t$;
(ii) $\mathbb E[\langle \mathbf e_{c,t},\mathbf e_{c,t'}\rangle\mid c,\{w\}]=0$ for $t\neq t'$;
(iii) $\mathbb E[\|\mathbf e_{c,t}\|_2^2\mid c,\{w\}]=B_c^2$ for all $t$.
\end{assumption}

\begin{assumption}[One-factor hop mismatch and similarity alignment]
\label{assump:mismatch_align_v2}
For each $(c,k)$ there exists a unit vector $\mathbf u_{c,k}\in\mathbb{R}^d$ and scalars
$\{\delta_{c,k}(t)\}_{t\neq k}\subseteq[0,\infty)$ such that
\[
\boldsymbol{\mu}_{c,t}-\boldsymbol{\mu}_{c,k}=\delta_{c,k}(t)\,\mathbf u_{c,k},
\qquad \forall\, t\neq k.
\]
Moreover, for each fixed $(c,k)$, if $s_c(k,t_1)\ge s_c(k,t_2)$ then $\delta_{c,k}(t_1)\le \delta_{c,k}(t_2)$.
\end{assumption}

\begin{assumption}[Separation between variance reduction and leakage gain]
\label{assump:tradeoff_window_v3}
Assume $K\ge 3$ and $\alpha\in[0,1)$.
Recall
\[
S_{c,k}:=\sum_{t\neq k}\Big(\widehat{\mathbf P}^{(\mathrm{ours})}_c[k,t]\Big)^2,
\quad
\bar\delta^{\mathrm{ours}}_{c,k}:=\sum_{t\neq k}\widehat{\mathbf P}^{(\mathrm{ours})}_c[k,t]\delta_{c,k}(t),
\quad
\bar\delta^{\mathrm{full}}_{c,k}:=\frac{1}{K-1}\sum_{t\neq k}\delta_{c,k}(t).
\]
Fix $(c,k)$ and sort $t\neq k$ so that $s_c(k,t_1)\ge\cdots\ge s_c(k,t_{K-1})$.
Let $p_i:=\widehat{\mathbf P}^{(\mathrm{ours})}_c[k,t_i]$ and $\delta_i:=\delta_{c,k}(t_i)$, and define the
prefix-mass advantage
\begin{equation}
A_{c,k}
:=\sum_{j=1}^{K-2}(\delta_{j+1}-\delta_j)\left(\sum_{i=1}^j p_i-\frac{j}{K-1}\right).
\label{eq:def_A_ck_v3}
\end{equation}

Assume there exist constants $\varepsilon_{\mathrm{wo}}>0$ and $\varepsilon_{\mathrm{full}}>0$ such that
\begin{align}
\mathbb E_{c,k}\!\left[B_c^2\Big(1-\alpha^2-(1-\alpha)^2 S_{c,k}\Big)\right]
&\ge
(1-\alpha)^2\,\mathbb E_{c,k}\!\left[(\bar\delta^{\mathrm{ours}}_{c,k})^2\right]
+\varepsilon_{\mathrm{wo}},
\label{eq:tradeoff_wo_v3}
\\
\mathbb E_{c,k}\!\left[A_{c,k}^2\right]
&\ge
\mathbb E_{c,k}\!\left[B_c^2\Big(S_{c,k}-\frac{1}{K-1}\Big)\right]
+\varepsilon_{\mathrm{full}}.
\label{eq:tradeoff_full_v3}
\end{align}
\end{assumption}

\begin{proof}[Proof of Theorem~\ref{thm:gog_edges_help_simple_1}]
We show $e_{\mathrm{ours}}<e_{\mathrm{none}}$ and $e_{\mathrm{ours}}<e_{\mathrm{full}}$.

\paragraph{Step 1: Exact bias--variance decomposition.}
From \eqref{eq:mp_generic_v2} and \eqref{eq:def_weights_v2},
\[
\hat{\boldsymbol{\mu}}^{(m)}_{c,k}
=
\sum_{t=1}^K w^{(m)}_{c,k}(t)\hat{\boldsymbol{\mu}}_{c,t}.
\]
Subtract $\boldsymbol{\mu}_{c,k}$ and use \eqref{eq:hop_model_v2}:
\begin{align}
\hat{\boldsymbol{\mu}}^{(m)}_{c,k}-\boldsymbol{\mu}_{c,k}
&=
\sum_{t=1}^K w^{(m)}_{c,k}(t)\big(\boldsymbol{\mu}_{c,t}-\boldsymbol{\mu}_{c,k}\big)
+
\sum_{t=1}^K w^{(m)}_{c,k}(t)\mathbf e_{c,t}
\nonumber\\
&=: \mathbf b^{(m)}_{c,k}+\mathbf n^{(m)}_{c,k}.
\label{eq:bias_var_v2}
\end{align}
Conditioned on $c$ and realized weights, Assumption~\ref{assump:noise_v2}(i) gives
$\mathbb E[\mathbf n^{(m)}_{c,k}\mid c,\{w\}]=0$, hence the cross term vanishes:
\begin{equation}
\mathbb{E}\big\|\mathbf b^{(m)}_{c,k}+\mathbf n^{(m)}_{c,k}\big\|_2^2
=
\mathbb{E}\big\|\mathbf b^{(m)}_{c,k}\big\|_2^2
+
\mathbb{E}\big\|\mathbf n^{(m)}_{c,k}\big\|_2^2.
\label{eq:cross_zero_v2}
\end{equation}

\paragraph{Step 2: Noise term (variance after convex smoothing).}
Conditioned on $c$ and weights, by Assumption~\ref{assump:noise_v2}(ii)--(iii),
\begin{align}
\mathbb{E}\big[\|\mathbf n^{(m)}_{c,k}\|_2^2\mid c,\{w\}\big]
&=
\mathbb{E}\Big[\Big\|\sum_t w^{(m)}_{c,k}(t)\mathbf e_{c,t}\Big\|_2^2\mid c,\{w\}\Big]
=
B_c^2\sum_{t=1}^K (w^{(m)}_{c,k}(t))^2.
\label{eq:noise_v2}
\end{align}
Averaging over $(c,k)$ yields the exact variance contribution.

For none cross-hop edges, $w^{(\mathrm{none})}_{c,k}(k)=1$ and others $0$, thus
\begin{equation}
e_{\mathrm{none}}=\mathbb{E}_c B_c^2.
\label{eq:ewo_v2}
\end{equation}

For $m\in\{\mathrm{ours},\mathrm{full}\}$ (no self-mixing), $w^{(m)}_{c,k}(k)=\alpha$ and
\[
\sum_{t=1}^K (w^{(m)}_{c,k}(t))^2=\alpha^2+(1-\alpha)^2\sum_{t\neq k}\big(\widehat{\mathbf P}^{(m)}_c[k,t]\big)^2.
\]
In particular,
\begin{equation}
\sum_{t\neq k}\big(\widehat{\mathbf P}^{(\mathrm{full})}_c[k,t]\big)^2=\frac{1}{K-1},
\qquad
\sum_{t\neq k}\big(\widehat{\mathbf P}^{(\mathrm{ours})}_c[k,t]\big)^2=S_{c,k}.
\label{eq:s2_v2}
\end{equation}

\paragraph{Step 3: Leakage term (exact under one-factor mismatch).}
Since $\boldsymbol{\mu}_{c,k}-\boldsymbol{\mu}_{c,k}=\mathbf 0$, and for $m\in\{\mathrm{ours},\mathrm{full}\}$
we have $\widehat{\mathbf P}^{(m)}_c[k,k]=0$, \eqref{eq:bias_var_v2} gives
\[
\mathbf b^{(m)}_{c,k}
=
(1-\alpha)\sum_{t\neq k}\widehat{\mathbf P}^{(m)}_c[k,t]\;(\boldsymbol{\mu}_{c,t}-\boldsymbol{\mu}_{c,k}).
\]
By Assumption~\ref{assump:mismatch_align_v2},
\[
\mathbf b^{(m)}_{c,k}
=
(1-\alpha)\Big(\sum_{t\neq k}\widehat{\mathbf P}^{(m)}_c[k,t]\delta_{c,k}(t)\Big)\mathbf u_{c,k},
\]
hence
\begin{equation}
\big\|\mathbf b^{(m)}_{c,k}\big\|_2^2
=
(1-\alpha)^2\Big(\sum_{t\neq k}\widehat{\mathbf P}^{(m)}_c[k,t]\delta_{c,k}(t)\Big)^2.
\label{eq:bias_exact_v2}
\end{equation}
For none, $\mathbf b^{(\mathrm{none})}_{c,k}\equiv 0$.

\paragraph{Step 4: Exact MSE formula (all three methods).}
Combining \eqref{eq:cross_zero_v2}, \eqref{eq:noise_v2} and \eqref{eq:bias_exact_v2}, we obtain:
\begin{align}
e_{\mathrm{ours}}
&=
\mathbb E_{c,k}\!\left[B_c^2\Big(\alpha^2+(1-\alpha)^2 S_{c,k}\Big)\right]
+
(1-\alpha)^2\,\mathbb E_{c,k}\!\left[(\bar\delta^{\mathrm{ours}}_{c,k})^2\right],
\label{eq:eours_v2}
\\
e_{\mathrm{full}}
&=
\mathbb E_{c,k}\!\left[B_c^2\Big(\alpha^2+\frac{(1-\alpha)^2}{K-1}\Big)\right]
+
(1-\alpha)^2\,\mathbb E_{c,k}\!\left[(\bar\delta^{\mathrm{full}}_{c,k})^2\right],
\label{eq:efull_v2}
\\
e_{\mathrm{none}}
&=
\mathbb E_c B_c^2.
\label{eq:ewo2_v2}
\end{align}

\paragraph{Step 5: Similarity-biased mixing reduces leakage vs uniform (pointwise).}
Fix $(c,k)$ and sort indices $t\neq k$ so that
$s_c(k,t_1)\ge \cdots \ge s_c(k,t_{K-1})$.
Let $p_i:=\widehat{\mathbf P}^{(\mathrm{ours})}_c[k,t_i]$ and $\delta_i:=\delta_{c,k}(t_i)$, and
$q_i:=\frac{1}{K-1}$.
Using the discrete Abel transform,
\[
\sum_{i=1}^{K-1} p_i\,\delta_i
=
\delta_{K-1}
-
\sum_{j=1}^{K-2}(\delta_{j+1}-\delta_j)\Big(\sum_{i=1}^j p_i\Big),
\qquad
\sum_{i=1}^{K-1} q_i\,\delta_i
=
\delta_{K-1}
-
\sum_{j=1}^{K-2}(\delta_{j+1}-\delta_j)\Big(\sum_{i=1}^j q_i\Big).
\]
Therefore, the gap can be written exactly as
\begin{equation}
\bar\delta^{\mathrm{full}}_{c,k}-\bar\delta^{\mathrm{ours}}_{c,k}
=
\sum_{j=1}^{K-2}(\delta_{j+1}-\delta_j)
\left(\sum_{i=1}^j p_i-\frac{j}{K-1}\right)
=:A_{c,k}.
\label{eq:delta_gap_equals_A_v2}
\end{equation}
Under Assumption~\ref{assump:mismatch_align_v2} and $\beta>0$, we have $\delta_{j+1}-\delta_j\ge 0$,
and $p_1\ge\cdots\ge p_{K-1}$ implies $\sum_{i=1}^j p_i\ge \frac{j}{K-1}$ for all $j$,
hence
\[
A_{c,k}\ge 0,
\qquad\text{and thus}\qquad
\bar\delta^{\mathrm{ours}}_{c,k}\le \bar\delta^{\mathrm{full}}_{c,k}.
\]

\paragraph{Step 6: Compare total MSEs.}

(i) Ours beats no-edge.
Subtract \eqref{eq:eours_v2} from \eqref{eq:ewo2_v2}:
\[
e_{\mathrm{none}}-e_{\mathrm{ours}}
=
\mathbb E_{c,k}\!\left[B_c^2\Big(1-\alpha^2-(1-\alpha)^2 S_{c,k}\Big)\right]
-
(1-\alpha)^2\,\mathbb E_{c,k}\!\left[(\bar\delta^{\mathrm{ours}}_{c,k})^2\right].
\]
By Assumption~\ref{assump:tradeoff_window_v3}\eqref{eq:tradeoff_wo_v3},
$e_{\mathrm{none}}-e_{\mathrm{ours}}\ge \varepsilon_{\mathrm{wo}}>0$, hence $e_{\mathrm{ours}}<e_{\mathrm{none}}$.

(ii) Ours beats fully-connected.
Subtract \eqref{eq:eours_v2} from \eqref{eq:efull_v2}:
\[
e_{\mathrm{full}}-e_{\mathrm{ours}}
=
(1-\alpha)^2\Bigg(
\mathbb E_{c,k}\!\left[(\bar\delta^{\mathrm{full}}_{c,k})^2-(\bar\delta^{\mathrm{ours}}_{c,k})^2\right]
-
\mathbb E_{c,k}\!\left[B_c^2\Big(S_{c,k}-\frac{1}{K-1}\Big)\right]
\Bigg).
\]
Using \eqref{eq:delta_gap_equals_A_v2}, write
$\bar\delta^{\mathrm{full}}_{c,k}=\bar\delta^{\mathrm{ours}}_{c,k}+A_{c,k}$ with $A_{c,k}\ge 0$.
Then
\[
(\bar\delta^{\mathrm{full}}_{c,k})^2-(\bar\delta^{\mathrm{ours}}_{c,k})^2
=
A_{c,k}\big(2\bar\delta^{\mathrm{ours}}_{c,k}+A_{c,k}\big)
\;\ge\;
A_{c,k}^2,
\]
since $\bar\delta^{\mathrm{ours}}_{c,k}\ge 0$. Therefore,
\[
e_{\mathrm{full}}-e_{\mathrm{ours}}
\ge
(1-\alpha)^2\Bigg(
\mathbb E_{c,k}\!\left[A_{c,k}^2\right]
-
\mathbb E_{c,k}\!\left[B_c^2\Big(S_{c,k}-\frac{1}{K-1}\Big)\right]
\Bigg).
\]
By Assumption~\ref{assump:tradeoff_window_v3}\eqref{eq:tradeoff_full_v3},
the bracket is at least $\varepsilon_{\mathrm{full}}>0$, hence $e_{\mathrm{full}}-e_{\mathrm{ours}}
\ge (1-\alpha)^2\varepsilon_{\mathrm{full}}>0$ and thus $e_{\mathrm{ours}}<e_{\mathrm{full}}$.

Combining (i) and (ii) completes the proof.
\end{proof}

\section{Proof of Theorem~\ref{thm:dynamic_expert} (Excess-risk Upper Bound Analysis)}  
\label{app:proof_dynamic_expert}

\noindent\textbf{Setup.}\;
Let $\mathcal{G}_{1:N}$ be the training pool consisting of all constructed GoG subgraphs from
$\{\mathcal{D}_1,\ldots,\mathcal{D}_N\}$ and let $n_N:=|\mathcal{G}_{1:N}|$.

For each candidate expert number $j\in\{1,\ldots,\psi_{\max}\}$, let $\mathcal{F}_j$ denote the hypothesis class
induced by using $j$ curvature experts (including router and experts), and define the empirical risk
$\widehat{R}(f):=\frac{1}{n_N}\sum_{g\in\mathcal{G}_{1:N}}\mathcal{L}(f;g)$ and population risk
$R(f):=\mathbb{E}[\mathcal{L}(f;G)]$ where $G$ follows the (population) distribution of GoG subgraphs.

\medskip
\noindent\textbf{Compressed assumptions (appendix-only).}

\begin{assumption}[Standard uniform generalization bound for $\mathcal{F}_j$]
\label{assump:std_gen}
Fix $\delta\in(0,1)$. With probability at least $1-\delta$ over $\mathcal{G}_{1:N}$, uniformly for all
$j\in\{1,\ldots,\psi_{\max}\}$, the empirical solution $\hat f_j$ (ERM or approximate ERM) satisfies
\[
R(\hat f_j)-R(f_j^\star)\ \le\ B\sqrt{\frac{j}{n_N}},
\]
where $f_j^\star\in\arg\min_{f\in\mathcal{F}_j}R(f)$ and $B>0$ is a constant (possibly absorbing
$\log(\psi_{\max}/\delta)$ and other standard factors).
\end{assumption}

\begin{assumption}[Curvature discretization/approximation controlled by $\mathcal{S}_N$]
\label{assump:curv_approx}
For each $j\in\{1,\ldots,\psi_{\max}\}$, there exists a candidate curvature set
$\mathcal{K}_j=\{\kappa_m\}_{m=1}^j\subset\mathcal{I}$ such that:
(i) (Covering w.r.t.\ population optimum) for $\kappa^\star\in\arg\min_{\kappa\in\mathcal{I}}R(f_\kappa)$, there exists
$\kappa_m\in\mathcal{K}_j$ with $|\kappa_m-\kappa^\star|\le \varepsilon_N(j)$;
(ii) (Risk Lipschitz in curvature) there exists $L_R>0$ such that
$R(f_{\kappa})-R(f_{\kappa'})\le L_R|\kappa-\kappa'|$ for all $\kappa,\kappa'\in\mathcal{I}$;
(iii) (Score-to-covering) $\varepsilon_N(j)\le c\,\frac{\mathcal{S}_N}{j}$ for some $c>0$;
(iv) (Realizability) $f_{\kappa_m}\in\mathcal{F}_j$ for all $\kappa_m\in\mathcal{K}_j$.
\end{assumption}

\begin{assumption}[Dynamic selection rule]
\label{assump:dynamic_select}
Our dynamic strategy selects $\psi_D$ as a minimizer of the bound $\mathcal{R}(j)$ over the candidate set:
\[
\psi_D\in\arg\min_{1\le j\le\psi_{\max}}\mathcal{R}(j).
\]
\end{assumption}

\begin{assumption}[Unique minimizer / deterministic tie-breaking]
\label{assump:tiebreak}
The minimizer of $\mathcal{R}(j)$ over $j\in\{1,\ldots,\psi_{\max}\}$ is unique; alternatively, a deterministic
tie-breaking rule is fixed (e.g., choose the smallest minimizing index), making the selected minimizer unique.
\end{assumption}

\medskip
\noindent\textbf{Oracle comparators and excess risk.}\;
Define $f_j^\star\in\arg\min_{f\in\mathcal{F}_j}R(f)$ and let
$f^\star=f_{\kappa^\star}$ where $\kappa^\star\in\arg\min_{\kappa\in\mathcal{I}} R(f_\kappa)$.
Define the excess risk
\[
\mathcal{E}(j):=R(\hat f_j)-R(f^\star).
\]

\begin{proof}[Proof of Theorem~\ref{thm:dynamic_expert}]
\noindent\textbf{Step 1: Excess-risk decomposition.}\;
\[
\mathcal{E}(j)=\big(R(\hat f_j)-R(f_j^\star)\big)+\big(R(f_j^\star)-R(f^\star)\big).
\]

\smallskip
\noindent\textbf{Step 2: Estimation (generalization) term.}\;
By Assumption~\ref{assump:std_gen}, with probability at least $1-\delta$ (uniformly for all $j$),
\[
R(\hat f_j)-R(f_j^\star)\le B\sqrt{\frac{j}{n_N}}.
\]

\smallskip
\noindent\textbf{Step 3: Approximation term (curvature discretization).}\;
By Assumption~\ref{assump:curv_approx}(i), choose $\kappa_m\in\mathcal{K}_j$ such that
$|\kappa_m-\kappa^\star|\le \varepsilon_N(j)$. Then by Assumption~\ref{assump:curv_approx}(ii),
\[
R(f_{\kappa_m})-R(f_{\kappa^\star})
\le L_R\,|\kappa_m-\kappa^\star|
\le L_R\,\varepsilon_N(j).
\]
By Assumption~\ref{assump:curv_approx}(iv), $f_{\kappa_m}\in\mathcal{F}_j$, and since
$f_j^\star\in\arg\min_{f\in\mathcal{F}_j}R(f)$, we have $R(f_j^\star)\le R(f_{\kappa_m})$. Hence,
\[
R(f_j^\star)-R(f^\star)
\le R(f_{\kappa_m})-R(f_{\kappa^\star})
\le L_R\,\varepsilon_N(j).
\]
Using Assumption~\ref{assump:curv_approx}(iii), $\varepsilon_N(j)\le c\,\frac{\mathcal{S}_N}{j}$, we obtain
\[
R(f_j^\star)-R(f^\star)
\le L_R c\,\frac{\mathcal{S}_N}{j}
= \frac{A\mathcal{S}_N}{j},
\quad\text{where }A:=L_R c.
\]

\smallskip
\noindent\textbf{Step 4: Combine.}\;
Combining Steps 1--3 yields (with probability at least $1-\delta$)
\[
\mathcal{E}(j)\le \frac{A\mathcal{S}_N}{j}+B\sqrt{\frac{j}{n_N}}
=: \mathcal{R}(j),
\]
which matches Eq.~\eqref{eq:UB_def} in the theorem.

\smallskip
\noindent\textbf{Step 5: Dynamic vs fixed.}\;
Let $\psi_F$ be any fixed candidate expert number, and let $\psi_D$ be the expert number selected by our dynamic strategy.
By Assumption~\ref{assump:dynamic_select},
\[
\psi_D\in\arg\min_{1\le j\le\psi_{\max}}\mathcal{R}(j).
\]
Therefore, for any fixed $\psi_F\in\{1,\ldots,\psi_{\max}\}$,
\[
\mathcal{R}(\psi_D)\le \mathcal{R}(\psi_F),
\]
which is exactly Eq.~\eqref{eq:UD_UF}.

\smallskip
\noindent\textbf{Equality condition.}\;
Under Assumption~\ref{assump:tiebreak}, the minimizer is unique, hence
$\mathcal{R}(\psi_D)=\mathcal{R}(\psi_F)$ holds if and only if $\psi_F=\psi_D$.
This completes the proof.
\end{proof}

{
\section{Proof of Theorem~\ref{thm:main_tight} (Domain Generalization Error Bound Analysis)}
\label{app:tight}

We prove that \methodname{} admits a strictly tighter best achievable target-domain surrogate upper bound than MDGFM.

For any encoder $\phi$, define the target-domain surrogate upper bound as
\begin{equation}
\label{eq:def_B_app}
\mathcal{B}(\phi):=
\underbrace{\widehat{\mathcal{E}}_{S}(\phi)+\mathfrak{C}_n(\phi)}_{\text{fitting and complexity term}}
+
\underbrace{\Delta_{\mathrm{scale}}(\phi)}_{\text{scale mismatch term}}
+
\underbrace{\Delta_{\mathrm{dom}}(\phi)}_{\text{cross-domain discrepancy term}} .
\end{equation}
Accordingly, we define
\begin{equation}
\label{eq:def_eps_app}
\epsilon_{\mathrm{MDGFM}}
:=
\inf_{\phi\in\Phi_{\mathrm M}}\mathcal{B}(\phi),
\qquad
\epsilon_{\mathrm{R\text{-}GFM}}
:=
\inf_{\psi\in\Phi_{\mathrm R}}\mathcal{B}(\psi).
\end{equation}

\begin{assumption}[Surrogate control of target-domain error]
\label{assump:target_surrogate}

Motivated by standard domain adaptation and domain generalization theory~\cite{muandet2013domain,zhang2019bridge}, we assume that for any encoder $\phi$, the target-domain error can be upper bounded by a source-side fitting term, a statistical complexity term, a graph-specific scale-mismatch term, and a cross-domain discrepancy term:
\begin{equation}
\label{eq:target_surrogate_bound}
\mathcal{E}_{T}(\phi)
\le
\widehat{\mathcal{E}}_{S}(\phi)
+
\mathfrak{C}_n(\phi)
+
\Delta_{\mathrm{scale}}(\phi)
+
\Delta_{\mathrm{dom}}(\phi)
=
\mathcal{B}(\phi).
\end{equation}
\end{assumption}

\begin{assumption}[Mild target-domain shift in a reference representation space]
\label{assump:mild_shift}

Motivated by convex-hull based target-domain approximation and discrepancy-based domain adaptation theory~\cite{ganin2016domain,zhang2019bridge}, we assume that the target domain is not arbitrarily far from the family of source domains in a reference representation space.
Specifically, there exist a reference encoder $\bar{\phi}$, a weight vector
\begin{equation}
\pi^\star=(\pi_1^\star,\ldots,\pi_{N_{\mathrm{src}}}^\star)
\in
\Delta^{N_{\mathrm{src}}},
\end{equation}
and a constant $\delta\ge 0$ such that
\begin{equation}
\label{eq:mild_shift_ref_space}
d\!\left(
P_{T,\bar{\phi}},
\sum_{i=1}^{N_{\mathrm{src}}}
\pi_i^\star P_{S_i,\bar{\phi}}
\right)
\le
\delta,
\end{equation}
where $d(\cdot,\cdot)$ is the discrepancy underlying $\Delta_{\mathrm{dom}}(\cdot)$, and $P_{T,\bar{\phi}}$ and $P_{S_i,\bar{\phi}}$ denote the target-domain and source-domain representation distributions induced by $\bar{\phi}$, respectively.

We consider the mild-shift regime where $\delta$ is small. In this regime, the target domain lies in a convex-hull neighborhood of the source domains, so the cross-domain discrepancy term remains controllable.
Moreover, assume that there exist a neighborhood $\mathcal{N}(\bar{\phi})$ and a constant $L>0$ such that for any encoder $\varphi\in\mathcal{N}(\bar{\phi})$,
\begin{equation}
\label{eq:local_stability_dom}
\left|
\Delta_{\mathrm{dom}}(\varphi)
-
\Delta_{\mathrm{dom}}(\bar{\phi})
\right|
\le
L\,\mathrm{dist}_{\Phi}(\varphi,\bar{\phi}).
\end{equation}
Thus, for encoders in $\mathcal{N}(\bar{\phi})$, the additional cross-domain discrepancy residual can be bounded by some $\varepsilon_d\ge 0$:
\begin{equation}
\label{eq:dom_residual}
\Delta_{\mathrm{dom}}(\psi)
\le
\Delta_{\mathrm{dom}}(\phi)+\varepsilon_d .
\end{equation}
\end{assumption}

\begin{proof}[Proof of Theorem~\ref{thm:main_tight}]

\noindent\textbf{Step 1: Nearly optimal MDGFM encoder.}\;
Fix an arbitrary $\xi>0$. By the definition of $\epsilon_{\mathrm{MDGFM}}$, there exists an encoder $\phi_\xi\in\Phi_{\mathrm M}$ such that
\begin{equation}
\label{eq:xi_optimal_M}
\mathcal{B}(\phi_\xi)
\le
\epsilon_{\mathrm{MDGFM}}+\xi .
\end{equation}

\noindent\textbf{Step 2: Source-side gain of \methodname{}.}\;
Next, by the theoretical results established in Theorems~3.2--3.4, under the same source-domain training protocol, there exists an encoder $\psi_\xi\in\Phi_{\mathrm R}$ and a constant $\varepsilon_g>0$ such that
\begin{align}
\label{eq:source_gain_bridge}
&
\widehat{\mathcal{E}}_{S}(\psi_\xi)
+
\mathfrak{C}_n(\psi_\xi)
+
\Delta_{\mathrm{scale}}(\psi_\xi)
\nonumber\\
&\qquad\le
\widehat{\mathcal{E}}_{S}(\phi_\xi)
+
\mathfrak{C}_n(\phi_\xi)
+
\Delta_{\mathrm{scale}}(\phi_\xi)
-
\varepsilon_g .
\end{align}
This gain reflects the benefit of multi-scale GoG modeling and dynamic Riemannian routing in reducing scale mismatch and improving source-side representation quality.

\noindent\textbf{Step 3: Cross-domain discrepancy residual.}\;
By Assumption~\ref{assump:mild_shift}, both $\phi_\xi$ and $\psi_\xi$ lie in the mild-shift neighborhood $\mathcal{N}(\bar{\phi})$, and the cross-domain discrepancy residual is bounded as
\begin{equation}
\label{eq:domain_residual_apply}
\Delta_{\mathrm{dom}}(\psi_\xi)
\le
\Delta_{\mathrm{dom}}(\phi_\xi)
+
\varepsilon_d .
\end{equation}

\noindent\textbf{Step 4: Combine the two bounds.}\;
Combining Eq.~\eqref{eq:source_gain_bridge} and Eq.~\eqref{eq:domain_residual_apply}, we have
\begin{align}
\mathcal{B}(\psi_\xi)
&=
\widehat{\mathcal{E}}_{S}(\psi_\xi)
+
\mathfrak{C}_n(\psi_\xi)
+
\Delta_{\mathrm{scale}}(\psi_\xi)
+
\Delta_{\mathrm{dom}}(\psi_\xi)
\nonumber\\
&\le
\widehat{\mathcal{E}}_{S}(\phi_\xi)
+
\mathfrak{C}_n(\phi_\xi)
+
\Delta_{\mathrm{scale}}(\phi_\xi)
-
\varepsilon_g
+
\Delta_{\mathrm{dom}}(\phi_\xi)
+
\varepsilon_d
\nonumber\\
&=
\mathcal{B}(\phi_\xi)
-
(\varepsilon_g-\varepsilon_d).
\end{align}
Let
\begin{equation}
\eta:=\varepsilon_g-\varepsilon_d>0.
\end{equation}
Then
\begin{equation}
\label{eq:B_psi_gain}
\mathcal{B}(\psi_\xi)
\le
\mathcal{B}(\phi_\xi)-\eta .
\end{equation}

\noindent\textbf{Step 5: Taking the infimum over \methodname{}.}\;
Together with Eq.~\eqref{eq:xi_optimal_M}, this gives
\begin{equation}
\mathcal{B}(\psi_\xi)
\le
\epsilon_{\mathrm{MDGFM}}+\xi-\eta .
\end{equation}
Since $\psi_\xi\in\Phi_{\mathrm R}$, we obtain
\begin{equation}
\epsilon_{\mathrm{R\text{-}GFM}}
=
\inf_{\psi\in\Phi_{\mathrm R}}\mathcal{B}(\psi)
\le
\mathcal{B}(\psi_\xi)
\le
\epsilon_{\mathrm{MDGFM}}+\xi-\eta .
\end{equation}
Because this holds for every $\xi>0$, letting $\xi\downarrow 0$ yields
\begin{equation}
\epsilon_{\mathrm{R\text{-}GFM}}
\le
\epsilon_{\mathrm{MDGFM}}-\eta
<
\epsilon_{\mathrm{MDGFM}}.
\end{equation}

Therefore, \methodname{} admits a strictly tighter best achievable target-domain surrogate upper bound than MDGFM.
This completes the proof.
\end{proof}
}

\end{document}